\newtheorem{definition}{Definition}[section]
\newtheorem{assumption}{Assumption}
\newtheorem{proposition}{Proposition}[section]
\newtheorem{theorem}{Theorem}
\newtheorem{remark}[theorem]{Remark}
\newtheorem{lemma}{Lemma}[section]
\newcommand{\argmax}{\mathop{\rm argmax}}
\newcommand{\bE}{\textsf{\text{E}}}
\newcommand{\cY}{\mathcal{Y}}
\newcommand{\br}{\mathbb{R}}
\newcommand{\bu}{\boldsymbol{u}}
\newcommand{\cM}{\mathcal{M}}
\newcommand{\cO}{\mathcal{O}}
\newcommand{\Proj}{\mathrm{Proj}}
\newcommand{\bX}{\boldsymbol{X}}
\newcommand{\bx}{\boldsymbol{x}}
\newcommand{\cG}{\mathcal{G}}
\newcommand{\bka}{\bar{\kappa}}
\newcommand{\be}{\begin{equation}}
\newcommand{\ee}{\end{equation}}
\newcommand{\ba}{\begin{array}}
\newcommand{\ea}{\end{array}}
\newcommand{\bad}{\begin{aligned}}
\newcommand{\ead}{\end{aligned}}
\title{Zeroth-Order Algorithms for Nonconvex-Strongly-Concave Minimax Problems with Improved Complexities}
\author{Zhongruo Wang\thanks{Department of Mathematics, University of California, Davis.  \texttt{zrnwang@ucdavis.edu}} 
\and Krishnakumar Balasubramanian\thanks{Department of Statistics, University of California, Davis. \texttt{kbala@ucdavis.edu}}
\and Shiqian Ma\thanks{Department of Mathematics, University of California, Davis. \texttt{sqma@ucdavis.edu.}}
\and Meisam Razaviyayn\thanks{Department of Industrial and Systems Engineering, University of Southern California. \texttt{razaviya@usc.edu.}}
}
\begin{document}

\maketitle

\begin{abstract}
In this paper, we study zeroth-order algorithms for minimax optimization problems that are nonconvex in one variable and strongly-concave in the other variable. Such minimax optimization problems have attracted significant attention lately due to their applications in modern machine learning tasks. We first consider a deterministic version of the problem. We design and analyze the Zeroth-Order Gradient Descent Ascent (\texttt{ZO-GDA}) algorithm, and provide improved results compared to existing works, in terms of oracle complexity. We also propose the Zeroth-Order Gradient Descent Multi-Step Ascent (\texttt{ZO-GDMSA}) algorithm that significantly improves the oracle complexity of \texttt{ZO-GDA}. We then consider stochastic versions of \texttt{ZO-GDA} and \texttt{ZO-GDMSA}, to handle stochastic nonconvex minimax problems. For this case, we provide oracle complexity results under two assumptions on the stochastic gradient: (i) the uniformly bounded variance assumption, which is common in traditional stochastic optimization, and (ii) the Strong Growth Condition (SGC), which has been known to be satisfied by modern over-parametrized machine learning models. We establish that under the SGC assumption, the complexities of the stochastic algorithms match that of deterministic algorithms. Numerical experiments are presented to support our theoretical results. 
\end{abstract}

\section{Introduction}

Algorithms for solving optimization problems with only access to (noisy) evaluations of the objective function are called zeroth-order algorithms. These algorithms have been studied for decades in the optimization literature; see, for example, \cite{conn2009introduction,rios2013derivative,audet2017derivative} for a detailed overview of the existing approaches. Recently, the study of zeroth-order optimization algorithms has gained significant attention also in the machine learning literature, due to several motivating applications, for example, in designing black-box attacks to deep neural networks~\cite{chen2017zoo}, hyperparameter tuning~\cite{snoek2012practical}, reinforcement learning~\cite{moriarty1999evolutionary,salimans2017evolution} and bandit convex optimization~\cite{bubeck2017kernel}. However, a majority of the zeroth-order optimization algorithms in the literature has been developed for minimization problems. 

In this work, we study zeroth-order optimization algorithms for solving nonconvex minimax problems (aka saddle-point problems). Specifically, we consider both the deterministic setting:
\begin{align}\label{min_max_problem_deterministic}
    \min_{x\in \mathbb{R}^{d_1}} \max_{y\in \mathcal{Y}} f(x,y),
\end{align}
and the stochastic setting:
\begin{align}\label{min_max_problem_online}
    \min_{x\in \mathbb{R}^{d_1}} \max_{y\in \mathcal{Y}} f (x,y) = \bE_{\xi \sim \mathcal{P}}F(x,y,\xi).
\end{align}
Here, $F(x,y,\xi)$ and hence $f(x,y)$ are assumed to be sufficiently smooth functions, $\mathcal{Y} \subset \mathbb{R}^{d_2}$ is a closed and convex constraint set\footnote{One of our algorithms works also in the unconstrained setting. See Remark~\ref{remark:cversusuc} for more details.}, 
and $\mathcal{P}$ is a distribution characterizing the stochasticity in the problem. We allow for the function $f(\cdot, y)$ to be nonconvex for all $y \in \mathbb{R}^{d_2}$ but require $f(x,\cdot)$ to be strongly-concave for all $x \in \mathbb{R}^{d_1}$. 
One of our main motivations for studying zeroth-order algorithms for nonconvex minimax problems is their application in designing black-box attacks to deep neural networks. By now, it is well established that care must be taken when designing and training deep neural networks as it is possible to design adversarial examples that would make the deep network to misclassify, easily. Since the intriguing works of~\cite{szegedy2013intriguing,Liu2016DelvingIT}, the problem of designing such adversarial examples that transfer across multiple deep neural networks models has also been studied extensively. As the model architecture is unknown to the adversary, the problem could naturally be formulated to solve a minimax optimization problem under the availability of only (noisy) objective function evaluation. We refer the reader to~\cite{liu2019min} for details regarding such formulations. Apart from the above applications, we also note that zeroth-order minimax optimization problems also arise in multi-agent reinforcement learning with bandit feedback~\cite{wei2017online, zhang2019multi}, robotics~\cite{wang2017max,bogunovic2018adversarially} and distributionally robust optimization~\cite{namkoong2016stochastic}.

Recently, there has been an ever-growing interest in analyzing first-order algorithms for the case of nonconvex-concave and nonconvex-nonconcave minimax problems, motivated by its applications to training generative adversarial networks~\cite{goodfellow2014generative}, AUC maximization~\cite{ying2016stochastic}, designing fair classifiers~\cite{agarwal2018reductions}, robust learning systems~\cite{madry2017towards} fair machine learning \cite{Zhang-mitigating-2018,Xu-Fairgan-2018,Mesiam-Renyi-2019}, and reinforcement learning~\cite{pfau2016connecting, dai2017sbeed, neyman2003stochastic, filar2012competitive}. Specifically,~\cite{lu2019hybrid, rafique2018non,nouiehed2019solving, sanjabi2018convergence, lin2019gradient,thekumparampil2019efficient} proposed and analyzed variants of gradient descent ascent for nonconvex-concave objectives. Very recently, under a stronger mean-squared Lipschitz gradient assumption~\cite{luo2020stochastic} obtained improved complexities for stochastic nonconvex-concave objectives. Furthermore,~\cite{daskalakis2017training, daskalakis2018limit, hsieh2018finding, mertikopoulos2018cycles, piliouras2018learning, gidel2018variational, oliehoek2018beyond, jin2019minmax, flokas2019poincar} studied general nonconvex-nonconcave objectives. 

Compared to first-order algorithms, zeroth-order algorithms for minimax optimization problems are underdeveloped. Motivated by the need for robustness in optimization, \cite{Menickelly2018DerivativefreeRO} proposed derivative-free algorithms for saddle-point optimization. However, they do not provide non-asymptotic oracle complexity analysis. Bayesian optimization algorithms and evolutionary algorithms were proposed in~\cite{bogunovic2018adversarially, picheny2019bayesian} and~\cite{bertsimas2010robust, al2018application} respectively for minimax optimization, targeting robust optimization and learning applications. The above works do not provide any oracle complexity analysis. Recently,~\cite{roy2019online} studied zeroth-order Frank-Wolfe algorithms for strongly-convex and strongly-concave constrained saddle-point optimization problems and provided non-asymptotic oracle complexity analysis. Furthermore,~\cite{liu2019min} studied zeroth-order algorithms for nonconvex-concave minimax problems, similar to our setting. More recently, \cite{Anagnostidis-2021} proposed a stochastic direct search method for \eqref{min_max_problem_online} under the assumption of the Polyak-\L{}ojasiewicz (PL) condition. \cite{Xu-2021} and \cite{Huang-2020} also studied zeorth-order methods for \eqref{min_max_problem_online}, where they required mean-squared smoothnesss assumption, which is stronger than our assumptions.

{\bf Our Contributions.} 
In this work, we consider both deterministic and stochastic minimax problems in the form of \eqref{min_max_problem_deterministic} and \eqref{min_max_problem_online}, respectively. A detailed comparison of our algorithms and existing methods is given in Table \ref{tab:comparison}. Our contributions lie in several folds. 
\begin{enumerate}[(i)]
\item For deterministic minimax problem \eqref{min_max_problem_deterministic}, we design a zeroth-order gradient descent ascent (\texttt{ZO-GDA}) algorithm, whose oracle complexity improves the currently best known one in \cite{liu2019min} under the same assumptions. Notably, for this algorithm, the set $\mathcal{Y}$ could be constrained or unconstrained (i.e., the entire Euclidean space $\mathbb{R}^{d_2}$).
\item For deterministic minimax problem \eqref{min_max_problem_deterministic}, we propose a novel zeroth-order gradient descent multi-step ascent (\texttt{ZO-GDMSA}) algorithm, which is motivated by \cite{nouiehed2019solving}. This algorithm performs multiple steps of gradient ascent followed by one single step of gradient descent in each iteration. Its oracle complexity is significantly better than that of \texttt{ZO-GDA} in terms of the condition number dependency. To the best of our knowledge, this is the best complexity result for zeroth-order algorithms for solving deterministic minimax problems so far under the assumptions in Section \ref{sec:assumptions}.
\item We desgin and analyze the stochastic counterparts of \texttt{ZO-GDA} and \texttt{ZO-GDMSA} and establish their oracle complexity under two settings: (i) uniformly bounded variance assumption on the stochastic gradient, which is standard in stochastic optimization, and (ii) the Strong Growth Condition (SGC)~\cite{vaswani2019fast}, which is satisfied by modern overparametrized machine learning models. Notably, under SGC, we show that the complexities of the stochastic algorithms are the same as their deterministic counterparts.  
\end{enumerate}

\begin{table*}[t!]
\centering
\resizebox{\textwidth}{!}{
\begin{tabular}{lllll} 
Algorithm & Order& Complexity & Objective function & Constraint $(x,y)$ \\\hline
GDmax (\cite{lin2019gradient})        &      1st                &                    $\cO(\kappa^2 \epsilon^{-2})$& NC-SC &      U,C   \\
SGDmax (\cite{lin2019gradient})        &      1st                &                    $\cO(\kappa^3 (\sigma_1^2+\sigma_2^2)\epsilon^{-4})$& NC-SC &      U,C   \\
Multi-step GDA(\cite{nouiehed2019solving})  &                      1st & $\tilde{\mathcal{O}}(\log(\epsilon^{-1})\epsilon^{-2})$         & NC-PL        &   C,U   \\
Multi-step GDA (\cite{nouiehed2019solving})          &      1st                &                    $\tilde{\mathcal{O}}(\log(\epsilon^{-1})\epsilon^{-3.5})$& NC-C &      C,C  \\ \hline
  ZO-min-max(\cite{liu2019min})   & 0th                      & $\tilde{\mathcal{O}}((d_1+d_2)\epsilon^{-6})$ & NC-SC & C,C        \\ 
\texttt{ZO-GDA}           & 0th        & {$\cO(\kappa^5(d_1+d_2)\epsilon^{-2})$}      & NC-SC     &U,C or U    \\
 \texttt{ZO-GDMSA}      & 0th      & $\cO(\kappa(d_1 + \kappa d_2 \log(\epsilon^{-1}))\epsilon^{-2})$ & NC-SC    & U,C \\
  \texttt{ZO-SGDA} (Assmp.~\ref{assumption:stoch})   & 0th   & {$\cO(\kappa^5(\sigma_1^2d_1+\sigma_2^2d_2)\epsilon^{-4})$} & NC-SC   &U,C or U \\
 \texttt{ZO-SGDMSA} (Assmp.~\ref{assumption:stoch})  & 0th   & $\cO(\kappa(d_1\sigma_1^2  +\kappa d_2\sigma_2^2\log(\epsilon^{-1}))\epsilon^{-4})$ & NC-SC &U,C        \\
   \texttt{ZO-SGDA} (Assmp.~\ref{SGC-assumption})   & 0th  & {$\cO(\kappa^5(\rho_1d_1+\rho_2d_2)\epsilon^{-2})$}  & NC-SC  &U,C or U \\
 \texttt{ZO-SGDMSA} (Assmp.~\ref{SGC-assumption})  & 0th & $\cO(\kappa(d_1\rho_1  +\kappa d_2\rho_2\log(\epsilon^{-1}))\epsilon^{-2})$ & NC-SC &U,C        \\\hline
\end{tabular}
}
\caption{Comparison of different algorithms. The column of ``Complexity'' gives the complexity of calls to zeroth-order oracle or first-order oracle. Here, we use $\tilde{\mathcal{O}}$ to hide the $\kappa$ dependency ({where $\kappa$ refers to the condition number as defined in Assumption~\ref{assumption:A1-A2-A4}}), as it was not explictly tracked and stated in \cite{liu2019min,nouiehed2019solving}. In the column of ``Objective function'' column, ``NC-SC'' indicates that the objective function is nonconvex with respect to $x$ and strongly concave with respect to $y$. ``C'' indicates that the function is concave with respect to $y$. PL denotes the Polyak-\L{}ojasiewicz condition.  In the column of ``Constraint'', ``C'' means ``constrained'' and ``U'' means ``unconstrained''.}
\label{tab:comparison}
\end{table*}

The rest of this paper is organized as follows. In Section \ref{sec:assumptions} we provide some preliminaries and introduce our zeroth-order gradient estimator. In Section \ref{sec:deterministic} we present our \texttt{ZO-GDA} and \texttt{ZO-GDMSA} for solving the deterministic minimax problem \eqref{min_max_problem_deterministic}, and analyze their oracle complexities. In Section \ref{sec:stochasticsetting} we present the stochastic algorithms \texttt{ZO-SGDA} and \texttt{ZO-SGDMSA} for solving the stochastic minimax problem \eqref{min_max_problem_online}, and analyze their oracle complexities. In Section \ref{sec:numerical} we provide some numerical results to our stochastic algorithms \texttt{ZO-SGDA} and \texttt{ZO-SGDMSA} for solving a distributionally robust optimization problem. We draw some conclusions in Section \ref{discussion}. Proofs of all theorems are provided in the appendix. 

\section{Preliminaries} \label{sec:assumptions}

Assumption \ref{assumption:A1-A2-A4} is made throughout the paper.

\begin{assumption}\label{assumption:A1-A2-A4}
The objective function $f(x,y)$ and the constraint set $\cY$ have the following properties:
\begin{itemize} 
\item $f(x,y)$ is continuously differentiable in $x$ and $y$, and $f(\cdot, y)$ {could be potentially} nonconvex for all $y \in \mathcal{Y}$ and $f(x,\cdot)$ is $\tau$-strongly concave for all $x \in \mathbb{R}^{d_1}$.
\item When viewed as a function in $\br^{d_1+d_2}$, $f(x,y)$ is $\ell$-gradient Lipschitz. That is, there exists constant $\ell>0$ such that $\forall x_1,x_2\in\br^{d_1}, y_1,y_2\in\cY.$
        \be\label{assumption-Lips}
        \|\nabla f(x_1,y_1) - \nabla f(x_2,y_2)\|_2 \leq \ell \|(x_1,y_1)-(x_2,y_2)\|_2, 
        \ee
We use $\kappa := \ell/\tau$ to denote the problem condition number throughout this paper.
\item The function $g(x):=\max_{y\in\cY} f(x,y)$ is lower bounded. {Moreover, we assume that function $g$ is $L_g$-smooth, i.e., $\|\nabla g(x) - \nabla g(y)\| \leq L_g \|x-y\|_2$, for all $x,y\in\mathbb{R}^{d_1}$. As will be shown later in Lemma \ref{lemma:smoothed convexity}, this is indeed true with $L_g = (1+\kappa)\ell$.} 
\item The constraint set $\mathcal{Y} \subset \mathbb{R}^{d_2}$ is bounded and convex, with diameter $D >0$. The boundedness assumption can be relaxed (see Remark \ref{remark:cversusuc}). 
\end{itemize}
\end{assumption}




The following assumption, which is standard in the literature \cite{nesterov2017random, Ghadimi_2013, 2018arXiv180906474B}, will also be used in our paper. 
\begin{assumption}[Uniformly Bounded Variance]\label{assumption:stoch}
For any $x \in \mathbb{R}^{d_1}$ and $y \in \cY$, the stochastic zeroth-order oracle outputs an estimator $F\left(x,y,\xi\right)$ of $f\left(x,y\right)$ such that $\bE_\xi[F\left(x,y,\xi\right)]=f\left(x,y\right)$ and $
\bE_\xi[\nabla_{x} F\left(x,y,\xi\right)]=\nabla_{x} f\left(x,y\right)$, $ \bE_\xi[\nabla_{y} F\left(x,y,\xi\right)]=\nabla_{y} f\left(x,y\right)$,
$ \bE_{\xi}(\|\nabla_x F(x,y,\xi)  - \nabla_x f(x,y)\|_2^2) \leq \sigma_1^2$, and $ \bE_{\xi}(\| \nabla_y F(x,y,\xi) -\nabla_y f(x,y) \|_2^2) \leq \sigma_2^2$.
\end{assumption}

In addition to Assumptions \ref{assumption:A1-A2-A4} and \ref{assumption:stoch}, motivated by over-parametrized models arising in modern machine learning problems \cite{vaswani2019fast}, we also consider the following SGC assumption on the stochastic gradient.

\begin{assumption}[Strong Growth Condition~\cite{vaswani2019fast}]\label{SGC-assumption}
There exist $\rho_1,\rho_2 > 1$ such that the following is true for the stochastic gradients: 
\[\bE_\xi (\|\nabla_x F(x,y,\xi)\|_2^2) \leq \rho_1 \|\nabla_x f(x,y)\|_2^2, \mbox{ and } \bE_\xi (\|\nabla_y F(x,y,\xi)\|_2^2) \leq \rho_2 \|\nabla_y f(x,y)\|_2^2.\]
\end{assumption}

This condition is widely observed to be satisfied in modern over-parametrized models (e.g., deep neural networks) and has been used extensively for minimization problems recently \cite{ma2018power, bassily2018exponential, meng2019fast, vaswani2019painless, roy2020escaping}. 

\subsection{Zeroth-order gradient estimator} 

We now discuss the idea of zeroth-order gradient estimator based on Gaussian smoothing technique \cite{nesterov2017random}. For the deterministic case, we denote $\bu_1 \sim N(0,\textbf{1}_{d_1})$, $\bu_2 \sim N(0,\textbf{1}_{d_2})$, where $\textbf{1}_{d_1}$ and $\textbf{1}_{d_2}$ denote identity matrices with sizes $d_1\times d_1$ and $d_2\times d_2$, respectively. The notion of the Gaussian smoothed functions is defined as follows:
\begin{align}\label{def-f-mu}
f_{\mu_1}(x,y):&=\bE_{\bu_1}f(x+\mu_1 \bu_1,y), \\ \nonumber
f_{\mu_2}(x,y):&=\bE_{\bu_2}f(x,y+\mu_2 \bu_2),
\end{align}
and the zeroth-order gradient estimators \cite{nesterov2017random} are defined as
\begin{align}\label{def-G-mu-H-mu}
G_{\mu_1}(x,y,\bu_1) &= \frac{f(x+\mu_1\bu_1,y) - f(x,y)}{\mu_1}\bu_1, \\ \nonumber
 H_{\mu_2}(x,y,\bu_2) &= \frac{f(x,y+\mu_2\bu_2) - f(x,y)}{\mu_2}\bu_2,
\end{align}
where $\mu_1>0$ and $\mu_2>0$ are smoothing parameters.

{As noted in~\cite{balasubramanian2018zeroth}, the Gaussian smoothing technique proposed by~\cite{nesterov2017random} is based on the Stein's identity~\cite{stein1972bound}, for characterizing Gaussian random vectors. Specifically, Stein's identity states that a random vector $u \in \mathbb{R}^d$, is standard Gaussian \emph{if and only if}, $\bE\left[u~h(u)\right] = \bE\left[\nabla h(u)\right]$, for all absolutely continuous functions $h:\mathbb{R}^d\to\mathbb{R}$. Note that Stein's identity, naturally relates function queries to gradients and thus is naturally suited for zeroth-order optimization. If we let $h(u)$ to be the Gaussian smoothed functions (as in~\eqref{def-f-mu}), it is easy to see that the zeroth-order gradients (as in~\eqref{def-G-mu-H-mu}) follow by simply evaluating the Gaussian Stein's identity.}

It should be noted following the arguments in~\cite{nesterov2017random, 2018arXiv180906474B} that
$\bE_{\bu_1} G_{\mu_1}(x,y,\bu_1) = \nabla_x f_{\mu_1}(x,y)$, and $\bE_{\bu_2} G_{\mu_2}(x,y,\bu_2) = \nabla_y f_{\mu_2}(x,y)$. Hence, the zeroth-order gradient estimators in \eqref{def-G-mu-H-mu} provide unbiased estimates of the gradient of Gaussian smoothed functions $f_{\mu_1}(x,y,\bu_1):=f(x+\mu_1 \bu_1,y)$ and $f_{\mu_2}(x,y,\bu_2):=f(x,y+\mu_2 \bu_2)$. Similarly, for the stochastic case, the Gaussian smoothed functions are defined as:
\begin{align}\label{def-f-mu-sto}
f_{\mu_1}(x,y):=\bE_{\bu_1,\xi}F(x+\mu_1 \bu_1,y,\xi),\\ \nonumber
 f_{\mu_2}(x,y):=\bE_{\bu_2,\xi}F(x,y+\mu_2 \bu_2,\xi),
\end{align}
and the zeroth-order stochastic gradient estimators are defined as:
\begin{align}\label{def-G-mu-H-mu-sto}
G_{\mu_1}(x,y,\bu_1,\xi) = \frac{F(x+\mu_1\bu_1,y,\xi) - F(x,y,\xi)}{\mu_1}\bu_1,\\ \nonumber
 H_{\mu_2}(x,y,\bu_2,\xi) = \frac{F(x,y+\mu_2\bu_2,\xi) - F(x,y,\xi)}{\mu_2}\bu_2.
\end{align}
One can also show that the zeroth-order gradient estimators provide unbiased estimates to the gradients of the Gaussian smoothed functions, i.e., $\bE_{\bu_1,\xi} G_{\mu_1}(x,y,\bu_1,\xi) = \nabla_x f_{\mu_1}(x,y),$ and $\bE_{\bu_2,\xi} H_{\mu_2}(x,y,\bu_2,\xi) = \nabla_y f_{\mu_2}(x,y).$

In our algorithms, we also need to use mini-batch zeroth-order gradient estimators, which can reduce the variance of stochastic gradient estimators. To this end, we define the following notation. For integer $q>0$, we denote $[q]:=\{1,\ldots,q\}$. In the deterministic case, for integers $q_1>0$, $q_2>0$ we denote
\begin{align}\label{def-G-mu-H-mu-q}
G_{\mu_1}(x,y,\bu_{1,[q_1]})=\frac{1}{q_1}\sum_{i=1}^{q_1} G_{\mu_1}(x,y,\bu_{1,i}),\\ \nonumber
H_{\mu_2}(x,y,\bu_{2,[q_2]})=\frac{1}{q_2}\sum_{i=1}^{q_2} H_{\mu_2}(x,y,\bu_{2,i}).
\end{align}
For indices sets $\mathcal{M}_1$ and $\mathcal{M}_2$, in the stochastic case we denote
\be\label{def-G-mu-H-mu-sto-M}
\ba{ll}
G_{\mu_1}(x,y,\bu_{\cM_1},\xi_{\mathcal{M}_1})&=\frac{1}{|\mathcal{M}_1|}\sum_{i\in\mathcal{M}_1}G_{\mu_1}(x,y,\bu_{1,i},\xi_i),\\
H_{\mu_2}(x,y,\bu_{\cM_2},\xi_{\mathcal{M}_2})&=\frac{1}{|\mathcal{M}_2|}\sum_{i\in\mathcal{M}_2}H_{\mu_2}(x,y,\bu_{2,i},\xi_i).
\ea\ee
It is easy to see that we have the following unbiasedness properties:
\[
\bE_{\bu_{1,[q_1]}} G_{\mu_1}(x,y,\bu_{1,[q_1]}) = \nabla_x f_{\mu_1}(x,y) \mbox{ and } \bE_{\bu_{2,[q_2]}} H_{\mu_2}(x,y,\bu_{2,[q_2]}) = \nabla_y f_{\mu_2}(x,y)
\]
and
\begin{align*}
\bE_{\bu_1}\bE_{\xi_{\cM_1}} G_{\mu_1}(x,y,\bu_{\cM_1},\xi_{\cM_1}) = \nabla_x f_{\mu_1}(x,y) \\ 
\bE_{\bu_2}\bE_{\xi_{\cM_2}} H_{\mu_2}(x,y,\bu_{\cM_2},\xi_{\cM_2}) = \nabla_y f_{\mu_2}(x,y).
\end{align*}


\subsection{Complexity Measure}

Following~\cite{lin2019gradient}, the $\epsilon$-stationary point of problems \eqref{min_max_problem_deterministic} and \eqref{min_max_problem_online} and is defined as follows. 

\begin{definition}\label{saddleopt}
A point $(\bar{x},\bar{y})$ is called an $\epsilon$-stationary point of problem \eqref{min_max_problem_deterministic} and \eqref{min_max_problem_online} if it satisfies the following conditions: $\bE(\|\nabla_ x f(\bar{x},\bar{y})\|^2_2)\leq \epsilon^2$ and $\bE(\|\nabla_ y f(\bar{x},\bar{y})\|^2_2)\leq \epsilon^2$. {Here, the expectation is over $\textbf{u}_1$ and $\textbf{u}_2$ sequence for problem \eqref{min_max_problem_deterministic}, and over the $\textbf{u}_1$, $\textbf{u}_2$ and $\xi$ sequence for problem \eqref{min_max_problem_online}. The $\textbf{u}_1$, $\textbf{u}_2$ and $\xi$ are randomness generated in the algorithm when $(\bar{x},\bar{y})$ is produced.}
\end{definition}

Note that the minimax problems \eqref{min_max_problem_deterministic} and \eqref{min_max_problem_online} are equivalent to the following minimization problem:
\begin{align}\label{eq:argminrecasting}
\min_{x}\{g(x):=\max _{{y\in\mathcal{Y}}} f({x},{y}) = f(x,y^*(x))\},
\end{align}
where $y^*(x)=\argmax_{y\in\mathcal{Y}}f(x,y)$. Due to our Assumption \ref{assumption:A1-A2-A4}, that $f({x}, \cdot)$ is strongly-concave for any fixed ${x} \in \mathbb{R}^{d_1}$, the maximization problem \(\max_y f(x,y)\) can be solved efficiently and its optimal solution is unique. 
Note that the $\epsilon$-stationary point for \eqref{eq:argminrecasting} is defined as follows. 

\begin{definition}\label{nonconvexstat}
We call $\bar{x}$ an $\epsilon$-stationary point of a differentiable function $g$ if $\bE(\|\nabla g(\bar{x})\|^2_2) \leq \epsilon^2$. 
\end{definition}

In this paper, we focus on analyzing the oracle complexity of algorithms for obtaining an $\epsilon$-stationary point of $g$ as defined in Definition \ref{nonconvexstat}. {This is because optimality in the sense of Definition \ref{nonconvexstat} in turn implies optimality in the sense of Definition \ref{saddleopt}, as we discuss in the following proposition.}


\begin{proposition} \label{equivalence prop}
Under Assumption \ref{assumption:A1-A2-A4}, if a point $\bar{x}$ satisfies $\bE(\|\nabla g(\bar{x})\|^2_2)\leq \epsilon^2$, by using extra $\cO(\kappa d_2 \log(\epsilon^{-1}))$ calls to the zeroth order oracle in the deterministic setting or by using extra  $\cO( d_2 /\epsilon^2)$ calls to the zeroth order oracle in the stochastic setting, a point $(\bar{x},\bar{y})$ can be obtained such that it is an $\epsilon$-stationary solution of the minimax problem as defined in Definition~\ref{saddleopt}. 
\end{proposition}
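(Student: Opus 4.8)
The plan is to reduce both stationarity conditions in Definition~\ref{saddleopt} to the single task of computing a point $\bar y$ sufficiently close to the inner maximizer $y^*(\bar x):=\argmax_{y\in\cY} f(\bar x,y)$. First I would invoke a Danskin/envelope-type identity: since $f(\bar x,\cdot)$ is $\tau$-strongly concave, $y^*(\bar x)$ is unique and $\nabla g(\bar x)=\nabla_x f(\bar x,y^*(\bar x))$. Combining the hypothesis $\bE[\|\nabla g(\bar x)\|_2^2]\le \epsilon^2$ with the $\ell$-gradient-Lipschitz property \eqref{assumption-Lips}, the triangle inequality yields
\begin{align*}
\bE[\|\nabla_x f(\bar x,\bar y)\|_2^2] &\le 2\,\bE[\|\nabla g(\bar x)\|_2^2]+2\ell^2\,\bE[\|\bar y-y^*(\bar x)\|_2^2],\\
\bE[\|\nabla_y f(\bar x,\bar y)\|_2^2] &\le \ell^2\,\bE[\|\bar y-y^*(\bar x)\|_2^2],
\end{align*}
where the second line uses that the (projected) $y$-gradient vanishes at the maximizer, i.e. $\nabla_y f(\bar x,y^*(\bar x))=0$ in the unconstrained/interior case for which Definition~\ref{saddleopt} is stated. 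Hence it suffices to guarantee $\bE[\|\bar y-y^*(\bar x)\|_2^2]=\cO(\epsilon^2/\ell^2)$ (starting, if needed, from the slightly sharper hypothesis $\bE[\|\nabla g(\bar x)\|_2^2]\le \epsilon^2/2$, which is absorbed into constants) so that both right-hand sides are at most $\epsilon^2$.

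The remaining work is to solve $\max_{y\in\cY} f(\bar x,y)$ to the accuracy $\bE[\|\bar y-y^*(\bar x)\|_2^2]=\cO(\epsilon^2/\ell^2)$ using only the zeroth-order oracle, and to count the calls. Because $f(\bar x,\cdot)$ is $\tau$-strongly concave and $\ell$-smooth, the distance to the optimum is controlled by the suboptimality gap through $\tfrac{\tau}{2}\|y-y^*(\bar x)\|_2^2\le f(\bar x,y^*(\bar x))-f(\bar x,y)$. In the deterministic setting I would run projected zeroth-order gradient ascent in $y$ using the mini-batch estimator $H_{\mu_2}$ from \eqref{def-G-mu-H-mu-q}; the only randomness is the smoothing direction, whose variance carries the usual $\cO(d_2)$ factor from the Gaussian-smoothing bound, so a batch of size $\cO(d_2)$ drives the per-step estimator variance low enough to preserve the geometric convergence of strongly-concave ascent. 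This needs $\cO(\kappa\log(\epsilon^{-1}))$ outer steps (reducing the gap from the bounded-diameter initialization to $\cO(\epsilon^2/\ell^2)$), for a total of $\cO(\kappa d_2\log(\epsilon^{-1}))$ oracle calls. The smoothing parameter $\mu_2$ is taken polynomially small in $\epsilon$ and $1/d_2$ so that both the estimator bias $\|\nabla f_{\mu_2}-\nabla f\|$ and the displacement of the smoothed maximizer from $y^*(\bar x)$ are of lower order.

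For the stochastic setting I would instead run zeroth-order stochastic gradient ascent with the estimator $H_{\mu_2}(\cdot,\xi)$ from \eqref{def-G-mu-H-mu-sto}, whose second moment is bounded by $\cO(d_2)$ (from smoothing) plus the oracle variance $\sigma_2^2$ of Assumption~\ref{assumption:stoch}. For a $\tau$-strongly-concave objective with bounded gradient second moment, the standard diminishing-stepsize analysis gives $\bE[\|\bar y-y^*(\bar x)\|_2^2]=\cO(d_2/(\tau^2 T))$ after $T$ steps; setting this equal to $\cO(\epsilon^2/\ell^2)$ forces $T=\cO(d_2/\epsilon^2)$ (condition-number factors absorbed), and since each step uses a constant number of function evaluations the total is $\cO(d_2/\epsilon^2)$.

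I expect the main obstacle to be the two-sided control of the zeroth-order error in the inner solve: one must simultaneously keep the smoothing bias of $H_{\mu_2}$ negligible by taking $\mu_2$ small, while keeping the $d_2$-scaled estimator variance small enough---via the batch size in the deterministic case and the stepsize schedule in the stochastic case---to retain the strongly-concave convergence rate. Tracking these two effects together, and verifying that converting the guarantee on $\bE[\|\bar y-y^*(\bar x)\|_2^2]$ into the two gradient bounds costs only constant factors, is where the care is needed; the Danskin identity and the triangle-inequality reductions above are the routine parts.
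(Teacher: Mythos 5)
Your proposal is correct and follows essentially the same route as the paper, which omits the proof and defers to Proposition~5.1 of \cite{lin2019gradient}: run zeroth-order (stochastic) gradient ascent on the $\tau$-strongly concave function $f(\bar x,\cdot)$ to drive $\bE[\|\bar y-y^*(\bar x)\|_2^2]$ below $\cO(\epsilon^2/\ell^2)$, then convert via $\nabla g(\bar x)=\nabla_x f(\bar x,y^*(\bar x))$ and the $\ell$-Lipschitz gradient bound, with the linear rate giving $\cO(\kappa d_2\log(\epsilon^{-1}))$ calls in the deterministic case and the $\cO(d_2/(\tau^2 T))$ SGD rate giving $\cO(d_2/\epsilon^2)$ in the stochastic case. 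Your caveats about the smoothing bias of $H_{\mu_2}$ and the interior-maximizer reading of Definition~\ref{saddleopt} are exactly the points handled by the paper's inner-loop machinery (Lemmas~\ref{lemma:Averaged Upper bound}, \ref{lemma:mini batch upper bound}, and \ref{GDmax_linear_convergence}), so nothing is missing.
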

The proof of this proposition is the same as the proof of Proposition~4.11 in~\cite{lin2019gradient}. We thus omit it for succinctness. 


\section{Zeroth-order Algorithms for Deterministic Minimax Problems}\label{sec:deterministic}

We now present our algorithms for the deterministic minimax problem \eqref{min_max_problem_deterministic}. 

\subsection{Zeroth-Order Gradient Descent Ascent}\label{sec:zo-gda}

Our zeroth-order gradient descent ascent (\texttt{ZO-GDA}) algorithm for solving problem \eqref{min_max_problem_deterministic} is described in Algorithm \ref{ZO-GDA}. The algorithm is similar to the deterministic first-order approach analyzed in~\cite{lin2019gradient} with some crucial differences. Specifically, we require a mini-batch gradient estimator with the choices of the batch size depending on the dimensionality of the problem.
\begin{algorithm}[ht]
\caption{Zeroth-Order Gradient Descent Ascent (\texttt{ZO-GDA})}\label{ZO-GDA}
\begin{algorithmic}
    \STATE \textbf{Initialization}: $(x_{0}, y_{0})$, stepsizes $(\eta_{1}, \eta_{2})$, iteration limit $S > 0$, parameters $\mu_1$ and $\mu_2$. Set $q_1=2(d_1+6)$, $q_2=2(d_2+6)$.
    \FOR{$s=0,\ldots,S-1$}
        \STATE $x_{s+1} \leftarrow x_{s}-\eta_{1} G_{\mu_1}(x_s,y_s,\bu_{1,[q_1]})$ with $\bu_{1,i} \sim N(0,\textbf{1}_{d_1})$, $i\in[q_1]$
        \STATE $y_{s+1} \leftarrow \text{Proj}_{\mathcal{Y}}[y_{s}+\eta_{2}H_{\mu_2}(x_s,y_s,\bu_{2,[q_2]})]$ with $\bu_{2,i} \sim N(0,\textbf{1}_{d_2})$, $i\in[q_2]$
    \ENDFOR
    \STATE Return $(x_1,y_1), \ldots, (x_S,y_S)$.
\end{algorithmic}
\end{algorithm}
The complexity result for \texttt{ZO-GDA} (Algorithm \ref{ZO-GDA}) is provided in Theorem~\ref{ZO_GDA_thm}.
\begin{theorem}\label{ZO_GDA_thm}
Under Assumption \ref{assumption:A1-A2-A4}, by setting
\be\label{eta1}
\eta_1 := \frac{1}{4\times 12^4 \kappa^2 (\kappa+1)^2 (\ell+1)},~~\eta_2:=1/(6\ell),
\ee
and
\be\label{thm3.1-S-mu1-mu2}
   S:=\cO(\kappa^5\epsilon^{-2}),~~\mu_1:= \cO(\epsilon d_1^{-3/2}\kappa^{-2}),~~\mu_2:= \cO(\epsilon d_2^{-3/2}\kappa^{-2}),
\ee
\texttt{ZO-GDA} (Algorithm \ref{ZO-GDA}) returns iterates $(x_1,y_1), \ldots, (x_S,y_S)$ such that there exists an iterate which is an $\epsilon$-stationary point of $g(x) = \max_{y\in\mathcal{Y}} f(x,y)$. That is,~\texttt{ZO-GDA} (Algorithm \ref{ZO-GDA}) returns iterates that satisfy $\min_{s \in \{1,...,S\}} \bE(\|\nabla g(x_s)\|_2^2) \leq \epsilon^2$. Moreover, the total number of calls to the (deterministic) zeroth-order oracle is given by 
$K_{\mathcal{ZO}} = S(q_1+q_2) = \cO (\kappa^5(d_1+d_2)\epsilon^{-2} )$.
\end{theorem}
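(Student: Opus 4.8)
The plan is to follow the two-time-scale analysis of gradient descent ascent for nonconvex--strongly-concave problems from \cite{lin2019gradient}, while carefully tracking the bias and variance introduced by the Gaussian zeroth-order estimators. The central objects are $g(x)=\max_{y\in\cY}f(x,y)$, the unique maximizer $y^*(x)$, and the tracking error $\delta_s:=\|y_s-y^*(x_s)\|_2^2$. Under Assumption~\ref{assumption:A1-A2-A4} one has the standard facts (via Danskin's theorem) that $\nabla g(x)=\nabla_x f(x,y^*(x))$, that $y^*(\cdot)$ is $\kappa$-Lipschitz, and that $g$ is $L_g$-smooth with $L_g=\cO((1+\kappa)\ell)$. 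I would first collect from \cite{nesterov2017random} the two properties of the smoothed estimators that drive everything: (i) the smoothing bias $\|\nabla_x f_{\mu_1}(x,y)-\nabla_x f(x,y)\|_2 \lesssim \mu_1\ell\, d_1^{3/2}$ (and analogously in $y$); and (ii) a second-moment bound $\bE\|G_{\mu_1}(x,y,\bu_{1,[q_1]})\|_2^2 \lesssim \tfrac{d_1}{q_1}\|\nabla_x f\|_2^2 + \|\nabla_x f\|_2^2 + \tfrac{\mu_1^2\ell^2 d_1^3}{q_1}$, so that the choice $q_1\asymp d_1$ (resp. $q_2\asymp d_2$) keeps the effective variance at order $\|\nabla_x f\|_2^2$ plus a vanishing smoothing term. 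This dimension-dependent batching is precisely what produces the $(d_1+d_2)$ factor in the final count.

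Next I would establish the two coupled recursions. For the $x$-block, apply the $L_g$-descent inequality to $g$ along $x_{s+1}=x_s-\eta_1 G_{\mu_1}$ and take conditional expectation, so the estimator is replaced by $\nabla_x f_{\mu_1}(x_s,y_s)$ in the linear term. Decomposing $\nabla_x f_{\mu_1}(x_s,y_s)=\nabla g(x_s)+[\nabla_x f(x_s,y_s)-\nabla_x f(x_s,y^*(x_s))]+[\nabla_x f_{\mu_1}-\nabla_x f]$, bounding the first bracket by $\ell\sqrt{\delta_s}$ via gradient-Lipschitzness and the second by the smoothing bias, and controlling the quadratic term with the second-moment bound above (together with $\|\nabla_x f(x_s,y_s)\|_2^2 \le 2\|\nabla g(x_s)\|_2^2+2\ell^2\delta_s$), yields a one-step inequality of the shape $\bE[g(x_{s+1})]\le \bE[g(x_s)]-\tfrac{\eta_1}{2}\bE\|\nabla g(x_s)\|_2^2 + c_1\eta_1\ell^2\,\bE[\delta_s] + E_{\mu_1}$, valid once $\eta_1$ is small enough to absorb the $L_g\eta_1^2$ curvature term. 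For the $y$-block, $\tau$-strong concavity and $\ell$-smoothness of $f(x_s,\cdot)$ make the projected ascent step with $\eta_2=1/(6\ell)$ contract the distance to $y^*(x_s)$; combining this with $\|y^*(x_{s+1})-y^*(x_s)\|_2\le\kappa\|x_{s+1}-x_s\|_2$ to absorb the drift of the moving target gives $\bE[\delta_{s+1}]\le (1-c_2/\kappa)\,\bE[\delta_s] + c_3\kappa^2\eta_1^2\,\bE\|\nabla g(x_s)\|_2^2 + E_{\mu_2}$.

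I would then combine these into a Lyapunov function $\Phi_s:=g(x_s)+\lambda\,\delta_s$ with $\lambda\asymp\kappa\ell^2$, chosen so that the $+\lambda c_1\eta_1\ell^2\delta_s$ term from the $x$-step is dominated by the $-\lambda(c_2/\kappa)\delta_s$ contraction from the $y$-step, and so that the induced $\lambda c_3\kappa^2\eta_1^2\|\nabla g\|_2^2$ term is dominated by the $-\tfrac{\eta_1}{2}\|\nabla g\|_2^2$ gain (again forced by small $\eta_1$). This produces a clean descent $\bE[\Phi_{s+1}]\le\bE[\Phi_s]-\tfrac{\eta_1}{4}\bE\|\nabla g(x_s)\|_2^2+E_\mu$, where $E_\mu$ collects all smoothing-bias and residual-variance terms. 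Telescoping over $s=0,\dots,S$ and using the lower bound on $g$ gives $\min_s\bE\|\nabla g(x_s)\|_2^2 \le \tfrac{4(\Phi_0-\Phi_*)}{\eta_1(S+1)}+\tfrac{4E_\mu}{\eta_1}$. Choosing $\mu_1=\cO(\epsilon d_1^{-3/2}\kappa^{-2})$ and $\mu_2=\cO(\epsilon d_2^{-3/2}\kappa^{-2})$ makes $E_\mu=\cO(\eta_1\epsilon^2)$; since $\eta_1^{-1}\asymp\kappa^4\ell$ and $\Phi_0-\Phi_*$ carries an extra factor of order $\kappa$ through $\lambda$ and $L_g$, taking $S=\cO(\kappa^5\epsilon^{-2})$ drives the first term to $\cO(\epsilon^2)$. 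The oracle count $S(q_1+q_2)$ with $q_1\asymp d_1$, $q_2\asymp d_2$ then gives $\cO(\kappa^5(d_1+d_2)\epsilon^{-2})$.

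The hard part will be the potential-function bookkeeping that pins down the $\kappa$-dependence: verifying that the $\delta_s$-contraction of rate $c_2/\kappa$ from the ascent step genuinely dominates both the growth injected by the moving maximizer and the $c_1\eta_1\ell^2\delta_s$ feedback from the descent step, which is exactly what forces $\eta_1=\cO(\kappa^{-4}\ell^{-1})$ and, after accounting for the potential gap, the $\kappa^5$ scaling of $S$. A secondary but essential obstacle is ensuring the zeroth-order variance is tamed to order $\|\nabla g\|_2^2$ through the $q_i\asymp d_i$ batching, rather than degrading the stationarity gain; getting the exact constants $c_1,c_2,c_3,\lambda$ to close the recursion simultaneously is where most of the care lies.
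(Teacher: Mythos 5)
Your proposal is correct in substance and rests on the same ingredients as the paper's proof: the contraction of the ascent step toward $y^*(x_s)$ under $\eta_2=1/(6\ell)$ (the paper's Lemma~\ref{lemma:ZO-GDA-useful-inequality}), the $\kappa$-Lipschitzness of $y^*(\cdot)$ to absorb the moving target via Young's inequality, the Nesterov--Spokoiny bias/second-moment bounds, and the batching $q_i\asymp d_i$ that tames the estimator variance to gradient order (the paper's Lemma~\ref{lemma:Averaged Upper bound}). Where you genuinely diverge is the bookkeeping that couples the two recursions: you combine them into a single-step Lyapunov descent for $\Phi_s=g(x_s)+\lambda\delta_s$ and telescope, whereas the paper never forms a potential function --- it unrolls the $\delta_s$ recursion into an explicit geometric sum over the whole history (Lemma~\ref{lemma:upper bound ZO GDA}, with $\gamma=1-\frac{1}{24\kappa}+144\ell^2\kappa^3\eta_1^2$), bounds $\sum_s\bE\delta_s\leq 36\kappa\bE\delta_0+36\kappa\alpha_1\sum_s\bE\|\nabla g(x_s)\|_2^2+36\kappa\theta_0(S+1)$ via \eqref{proof-thm-ZO-GDA-inequality-6}--\eqref{proof-thm-ZO-GDA-inequality-8}, and substitutes this into the summed descent inequality, closing the loop through the numerical verification $36\kappa\theta_1\alpha_1+24\eta_1^2L_g\leq 0.0213\,\eta_1$. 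Your route is arguably cleaner (no double sums, the constraint on $\eta_1$ appears as a compatibility condition on $\lambda$), and it buys the same conclusion; the paper's route makes the explicit constants in \eqref{eta1} and \eqref{alpha1} checkable by direct arithmetic. A minor difference worth noting: the paper's one-step descent uses Young's inequality with weight $1/L_g$, yielding an $\eta_1$-independent penalty $\frac{\ell^2}{L_g}\delta_s$ (so $\theta_1\approx 4\ell$), while yours uses the $\eta_1$-weighted split giving $c_1\eta_1\ell^2\delta_s$; both close, with different constants.

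One quantitative slip to fix: in your $y$-block recursion the drift coefficient should be $c_3\kappa^3\eta_1^2\,\bE\|\nabla g(x_s)\|_2^2$, not $c_3\kappa^2\eta_1^2$. The Young weight $\bigl(1+2(12\kappa-1)\bigr)\approx 24\kappa$ multiplies $\|y^*(x_s)-y^*(x_{s-1})\|_2^2\leq \kappa^2\|x_s-x_{s-1}\|_2^2$, producing $24\kappa^3\eta_1^2\bE\|G_{\mu_1}\|_2^2$ --- this is exactly the paper's $\alpha_1=144\kappa^3\eta_1^2$ and the $144\ell^2\kappa^3\eta_1^2$ term in $\gamma$. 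With your stated $\kappa^2$ the Lyapunov compatibility condition would only force $\eta_1\lesssim\kappa^{-3}$, contradicting your own claim that $\eta_1=\cO(\kappa^{-4}\ell^{-1})$ is forced; with the corrected $\kappa^3$, the conditions $\lambda\gtrsim\kappa\eta_1\ell^2$ (contraction dominates feedback) and $\lambda\kappa^3\eta_1^2\lesssim\eta_1$ (drift dominated by the stationarity gain) together yield precisely $\eta_1\asymp\kappa^{-4}$, and the extra factor of $\kappa$ in $\Phi_0-\Phi_*$ through $\lambda\delta_0$ then gives $S=\cO(\kappa^5\epsilon^{-2})$, matching \eqref{thm3.1-S-mu1-mu2}. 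So the slip self-corrects in your final accounting, but as written the middle of the argument is internally inconsistent and should be repaired before the constants can be pinned down.
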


\begin{remark}\label{remark:tuning}
We see that the total number of calls to the (deterministic) zeroth-order oracle depends linearly on the dimension of the problem. The dependence on $\epsilon$ is the same as that of the corresponding first-order methods~\cite{lin2019gradient}. But, the dependence on the condition number $\kappa$ is increased from $\kappa^2$ to $\kappa^5$ (assuming $d_1$ and $d_2$ are of constant order). This is due to the choice of balancing the various tuning parameters in the zeroth-order setting, in particular $\mu_1$ and $\mu_2$ which are absent in the first-order setting.
\end{remark}

\subsection{Zeroth-Order Gradient Descent Multi-Step Ascent}\label{sec:zo-gdmsa}

We now present our \texttt{ZO-GDMSA} algorithm in Algorithm \ref{ZO-GDMSA}. This algorithm runs $T$ ascent steps, for every descent step. The main idea behind running multiple ascent steps is to better approximate the maximum of the stongly-concave function in each step. Subsequently, picking the number of inner iterations $T$ appropriately helps us obtain improved dependence on $\kappa$ while still maintaining the same dependency on $\epsilon$. We emphasize that~\cite{nouiehed2019solving} used the multi-step ascent approach to handle certain non-convex minimax optimization problems that satisfy the PL condition in the first-order setting.

\begin{algorithm}[ht]
\caption{Zeroth-Order Gradient Descent Multi-Step Ascent (\texttt{ZO-GDMSA})}\label{ZO-GDMSA}
\begin{algorithmic}
 \STATE {\bf Initialization:} $\left(x_0, y_0\right)$, step sizes $\left(\eta_{1}, \eta_{2}\right)$, iteration limit for outer loop $S > 0$, iteration limit for inner loop $T > 0$, parameters $\mu_1$ and $\mu_2$. Set $q_1=2(d_1+6)$ and $q_2=2(d_2+6)$. 
 \FOR{$s=0,\ldots,S-1$}
    \STATE Set $y_0(x_s) \leftarrow y_s$
    \FOR{$t=1,\ldots,T$}
      \STATE $y_t(x_s) \leftarrow \Proj_{\mathcal{Y}}(y_{t-1}(x_s)+\eta_{2} H_{\mu_2}(x_s,y_{t-1}(x_s),\bu_{2,[q_2]}))$ with $\bu_{2,i} \sim N(0,\textbf{1}_{d_2})$, $i\in [q_2]$
    \ENDFOR
    \STATE $y_{s+1} \leftarrow y_{T}(x_s)$
    \STATE $x_{s+1}\leftarrow x_s - \eta_1 G_{\mu_1}(x_s,y_{s+1},\bu_{1,[q_1]})$ with $\bu_{1,i}\sim N(0,\textbf{1}_{d_1})$, $i\in [q_1]$
 \ENDFOR
 \STATE Return $(x_1,y_1), \ldots, (x_S,y_S)$.
\end{algorithmic}
\end{algorithm}

\begin{theorem}\label{ZO_GDmax_thm}
Under Assumption \ref{assumption:A1-A2-A4}, by setting
\be\label{thm3.2-T-eta1-eta2}
{\eta_1=1/(12L_g) = \frac{1}{12(1+\kappa)\ell}}, \ \eta_2=1/(6\ell), \ T= \mathcal{O}(\kappa\log(\epsilon^{-1})),
\ee
and
\be\label{thm3.2-S-mu1-mu2}
\hspace{-0.1in}S = \cO(\kappa\epsilon^{-2}), \ \mu_1 = \cO({\epsilon}d_1^{-3/2}), \ \mu_2=\cO(\kappa^{-1/2}d_2^{-3/2}\epsilon),
\ee
\texttt{ZO-GDMSA} (Algorithm \ref{ZO-GDMSA}) returns iterates $(x_1,y_1), \ldots, (x_S,y_S)$ such that there exists an iterate which is an $\epsilon$-stationary point for $g(x) = \max_{y\in\mathcal{Y}} f(x,y)$. That is,~\texttt{ZO-GDMSA} (Algorithm \ref{ZO-GDMSA}) returns iterates that satisfy $\min_{s \in \{1,...,S\}} \bE(\|\nabla g(x_s)\|_2^2) \leq \epsilon^2$. Moreover, the total number of  calls to the (deterministic) zeroth-order oracle is given by $
K_{\mathcal{ZO}} = Sq_1 + T S q_2 = \cO\Bigl(\kappa\epsilon^{-2}(d_1 + \kappa d_2 \log(\epsilon^{-1}))\Bigr).$
\end{theorem}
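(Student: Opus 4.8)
The plan is to reduce the whole analysis to a descent argument for the surrogate $g(x)=\max_{y\in\cY}f(x,y)$, which is legitimate because the complexity measure is stated in terms of $\|\nabla g\|$. First I would record the structural facts about $g$ coming from $\tau$-strong concavity: by Danskin's theorem the maximizer $y^*(x)$ is unique and $\kappa$-Lipschitz, $\nabla g(x)=\nabla_x f(x,y^*(x))$, and $g$ is $L_g$-smooth with $L_g=\cO(\kappa\ell)$ (this is the constant from Lemma~\ref{lemma:smoothed convexity} that fixes $\eta_1=1/(12L_g)$). Writing the outer step as $x_{s+1}=x_s-\eta_1 G_{\mu_1}(x_s,y_{s+1},\bu_{1,[q_1]})$ and taking the conditional expectation over $\bu_1$, whose mean is $\nabla_x f_{\mu_1}(x_s,y_{s+1})$, the $L_g$-smoothness descent inequality gives
\begin{align*}
\bE[g(x_{s+1})]\le g(x_s)-\eta_1\langle\nabla g(x_s),\nabla_x f_{\mu_1}(x_s,y_{s+1})\rangle+\tfrac{L_g\eta_1^2}{2}\,\bE\|G_{\mu_1}(x_s,y_{s+1},\bu_{1,[q_1]})\|^2.
\end{align*}

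Next I would force the leading term to be $\|\nabla g(x_s)\|^2$. Using $\nabla g(x_s)=\nabla_x f(x_s,y^*(x_s))$, I add and subtract $\nabla_x f(x_s,y_{s+1})$ and then $\nabla_x f_{\mu_1}(x_s,y_{s+1})$ inside the inner product, and bound the two cross terms by Young's inequality: the first through the gradient-Lipschitz bound $\|\nabla_x f(x_s,y_{s+1})-\nabla g(x_s)\|\le\ell\|y_{s+1}-y^*(x_s)\|$, the second through the Gaussian-smoothing bias bound $\|\nabla_x f_{\mu_1}-\nabla_x f\|\le\tfrac{\mu_1\ell}{2}(d_1+3)^{3/2}$ from \cite{nesterov2017random}. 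The second-moment term is controlled by the Nesterov--Spokoiny mini-batch estimate, which with $q_1=2(d_1+6)$ keeps the $d_1$-dependent variance at a constant multiple of $\|\nabla_x f(x_s,y_{s+1})\|^2$ and leaves only the (non-reducible) smoothing-bias remainder of order $\mu_1^2\ell^2 d_1^3$. Collecting terms, one outer step satisfies, for absolute constants $c_1,c_2,c_3>0$,
\begin{align*}
\bE[g(x_{s+1})]\le g(x_s)-c_1\eta_1\|\nabla g(x_s)\|^2+c_2\eta_1\ell^2\,\bE\|y_{s+1}-y^*(x_s)\|^2+c_3\eta_1\mu_1^2\ell^2 d_1^3.
\end{align*}

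The crux of the proof is the inner-loop estimate $\bE\|y_{s+1}-y^*(x_s)\|^2$. I would treat the $T$ projected zeroth-order ascent steps as inexact projected gradient ascent on the $\tau$-strongly-concave function $f(x_s,\cdot)$ with step $\eta_2=1/(6\ell)$; since Gaussian smoothing preserves strong concavity, the iteration contracts at the linear rate $1-\Theta(\kappa^{-1})$ toward the maximizer, and propagating the $\mu_2$-bias and the mini-batch ($q_2=2(d_2+6)$) variance through the recursion yields
\begin{align*}
\bE\|y_{s+1}-y^*(x_s)\|^2\le\bigl(1-\Theta(\kappa^{-1})\bigr)^{T}D^2+E_{\mathrm{in}},
\end{align*}
where $E_{\mathrm{in}}$ collects the residual smoothing bias (amplified by the strong-concavity factor $1/\tau$) and the residual batch variance. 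The diameter bound $\|y_0(x_s)-y^*(x_s)\|^2\le D^2$ from Assumption~\ref{assumption:A1-A2-A4} is exactly what lets me initialize this recursion uniformly over all outer iterations. Choosing $T=\cO(\kappa\log(\epsilon^{-1}))$ drives the contraction term below $\epsilon^2$, and the prescribed $\mu_2=\cO(\kappa^{-1/2}d_2^{-3/2}\epsilon)$ together with $q_2=2(d_2+6)$ makes $E_{\mathrm{in}}=\cO(\epsilon^2)$.

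Finally I would telescope the one-step inequality over $s=0,\dots,S$. Since $g$ is lower bounded (Assumption~\ref{assumption:A1-A2-A4}), summing and rearranging gives $c_1\eta_1\sum_s\bE\|\nabla g(x_s)\|^2\le g(x_0)-\inf_x g(x)+\cO(S\eta_1\epsilon^2)$; dividing by $c_1\eta_1 S$ and inserting $\eta_1=\Theta(1/(\kappa\ell))$, $S=\cO(\kappa\epsilon^{-2})$ and $\mu_1=\cO(\epsilon d_1^{-3/2})$ (which makes the $\mu_1^2\ell^2 d_1^3$ remainder $\cO(\epsilon^2)$) yields $\min_{1\le s\le S}\bE\|\nabla g(x_s)\|^2\le\epsilon^2$. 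The oracle count is then bookkeeping: each outer iteration spends $q_1$ queries on the descent estimator and $Tq_2$ on the $T$ ascent steps, so $K_{\mathcal{ZO}}=Sq_1+STq_2=\cO(\kappa\epsilon^{-2}(d_1+\kappa d_2\log(\epsilon^{-1})))$. I expect the main obstacle to be the simultaneous calibration of $\eta_1,\eta_2,T,\mu_1,\mu_2$ so that every error term is $\cO(\epsilon^2)$ after division by $\eta_1 S$; in particular, tracking how the inner inexactness (weighted by $\ell^2$) and the smoothing bias (weighted by $1/\tau$) feed back into the outer descent is the delicate step and is what dictates the nonstandard $\kappa$- and $d_2$-dependence of $\mu_2$.
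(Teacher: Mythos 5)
Your proposal is correct and follows essentially the same route as the paper: an $L_g$-smoothness descent step on $g$ with the inner product split by Young's inequality into a $\|\nabla g(x_s)\|^2$ term, an $\ell^2\|y_{s+1}-y^*(x_s)\|^2$ inexactness term, and $\mu_1^2\ell^2 d_1^3$ smoothing bias; the mini-batch second-moment bound of Lemma~\ref{lemma:Averaged Upper bound}; a separate inner-loop lemma showing linear contraction at rate $1-\Theta(\kappa^{-1})$ initialized from the diameter bound $D^2$, giving $T=\cO(\kappa\log(\epsilon^{-1}))$ and the $\mu_2$ calibration; and a final telescoping with $\eta_1=1/(12L_g)$, $S=\cO(\kappa\epsilon^{-2})$, $\mu_1=\cO(\epsilon d_1^{-3/2})$. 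The only (immaterial) difference is that you add and subtract $\nabla_x f(x_s,y_{s+1})$ and invoke the gradient-Lipschitz property of $f$ itself, whereas the paper routes through $\nabla_x f_{\mu_1}(x_s,y^*(x_s))$ and uses the Lipschitzness in $y$ of the smoothed gradient (Lemma~\ref{lemma:smooth of smoothed function}); both yield the same error terms.
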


\begin{remark}
Compared to Algorithm~\ref{ZO-GDA}, the oracle complexity of Algorithm~\ref{ZO-GDMSA} has improved dependence on $\kappa$ while maintaining the same dependence on $\epsilon$. 
\end{remark}

\section{Zeroth-order Algorithms for Stochastic Minimax Problems}\label{sec:stochasticsetting}

We now consider the stochastic minimax problem \eqref{min_max_problem_online}, under the availability of a stochastic zeroth-order oracle satisfying Assumption~\ref{assumption:stoch} or Assumption  \ref{SGC-assumption}. This scenario is more practical in the context of zeroth-order optimization, as often times, we are able to only observe noisy evaluations of the function \cite{conn2009introduction, audet2017derivative}. Motivated by our analysis of the deterministic case, we now design and analyze the stochastic versions of \texttt{ZO-GDA} and \texttt{ZO-GDMSA}.

We first consider stochastic version of \texttt{ZO-GDA}, which is named \texttt{ZO-SGDA} and presented in Algorithm~\ref{ZO-SGDA}. Under Assumption~\ref{assumption:stoch}, the main difference between Algorithm~\ref{ZO-SGDA} and its deterministic counterpart (Algorithm~\ref{ZO-GDA}) is in the choice of mini-batch size in the zeroth-order gradient estimator. As opposed to the deterministic case, where the mini-batch size is independent of $\epsilon$, in this case, we require a mini-batch size that depends on $\epsilon$. Furthermore, due to the stochastic nature of the problem, the mini-batch size also depends on the noise variance parameter $\sigma^2$. However, under Assumption~\ref{SGC-assumption}, it suffices to have the batch size to be the same as in the deterministic case -- this leads to the rate improvement. The complexity result corresponding to Algorithm~\ref{ZO-SGDA} is provided in Theorem~\ref{ZO_SGDA_thm}.

\begin{algorithm}[ht]
\caption{Zeroth-Order Stochastic Gradient Descent Ascent (\texttt{ZO-SGDA})} \label{ZO-SGDA}
\begin{algorithmic}
 \STATE \textbf{Initialization}: $\left(x_{0}, y_{0}\right)$, step sizes $\left(\eta_{1}, \eta_{2}\right)$, iteration limit $S > 0$, smoothing parameters $\mu_1$ and $\mu_2$. Indices sets $\cM_1$ and $\cM_2$.
 \FOR{$s=0,\ldots,S-1$}
    \STATE \(x_{s+1} \leftarrow x_{s}-\eta_1 \frac{1}{|\cM_1|}\sum_{i\in\cM_1}G_{\mu_1}\left(x_{s}, y_{s},\bu_{1,i},\xi_{i}\right)\) with $\bu_{1,i}\sim N(0,\textbf{1}_{d_1})$
    \STATE \(y_{s+1} \leftarrow \text{Proj}_{\mathcal{Y}}\Bigl[y_{s}+\eta_{2} \frac{1}{|\cM_2|}\sum_{i\in \cM_2}H_{\mu_2}\left(x_{s}, y_{s},\bu_{2,i},\xi_{i}\right)\Bigr]\) with $\bu_{2,i}\sim N(0,\textbf{1}_{d_2})$
 \ENDFOR
 \STATE Return $(x_{1},y_{1}), \ldots, (x_{S},y_{S})$.
\end{algorithmic}
\end{algorithm}

\begin{theorem}\label{ZO_SGDA_thm}
Let $\epsilon \in (0,1)$. Then 
\begin{enumerate}
\item Under Assumptions \ref{assumption:A1-A2-A4} and ~\ref{assumption:stoch}, by setting the parameters $\eta_1,\eta_2$ as in \eqref{eta1}, setting $S,\mu_1,\mu_2$ as in \eqref{thm3.1-S-mu1-mu2}, and setting $|\cM_1| = 4(d_1+6)(\sigma_1^2+1)\epsilon^{-2}$, $|\cM_2| = 4(d_2+6)(\sigma_2^2+1)\epsilon^{-2}$,
\texttt{ZO-SGDA} (Algorithm \ref{ZO-SGDA}) returns iterates $(x_1,y_1), \ldots, (x_S,y_S)$ such that there exists an iterate which is an $\epsilon$-stationary point for $g(x) = \max_{y\in\mathcal{Y}} f(x,y)$. That is,~\texttt{ZO-SGDA} (Algorithm \ref{ZO-SGDA}) returns iterates that satisfy $\min_{s \in \{1,...,S\}} \bE(\|\nabla g(x_s)\|_2^2) \leq \epsilon^2$. Moreover, the total number of calls to the stochastic zeroth-order oracle is given by $K_{\mathcal{SZO}} = S(|\cM_1|+|\cM_2|) = \cO(\kappa^5(d_1\sigma_1^2 +d_2\sigma_2^2)\epsilon^{-4} )$.
\item Under Assumptions \ref{assumption:A1-A2-A4}, ~\ref{assumption:stoch} (only the unbiased part) and ~\ref{SGC-assumption}, by setting $|\cM_1|= \rho_1 (d_1+6)$, $|\cM_2| =\rho_2(d_2+6)$ and setting $
\mu_1 = \cO (\rho_1\ell d_1^{-3/2} )$ and $\mu_2 = \cO (\rho_2\ell d_2^{-3/2} )$,
with other parameters remaining the same, the conclusion in Part 1 holds. In this case, the total number of calls to the stochastic zeroth-order oracle is given by 
$K_{\mathcal{SZO}} = S(|\cM_1|+|\cM_2|) = \cO (\kappa^5(\rho_1 d_1+\rho_2 d_2)\epsilon^{-2} )$.
\end{enumerate}
\end{theorem}

\begin{remark}
Under Assumption~\ref{assumption:stoch}, the $\epsilon$-dependence of Algorithm~\ref{ZO-SGDA} is the same as the first-order counterpart considered in~\cite{lin2019gradient}. However, under Assumption~\ref{SGC-assumption}, the $\epsilon$-dependence is improved and is the same as the deterministic case. 
\end{remark}


The stochastic version of Algorithm~\ref{ZO-GDMSA} is named \texttt{ZO-SGDMSA} and presented in Algorithm~\ref{alg:ZO-SGDMSA}. Its oracle complexity result is provided in Theorem~\ref{ZO_SGDmax_thm}.

\begin{algorithm}[ht]
\caption{Zeroth-Order Stochastic Gradient Multi-Step Descent (\texttt{ZO-SGDMSA})}\label{alg:ZO-SGDMSA}
\begin{algorithmic}
 \STATE \textbf{Initialization}: $\left(x_0, y_0\right)$, step sizes $\left(\eta_{1}, \eta_{2}\right)$, iteration limit for outer loop $S > 0$, iteration limit for inner loop $T > 0$, smoothing parameters $\mu_1$ and $\mu_2$. Indices sets $\cM_1$ and $\cM_2$.
 \FOR{$s=1,\ldots,S-1$}
    \STATE Set $y_0(x_s) \leftarrow y_s$
    \FOR{$t=1,\ldots,T$}
        \STATE \(y_t(x_s) \leftarrow \text{Proj}_{\mathcal{Y}}\Bigl[y_{t-1}(x_s)+\eta_{2}\frac{1}{|\cM_2|}\sum_{i \in \cM_2} H_{\mu_2}(x_s,y_{t-1}(x_s),\bu_{2,i},\xi_i)\Bigr]\) with $\bu_{2,i}\sim N(0,\textbf{1}_{d_2})$
    \ENDFOR
    \STATE $y_{s+1} \leftarrow y_{T}(x_s)$
    \STATE $x_{s+1}\leftarrow x_s - \eta_1 \frac{1}{|\cM_1|} \sum_{i \in \cM_1} G_{\mu_1}(x_s,y_{s+1},\bu_{1,i},\xi_i)$ with $\bu_{1,i}\sim N(0,\textbf{1}_{d_1})$
 \ENDFOR
 \STATE Return $(x_1,y_1),\ldots, (x_S,y_{S})$.
\end{algorithmic}
\end{algorithm}

\begin{theorem}\label{ZO_SGDmax_thm}
Let $\epsilon \in (0,1)$. Then,  
\begin{enumerate} 
\item Under Assumptions \ref{assumption:A1-A2-A4} and ~\ref{assumption:stoch}, by setting $\eta_1,\eta_2$ as in \eqref{thm3.2-T-eta1-eta2}, $S,\mu_1,\mu_2$ as in \eqref{thm3.2-S-mu1-mu2}, and setting $|\cM_1| = 4(d_1+6)(\sigma_1^2+1)\epsilon^{-2}$, $|\cM_2| = 4(d_2+6)(\sigma_2^2+1)\epsilon^{-2}$, 
\texttt{ZO-SGDMSA} (Algorithm~\ref{alg:ZO-SGDMSA}) returns iterates $(x_1,y_1), \ldots, (x_S,y_S)$ such that there exists an iterate which is an $\epsilon$-stationary point for $g(x) = \max_{y\in\mathcal{Y}} f(x,y)$. That is,~\texttt{ZO-SGDMSA} (Algorithm \ref{alg:ZO-SGDMSA}) returns iterates that satisfy $\min_{s \in \{1,...,S\}} \bE(\|\nabla g(x_s)\|_2^2) \leq \epsilon^2$. Moreover, the total number of calls to the stochastic zeroth-order oracle is given by,
$K_{\mathcal{SZO}} = S|\cM_1| + T S |\cM_2| =\cO (\kappa\epsilon^{-4}(d_1\sigma_1^2 + \kappa d_2 \sigma_2^2\log(\epsilon^{-1})) )$.
\item Under Assumptions \ref{assumption:A1-A2-A4}, ~\ref{assumption:stoch} (only the unbiased part) and ~\ref{SGC-assumption}, by setting $|\cM_1|= \rho_1 (d_1+6)$, $|\cM_2| =\rho_2(d_2+6)$ and setting $\mu_1 = \cO (\rho_1\ell d_1^{-3/2})$ and $\mu_2 = \cO (\rho_2\ell d_2^{-3/2})$, with other parameters remaining the same, the conclusion in Part 1 holds. In this case, the total number of calls to the stochastic zeroth-order oracle is given by,
$K_{\mathcal{SZO}} = S|\cM_1| + T S |\cM_2| = \cO (\kappa (\rho_1 d_1 +\kappa \rho_2 d_2 \log(\epsilon^{-1}))\epsilon^{-2} )$. 
\end{enumerate}
\end{theorem}
\begin{remark}
Similar to the deterministic case, we improve the dependence of the oracle complexity on $\kappa$. The dependence on $\epsilon$ and dimensionality remains the same. We emphasize that the use of multiple steps in the ascent part, leads to the improved dependency on $\kappa$ over Algorithm~\ref{ZO-SGDA}.
\end{remark}

\section{Numerical Results}\label{sec:numerical}

We now compare \texttt{ZO-SGDA} and \texttt{ZO-SGDMSA} with their first-order counterparts (i.e., \texttt{SGDA} and \texttt{SGDMSA}) on the distributionally robust optimization problem~\cite{namkoong2016stochastic}. For simplicity, we present the formulation of the problem in the finite-sum setting as:
\[
\min_{x \in \mathbb{R}^d} \max_{y \in \mathcal{Y}} \sum_{i=1}^n y_i \ell_i(x) - r(y),
\]
where $\mathcal{Y} = \{y \in \mathbb{R}^n\mid\sum_{i=1}^n y_i= 1, y_i \geq 0\}$ is the probability simplex; $r(y) = 10 \sum_{i=1}^n (y_i - 1/n)^2$ is a divergence measure regularizing derivations from uniform distribution; $\ell_i(x) = f_1(f_2(x,s_i,z_i))$ where $f_1(x) = \log(1 + x)$, $f_2(x) = \log(1 + \exp[-z_i (x^T s_i)])$, $(s_i,z_i)$ is the feature and label pair of a sample $i$ in the dataset. It is easy to see that the above problem is a nonconvex-strongly concave minimax problem of the from \eqref{min_max_problem_deterministic} with $d_1 = d,d_2 = n$. For the tuning parameters, motivated by our theoretical results, we set the batch size $|\mathcal{M}_1| = d_1/\epsilon^2$ and $|\mathcal{M}_2| = d_2/\epsilon^2$ with $\epsilon = 0.01$. {For \texttt{ZO-SGDA}, we choose $\eta_1 = \eta_2 = 0.01$, and for \texttt{ZO-SGDMSA}, we choose $\eta_1 = 0.001$ and $\eta_2 = 0.01$. For \texttt{ZO-GDA} and \texttt{ZO-SGDA}, according to Theroems 1 and 3, we chose 
\[
\mu_1 = 1.5\epsilon d_1^{-3/2}\kappa^{-2} \quad \mu_2 = 1.5\epsilon d_2^{-3/2}\kappa^{-2}.
\]
For \texttt{ZO-GDMSA} and \texttt{ZO-SGDMSA}, according to Theorems 2 and 4, we chose
\[
\mu_1 = 1.5\epsilon d_1^{-3/2} \quad \mu_2 = 1.5\kappa^{-1 / 2} d_{2}^{-3 / 2} \epsilon.
\]
For our tested problems, we have $\kappa^3 = 10$ (see, for example \cite{xu2020enhanced}). The $\mu_1$ and $\mu_2$ in the above equations are usually at the order of $10^{-5}$ to $10^{-6}$.} For \texttt{SGDA} and \texttt{SGDMSA}, we choose the same stepsize as \texttt{ZO-SGDA} and \texttt{ZO-SGDMSA} and set $|\mathcal{M}_1| = 1/\epsilon^2$ and $|\mathcal{M}_2| = 1/\epsilon^2$. We stop the iteration when $\|\nabla g(x_s)\|_2 \leq \epsilon$, based on our theoretical analysis. We test our algorithms on the following datasets from UCI ML-repository~\cite{Dua:2019} and LIBSVM~\cite{CC01a}: A9A \footnote{\url{https://www.csie.ntu.edu.tw/~cjlin/libsvmtools/datasets/binary/a9a}}, Mushroom \footnote{\url{https://archive.ics.uci.edu/ml/datasets/mushroom}},W8A \footnote{\url{https://www.csie.ntu.edu.tw/~cjlin/libsvmtools/datasets/binary/w8a}} and Colon-cancer gene expression dataset\footnote{\url{https://www.csie.ntu.edu.tw/~cjlin/libsvmtools/datasets/binary/colon-cancer.bz2}}. In order to perform distributionally robust optimization, we sample the dataset such that the positive and negative label ratio is 1:3. Details of these datasets are provided in Table \ref{tab:datasets}. All the experiments were run on Google Colab Python 3.5 Notebook. We also remark that we cannot compare empirically to~\cite{liu2019min} as they consider constrained minimax problems. In Figure~\ref{fig:mainfig}, we plot the value of the objective versus iteration number and the value of gradient size versus iteration number. We find that the proposed zeroth-order methods perform favorably to their respective first-order counterparts in terms of both the objective value and the norm of the gradient of the function $g$, as measured by iteration count. {It should be noted that to obtain this comparable behavior, the zeroth-order method uses a mini-batch of samples that is proportional to the dimension (recall our choice of $|\mathcal{M}_1|$ and $|\mathcal{M}_2|$) above) in each iteration, which results in the number of calls to the zeroth-order oracle of the order as illustrated in our theoretical results.}

\begin{table}[h!]
\centering{
\begin{tabular}{|l|l|l|l|}
\hline
Dataset & Samples & Features & T/F ratio \\
\hline
   A9A  &   200      &    123      &      
     1:3   \\
        \hline
    Mushroom   &    100     &    22      &   1:3        \\
    \hline
  W8A      &    100     &     300      &  1:3\\
  \hline
  Colon-cancer      &    200     &     500      &  1:3\\
  \hline        
\end{tabular}
\caption{Details of the datasets \cite{Dua:2019}.}\label{tab:datasets}
}
\end{table}

\begin{figure}[htbp]
\centering   
\includegraphics[scale=0.18]{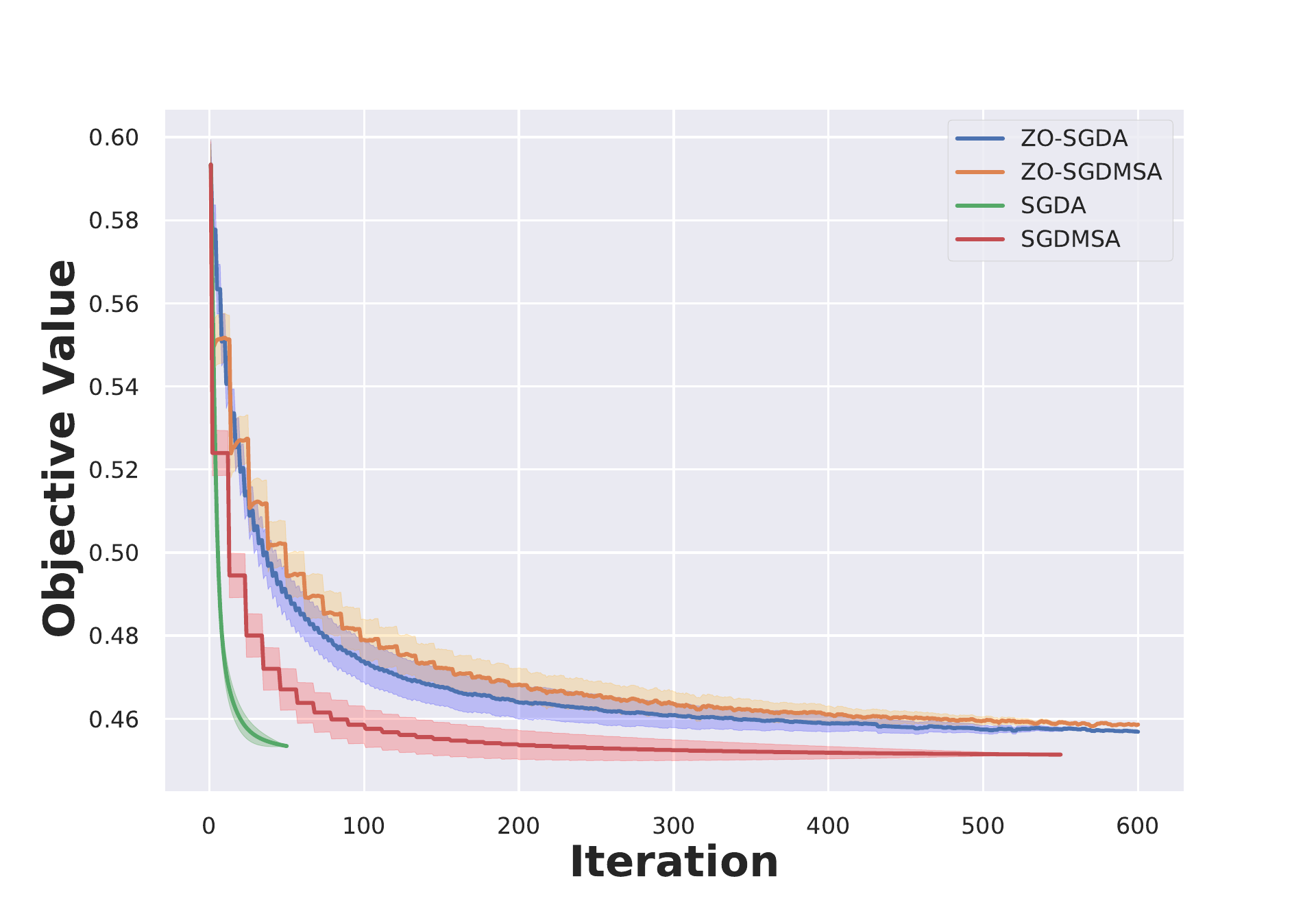}
\includegraphics[scale=0.18]{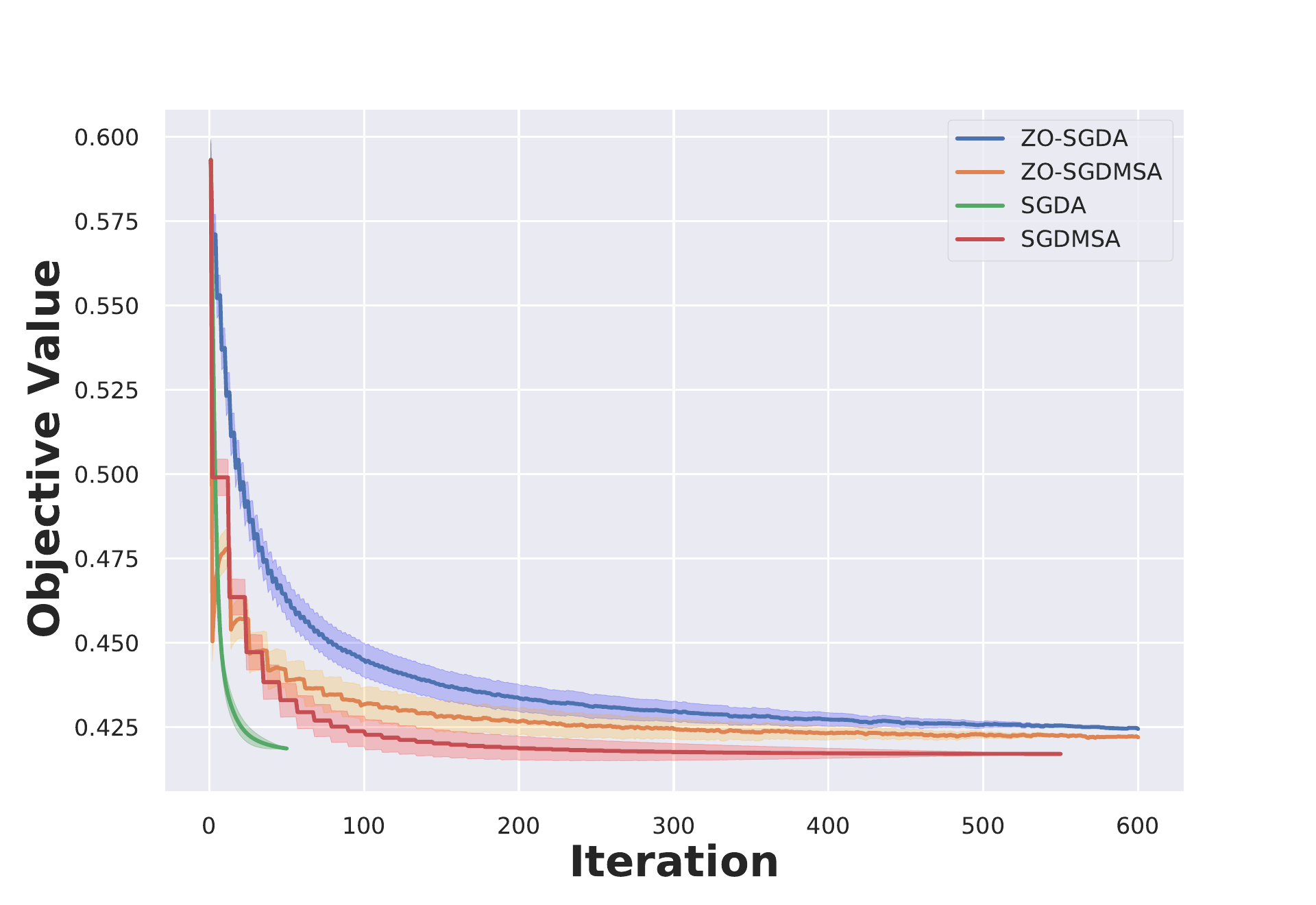}
\includegraphics[scale=0.18]{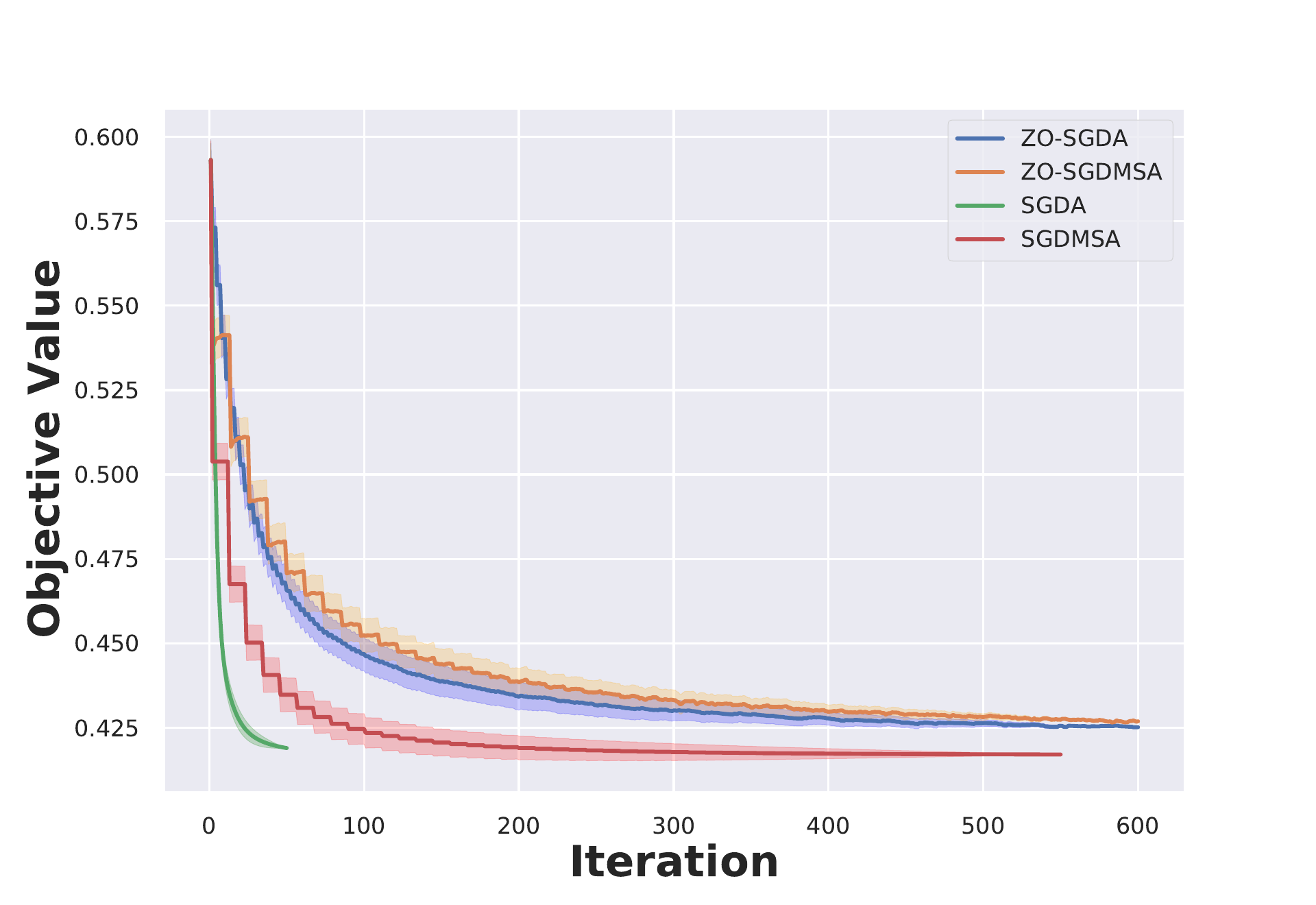}
\includegraphics[scale=0.18]{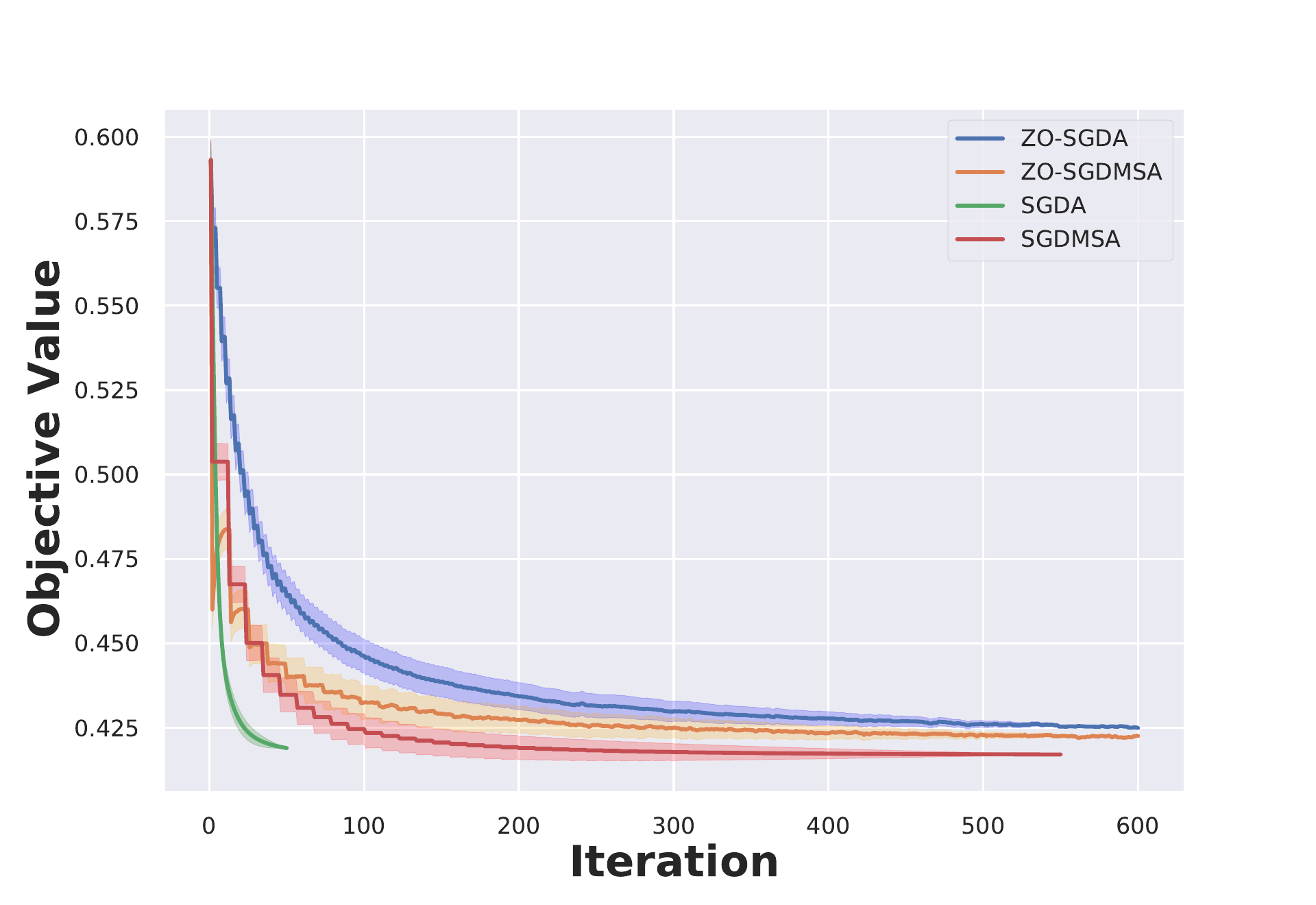}
\includegraphics[scale=0.18]{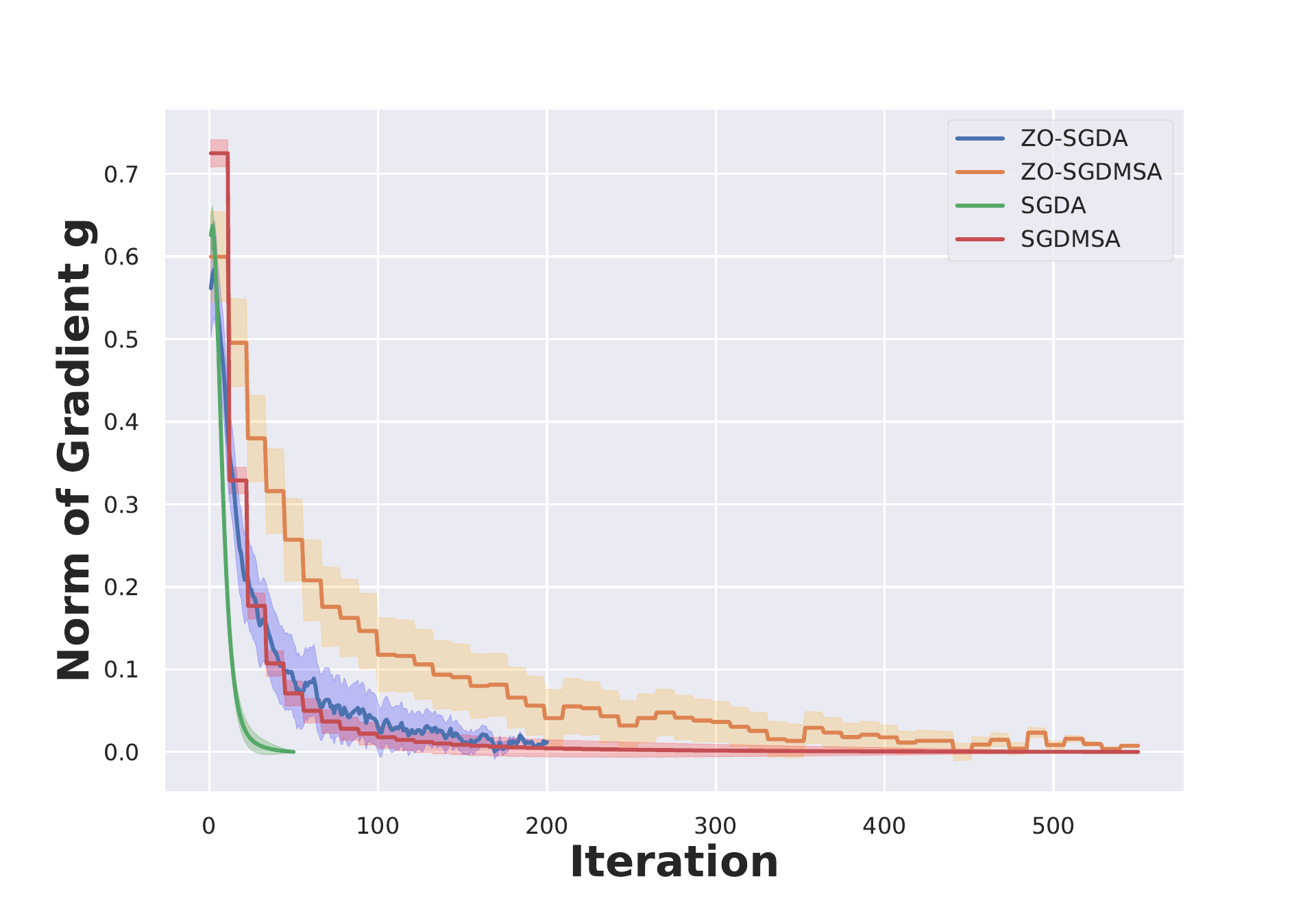}
\includegraphics[scale=0.18]{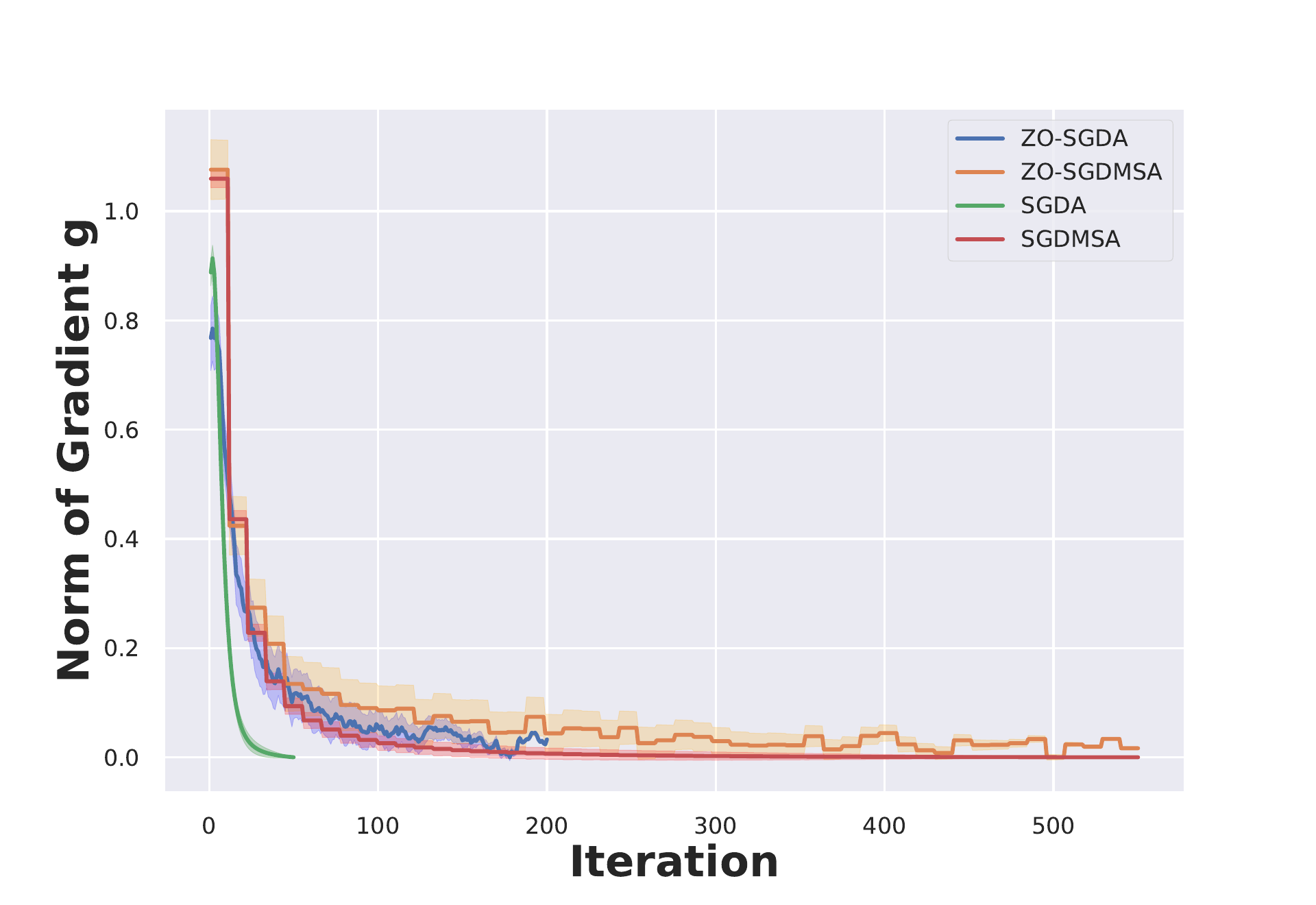}
\includegraphics[scale=0.18]{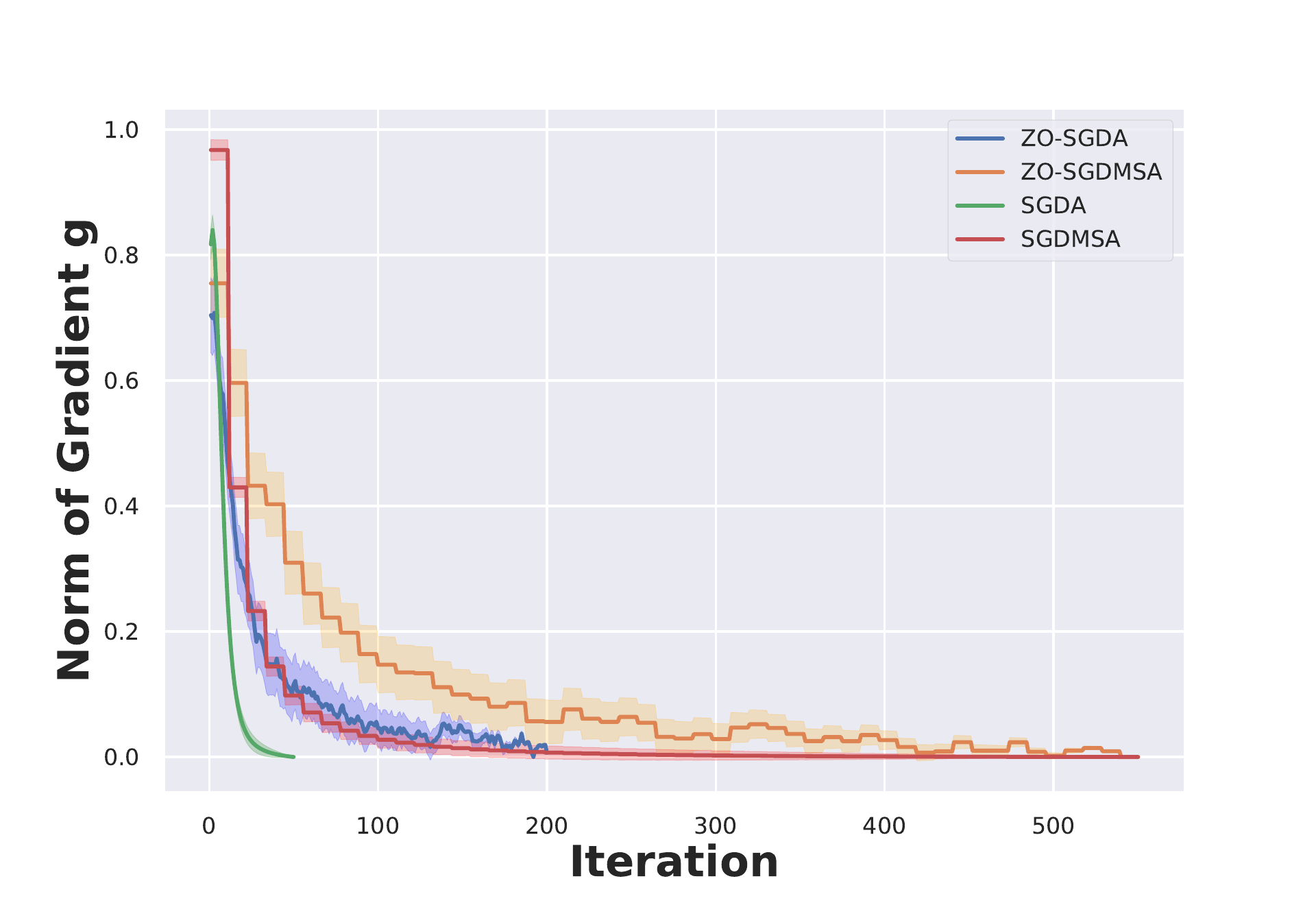}
\includegraphics[scale=0.18]{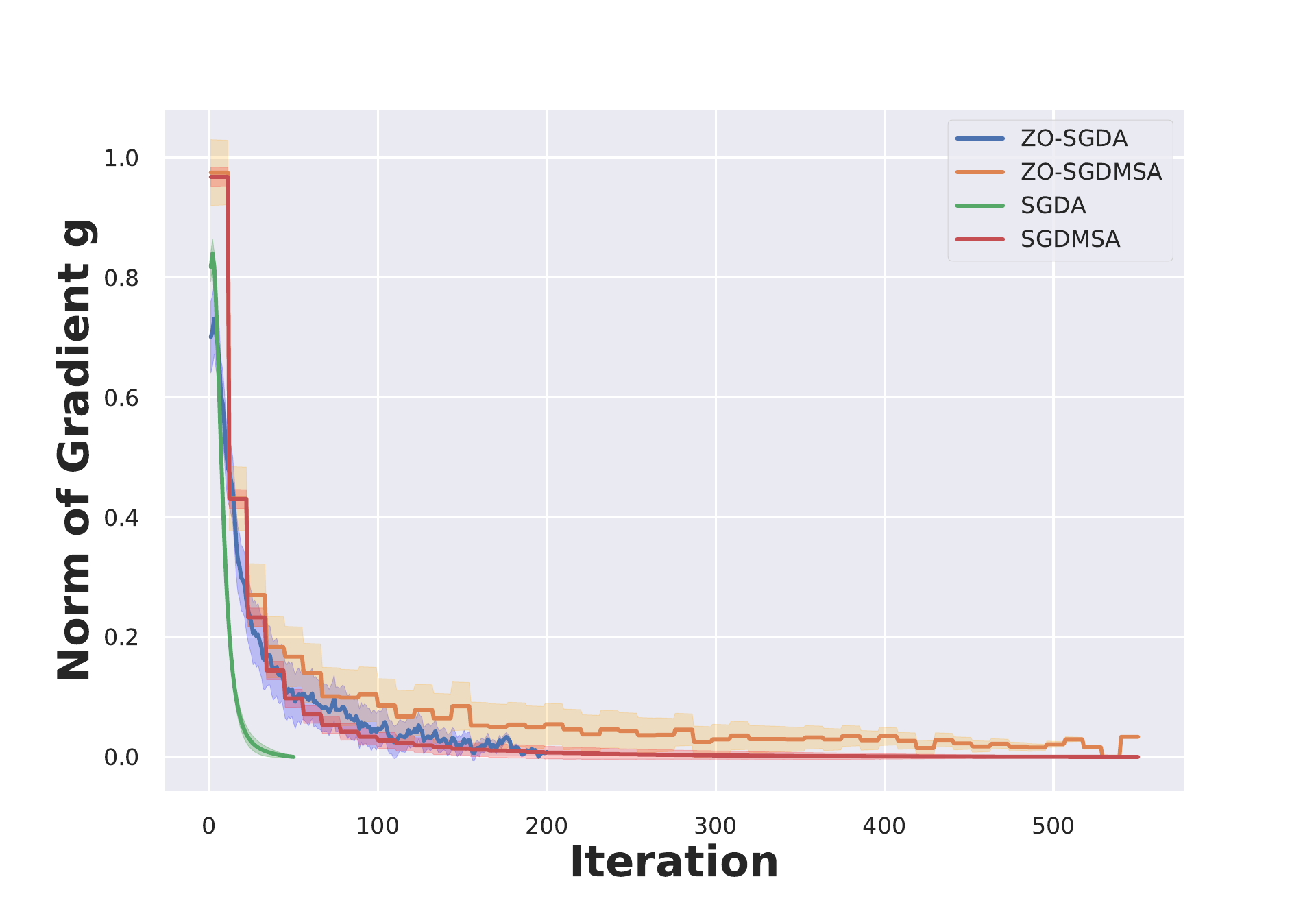}
\caption{Performance of \texttt{ZO-SGDA} and \texttt{ZO-SGDMSA} in comparison to their first-order counterparts. The results in the four rows respectively correpond to the following datasets: A9A dataset, Mushroom dataset, W8A dataset and Colon Cancer dataset. The results correspond to average over 500 trails. }
\label{fig:mainfig}
\end{figure}

\section{Conclusions}\label{discussion} 

In this paper, we designed and analyzed zeroth-order algorithms for deterministic and stochastic nonconvex minimax problems. Specifically, we considered two types of algorithms: zeroth-order gradient descent ascent algorithm and a modified version of it with multiple ascent steps following each descent step. We obtained oracle complexities for both algorithms that match the performance of comparable first-order algorithms, up to unavoidable dimensionality factors. Our orcale complexities are better than that of existing methods under the same assumptions. Future works include to explore lower bounds for zeroth-order nonconvex minimax optimization problems, and to explore structural constraints to obtain improved dimensionality dependence in our results. 

\section*{Acknowledgements}
A preliminary version (4 pages) of this paper appeared in the 2021 ICML workshop ``Beyond first-order methods in ML systems'' (\url{https://sites.google.com/view/optml-icml2021}). There is no proceeding for this workshop. K. Balasubramanian was supported in part by NSF Grant DMS-2053918 and UC Davis CeDAR (Center for Data Science and Artificial Intelligence Research) Innovative Data Science Seed Funding Program. S. Ma was supported in part by NSF grants DMS-1953210 and CCF-2007797, and UC Davis CeDAR Innovative Data Science Seed Funding Program. M. Razaviyayn was supported in part by the NSF CAREER Award CCF-2144985 and the AFOSR Young Investigator Program Award.

\bibliographystyle{alpha}
\bibliography{reference}   
\appendix

\section{Technical Preparations}

In this section we present some technical results that will be used in our subsequent convergence analysis. First, we need the follow elementary results regarding random variables.

\begin{lemma}\label{lemma:random-var}
\begin{itemize}
\item For i.i.d. random (vector) variables $\bX_i$, $i=1,\ldots,N$ with zero mean, we have $\bE(\|\frac{1}{N}\sum_{i=1}^N\bX_i\|_2^2) = \frac{1}{N}\bE (\|\bX_1\|_2^2)$.
\item For random (vector) variable $\bX$, we have $\bE (\|\bX - \bE \bX\|_2^2) = \bE(\|\bX\|_2^2) - (\|\bE\bX\|_2^2) \leq \bE(\|\bX\|_2^2)$ and $\|\bE\bX\|_2^2 \leq \bE (\|\bX\|_2^2)$.
\end{itemize}
\end{lemma}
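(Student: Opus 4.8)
The plan is to reduce both statements to the algebraic identity $\|\bv\|_2^2 = \langle \bv, \bv\rangle$ together with linearity of expectation, so that no probabilistic machinery beyond independence and the definition of the mean is needed. Both parts are essentially bookkeeping with inner products.

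For the first part, I would expand the squared norm of the average directly: writing $\|\frac{1}{N}\sum_{i=1}^N \bX_i\|_2^2 = \frac{1}{N^2}\sum_{i=1}^N\sum_{j=1}^N \langle \bX_i, \bX_j\rangle$ and taking expectation, linearity lets me treat each pair $(i,j)$ separately. For the off-diagonal terms $i \neq j$, independence gives $\bE\langle \bX_i, \bX_j\rangle = \langle \bE \bX_i, \bE \bX_j\rangle$, which vanishes because each $\bX_i$ has zero mean; this is the one place where both hypotheses (independence and zero mean) are genuinely used. For the diagonal terms $i = j$, the identical distribution assumption gives $\bE\|\bX_i\|_2^2 = \bE\|\bX_1\|_2^2$. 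There are exactly $N$ diagonal terms, so the double sum collapses to $\frac{1}{N^2}\cdot N \cdot \bE\|\bX_1\|_2^2 = \frac{1}{N}\bE\|\bX_1\|_2^2$.

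For the second part, I would perform the standard bias--variance expansion: $\bE\|\bX - \bE\bX\|_2^2 = \bE\big[\|\bX\|_2^2 - 2\langle \bX, \bE\bX\rangle + \|\bE\bX\|_2^2\big]$, and then pull the deterministic vector $\bE\bX$ out of the expectation so that the middle term becomes $-2\langle \bE\bX, \bE\bX\rangle = -2\|\bE\bX\|_2^2$ and the last term is unchanged. Combining gives $\bE\|\bX - \bE\bX\|_2^2 = \bE\|\bX\|_2^2 - \|\bE\bX\|_2^2$, which is the claimed identity. The two inequalities then follow immediately from nonnegativity: since $\|\bE\bX\|_2^2 \ge 0$, the identity yields $\bE\|\bX - \bE\bX\|_2^2 \le \bE\|\bX\|_2^2$; and since the left-hand side is itself nonnegative, the same identity rearranges to $\|\bE\bX\|_2^2 \le \bE\|\bX\|_2^2$ (equivalently, Jensen's inequality for the convex map $\bv \mapsto \|\bv\|_2^2$).

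There is no substantive obstacle here; the result is elementary and serves only as a building block for the later variance bounds. The only point requiring mild care is the justification that the cross terms in the first part vanish: this is exactly where I must invoke both independence (to factor the expectation of the inner product) and the zero-mean hypothesis (to kill the resulting product), and it is worth stating explicitly rather than folding it silently into the computation.
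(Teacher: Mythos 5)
Your proof is correct, and it is exactly the standard argument the paper implicitly relies on: the paper states this lemma without proof as an elementary fact, and your expansion of the double sum (with cross terms killed by independence plus zero mean) together with the bias--variance identity $\bE\|\bX-\bE\bX\|_2^2 = \bE\|\bX\|_2^2 - \|\bE\bX\|_2^2$ is the canonical way to fill it in. Your remark that the cross-term cancellation is the one step genuinely using both hypotheses is apt; one could add that only pairwise uncorrelatedness is actually needed in the first part, but nothing in the paper requires that refinement.
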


The following results regarding Lipschitz and strongly convex functions are also useful.
\begin{lemma} (Lemma 1.2.3, Theorem 2.1.8, Theorem 2.1.10 in \cite{NesterovConvexBook2004})\label{lemma:smooth-stronglyconvex}
\begin{itemize}
	\item Suppose a function $h$ is $L_h$ gradient-Lipschitz and has a unique maximizer $x^*$. Then, for any $x$, we have:
	\be\label{Lip-gradient} \frac{1}{2L_h}\|\nabla h(x)\|_2^2 \leq h(x^*) - h(x) \leq \frac{L_h}{2}\|x - x^*\|_2^2. \ee
    \item Suppose a function $h$ is $\tau_h$ strongly concave and has a unique maximizer $x^*$. Then, for any $x$, we have:
    \be\label{strongly-convex} \frac{\tau_h}{2}\|x - x^*\|_2^2\leq h(x^*) - h(x) \leq \frac{1}{2\tau_h}\|\nabla h(x)\|_2^2.\ee
\end{itemize}
\end{lemma}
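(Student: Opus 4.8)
The plan is to derive both displays purely from the quadratic upper and lower bounds that characterize gradient-Lipschitz and strongly concave functions, together with the first-order optimality condition $\nabla h(x^*)=0$ at the unique maximizer. Throughout I would use that $h$ being $L_h$ gradient-Lipschitz is equivalent to the two-sided descent inequality
\be
\bigl| h(y) - h(x) - \langle \nabla h(x), y-x\rangle \bigr| \leq \frac{L_h}{2}\|y-x\|_2^2 \quad \text{for all } x,y,
\ee
and that $h$ being $\tau_h$-strongly concave means $-h$ is $\tau_h$-strongly convex, i.e. $h(y) \leq h(x) + \langle \nabla h(x), y-x\rangle - \frac{\tau_h}{2}\|y-x\|_2^2$ for all $x,y$. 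Note that the first bullet uses only smoothness (no concavity is assumed), while the second uses only strong concavity (no smoothness).

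For the first bullet, I would obtain the upper bound $h(x^*) - h(x) \leq \frac{L_h}{2}\|x-x^*\|_2^2$ by applying the lower half of the descent inequality centered at $x^*$: since $\nabla h(x^*)=0$, this gives $h(x) \geq h(x^*) - \frac{L_h}{2}\|x-x^*\|_2^2$, which rearranges to the claim. For the lower bound $\frac{1}{2L_h}\|\nabla h(x)\|_2^2 \leq h(x^*) - h(x)$, I would take one conceptual gradient-ascent step: set $y = x + \frac{1}{L_h}\nabla h(x)$ and apply the lower half of the descent inequality centered at $x$, which simplifies to $h(y) \geq h(x) + \frac{1}{2L_h}\|\nabla h(x)\|_2^2$. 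Since $x^*$ is the global maximizer, $h(x^*) \geq h(y)$, and the bound follows.

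For the second bullet, the lower bound $\frac{\tau_h}{2}\|x-x^*\|_2^2 \leq h(x^*) - h(x)$ comes directly from the strong-concavity inequality centered at $x^*$ (again using $\nabla h(x^*)=0$), which yields $h(x) \leq h(x^*) - \frac{\tau_h}{2}\|x-x^*\|_2^2$. For the upper bound $h(x^*) - h(x) \leq \frac{1}{2\tau_h}\|\nabla h(x)\|_2^2$ --- a Polyak-\L{}ojasiewicz-type inequality --- I would maximize the right-hand side of the strong-concavity inequality centered at $x$ over all $y$; the unconstrained maximizer is $y = x + \frac{1}{\tau_h}\nabla h(x)$, giving $h(y) \leq h(x) + \frac{1}{2\tau_h}\|\nabla h(x)\|_2^2$ uniformly in $y$, and evaluating the left side at its own maximizer $y = x^*$ completes the argument.

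There is no genuine obstacle here: the statement is a standard adaptation of classical smooth and strongly-convex inequalities (Lemma 1.2.3 and Theorems 2.1.8, 2.1.10 of Nesterov) to the concave/maximization setting. The only point demanding care is the bookkeeping of signs and inequality directions when translating the convex-minimization results into the concave-maximization form used above; the one non-mechanical choice is the gradient-ascent surrogate point $y = x + \frac{1}{L_h}\nabla h(x)$ (respectively $y = x + \frac{1}{\tau_h}\nabla h(x)$), which converts the descent/strong-concavity estimate into a gradient-norm lower bound on $h(x^*)-h(x)$.
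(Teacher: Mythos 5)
Your proposal is correct and coincides with the paper's treatment: the paper gives no proof of this lemma, citing it directly from Nesterov's book (Lemma 1.2.3, Theorems 2.1.8 and 2.1.10), and your argument is precisely those classical convex-minimization bounds transposed to the maximization setting --- the quadratic upper/lower bounds centered at $x^*$ with $\nabla h(x^*)=0$, plus the gradient-step surrogate $y = x + \frac{1}{L_h}\nabla h(x)$ (respectively $y = x + \frac{1}{\tau_h}\nabla h(x)$) to extract the gradient-norm bounds. All four inequalities check out, so nothing further is needed.
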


The following lemmas are from existing literature and we omit their proofs.
\begin{lemma}(Lemma 4.3 in \cite{lin2019gradient})\label{lemma:smoothed convexity}
The function $g(\cdot) := \max_{y\in\cY} f(\cdot,y)$ is $L_g:= (\ell+\kappa\ell)$-smooth with $\nabla g(x)=\nabla_x f(x,y^*(x))$. Moreover, $y^*(x)=\argmax_{y\in\cY}f(\cdot,y)$ is $\kappa$-Lipschitz.
\end{lemma}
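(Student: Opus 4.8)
# Proof Proposal for Lemma~\ref{lemma:smoothed convexity}

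The plan is to establish the three claims in sequence: the gradient formula $\nabla g(x) = \nabla_x f(x, y^*(x))$, the $\kappa$-Lipschitz continuity of $y^*(\cdot)$, and finally the $L_g$-gradient-Lipschitz property of $g$. The natural starting point is Danskin's theorem. Since $f(x, \cdot)$ is $\tau$-strongly concave on the closed convex set $\cY$, for each fixed $x$ the maximizer $y^*(x) = \argmax_{y \in \cY} f(x, y)$ is unique, which is exactly the setting where Danskin's theorem applies. This yields that $g$ is differentiable with $\nabla g(x) = \nabla_x f(x, y^*(x))$, establishing the first claim directly.

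Next I would prove that $y^*(\cdot)$ is $\kappa$-Lipschitz, as this is the key ingredient for the gradient-Lipschitz bound. Fix two points $x_1, x_2 \in \br^{d_1}$ and write $y_1 = y^*(x_1)$, $y_2 = y^*(x_2)$. The strategy is to combine the first-order optimality conditions for the constrained maximization at each point with strong concavity. Because $y_i$ maximizes $f(x_i, \cdot)$ over the convex set $\cY$, the variational inequality $\langle \nabla_y f(x_i, y_i),\, y - y_i \rangle \leq 0$ holds for all $y \in \cY$; applying this with $y = y_{3-i}$ at both points and adding gives $\langle \nabla_y f(x_1, y_1) - \nabla_y f(x_2, y_2),\, y_1 - y_2 \rangle \geq 0$. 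Meanwhile, $\tau$-strong concavity of $f(x_2, \cdot)$ supplies a lower bound of the form $\langle \nabla_y f(x_2, y_1) - \nabla_y f(x_2, y_2),\, y_1 - y_2 \rangle \leq -\tau \|y_1 - y_2\|_2^2$. Combining these two inequalities isolates a cross term $\langle \nabla_y f(x_1, y_1) - \nabla_y f(x_2, y_1),\, y_1 - y_2 \rangle \geq \tau \|y_1 - y_2\|_2^2$, whose left side I bound above by the $\ell$-gradient-Lipschitz property~\eqref{assumption-Lips} and Cauchy--Schwarz as $\ell \|x_1 - x_2\|_2 \|y_1 - y_2\|_2$. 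Dividing through by $\|y_1 - y_2\|_2$ delivers $\|y_1 - y_2\|_2 \leq (\ell/\tau)\|x_1 - x_2\|_2 = \kappa \|x_1 - x_2\|_2$.

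The final claim follows by composing the gradient formula with the Lipschitz bound on $y^*$. Using $\nabla g(x_i) = \nabla_x f(x_i, y^*(x_i))$ and the triangle inequality,
\be
\|\nabla g(x_1) - \nabla g(x_2)\|_2 \leq \|\nabla_x f(x_1, y_1) - \nabla_x f(x_2, y_2)\|_2 \leq \ell \big(\|x_1 - x_2\|_2 + \|y_1 - y_2\|_2\big),
\ee
where the second inequality invokes~\eqref{assumption-Lips} (bounding the joint gradient difference by $\ell$ times the sum of the component distances, or equivalently $\ell\sqrt{2}$ times the joint norm). Substituting $\|y_1 - y_2\|_2 \leq \kappa\|x_1 - x_2\|_2$ gives the bound $(\ell + \kappa\ell)\|x_1 - x_2\|_2$, so $L_g = \ell + \kappa\ell$ as claimed.

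I expect the main obstacle to be the $\kappa$-Lipschitz step for $y^*$: the variational-inequality bookkeeping must be handled carefully because $\cY$ is a genuine constraint set, so one cannot simply set the gradient to zero but must instead work with the first-order optimality inequalities at both $y_1$ and $y_2$ and pair them correctly. The smoothness composition in the last step is routine, and the gradient formula is a direct citation of Danskin's theorem.
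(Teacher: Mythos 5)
Your proposal is correct and follows essentially the same route as the proof in the cited reference: the paper itself omits the proof of this lemma (it is quoted as \cite[Lemma 3.3]{lin2019gradient}), and the argument there is exactly yours --- uniqueness of $y^*(x)$ from $\tau$-strong concavity, $\kappa$-Lipschitzness of $y^*(\cdot)$ via the paired variational inequalities combined with strong monotonicity of $-\nabla_y f(x,\cdot)$ and the joint Lipschitz condition \eqref{assumption-Lips}, Danskin's theorem for $\nabla g(x)=\nabla_x f(x,y^*(x))$, and the final composition giving $L_g=(1+\kappa)\ell$. No gaps; the one step deserving care, the variational-inequality bookkeeping at both maximizers, is handled correctly in your write-up.
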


\begin{lemma}\cite{nesterov2017random}\label{lemma:f-mu-convex}
$f_{\mu}(x) = \bE_{\bu} f_{\mu}(x+\mu \bu)$ is a convex function, if $f(x)$ is convex.
\end{lemma}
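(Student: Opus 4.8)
The plan is to exploit the fact that $f_\mu$ is, by its very definition in~\eqref{def-f-mu}, an average (an expectation over the Gaussian vector $\bu$) of translates of $f$. The key observation is that for each fixed realization of $\bu$, the map $x \mapsto f(x + \mu \bu)$ is the composition of $f$ with the affine translation $x \mapsto x + \mu \bu$, and hence is convex whenever $f$ is convex. Since convexity is preserved under nonnegative averaging, the expectation of this family of convex functions should again be convex.

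Concretely, I would first fix two points $x_1, x_2$ in the domain and a scalar $\lambda \in [0,1]$, and record the pointwise identity $\lambda x_1 + (1-\lambda) x_2 + \mu \bu = \lambda(x_1 + \mu \bu) + (1-\lambda)(x_2 + \mu \bu)$, valid for every realization of $\bu$. Applying the convexity inequality for $f$ to the right-hand side gives, for each $\bu$,
\be
f\bigl(\lambda x_1 + (1-\lambda) x_2 + \mu \bu\bigr) \leq \lambda\, f(x_1 + \mu \bu) + (1-\lambda)\, f(x_2 + \mu \bu).
\ee
Taking the expectation $\bE_{\bu}$ of both sides, using linearity and monotonicity of expectation together with the definition $f_\mu(x) = \bE_{\bu} f(x + \mu \bu)$, yields $f_\mu(\lambda x_1 + (1-\lambda) x_2) \leq \lambda f_\mu(x_1) + (1-\lambda) f_\mu(x_2)$, which is precisely the convexity of $f_\mu$.

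There is no genuine obstacle here beyond bookkeeping: the only point requiring minor care is that the expectations are well-defined and finite, so that the inequality may legitimately be passed through $\bE_{\bu}$; under the smoothness assumptions already in force this is immediate, and monotonicity of expectation justifies the final step. An equivalent one-line route is to observe that $f_\mu$ is a convex combination — a Gaussian-weighted integral with nonnegative weights integrating to one — of the convex functions $x \mapsto f(x + \mu \bu)$, and such averages of convex functions are convex.
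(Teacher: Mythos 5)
Your proof is correct and is exactly the standard argument: the paper itself omits the proof, citing \cite[Section 2]{nesterov2017random}, where the same reasoning is used — for each fixed $\bu$ the translate $x \mapsto f(x+\mu\bu)$ is convex, and the Gaussian average of convex functions is convex. (Note in passing that the lemma statement contains a typo, $f_\mu(x) = \bE_{\bu} f_\mu(x+\mu\bu)$ should read $f_\mu(x) = \bE_{\bu} f(x+\mu\bu)$ as in \eqref{def-f-mu}; your proof correctly works with the latter definition.)
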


\begin{lemma}(Theorem 1 in \cite{nesterov2017random})\label{lemma:NS17-Eq19}
Under Assumption \ref{assumption:A1-A2-A4}, it holds that
\[
|f_{\mu_2}(x,y)-f(x,y)| \leq \frac{\mu_2^2}{2}\ell d_2, \forall x\in\br^{d_1},y\in\cY.
\]
\end{lemma}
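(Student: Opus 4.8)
The plan is to reduce the statement to the one-sided quadratic bounds implied by gradient-Lipschitz continuity and then integrate them against the Gaussian measure. First I would fix $x \in \br^{d_1}$ and work with the slice $h(y) := f(x,y)$. Since $f$ is $\ell$-gradient Lipschitz as a function on $\br^{d_1+d_2}$ by part 3 of Assumption~\ref{assumption:A1-A2-A4}, restricting the inequality \eqref{assumption-Lips} to perturbations in the $y$-coordinates shows that $\nabla_y f(x,\cdot) = \nabla h$ is $\ell$-Lipschitz, so $h$ is an $\ell$-gradient-Lipschitz function of $y$ alone.

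Next I would invoke the standard descent lemma for $\ell$-gradient-Lipschitz functions: for all $y, z \in \cY$,
\[
\bigl| h(z) - h(y) - \langle \nabla h(y), z - y\rangle\bigr| \leq \frac{\ell}{2}\|z-y\|_2^2 .
\]
This follows from integrating the gradient-Lipschitz inequality along the segment from $y$ to $z$, which yields matching quadratic upper and lower bounds. Specializing to $z = y + \mu_2 \bu_2$ gives the pointwise estimate
\[
\bigl| h(y + \mu_2 \bu_2) - h(y) - \mu_2\langle \nabla h(y), \bu_2\rangle\bigr| \leq \frac{\ell \mu_2^2}{2}\|\bu_2\|_2^2 ,
\]
valid for every realization of $\bu_2$.

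The final step is to take the expectation over $\bu_2 \sim N(0,\b1_{d_2})$. Recalling the definition $f_{\mu_2}(x,y) = \bE_{\bu_2} h(y+\mu_2\bu_2)$ from \eqref{def-f-mu} and that $\bE_{\bu_2}[\bu_2] = 0$, the linear term drops out, so
\[
f_{\mu_2}(x,y) - f(x,y) = \bE_{\bu_2}\bigl[ h(y + \mu_2 \bu_2) - h(y) - \mu_2\langle \nabla h(y), \bu_2\rangle\bigr] .
\]
Applying Jensen's inequality (moving the absolute value inside the expectation) together with the pointwise bound above gives $|f_{\mu_2}(x,y) - f(x,y)| \leq \frac{\ell \mu_2^2}{2}\,\bE_{\bu_2}\|\bu_2\|_2^2$, and the Gaussian second-moment identity $\bE_{\bu_2}\|\bu_2\|_2^2 = d_2$ finishes the argument, producing exactly $\frac{\mu_2^2}{2}\ell d_2$; since $x$ was arbitrary, the bound holds uniformly over $\br^{d_1}\times\cY$. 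The only place requiring care is the interaction between the zero-mean property of $\bu_2$ and the descent lemma: one must retain the linear correction term $\mu_2\langle \nabla h(y),\bu_2\rangle$ inside the absolute value before taking expectations, rather than bounding $|h(y+\mu_2\bu_2)-h(y)|$ directly, since the latter would only yield an $\cO(\mu_2)$ bound instead of the sharp $\cO(\mu_2^2)$ rate that the statement claims.
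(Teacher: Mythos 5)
Your proof is correct and is essentially the argument behind the cited result: the paper itself omits the proof, deferring to Theorem~1 of \cite{nesterov2017random}, whose derivation is exactly your route --- the quadratic (descent-lemma) bound from $\ell$-gradient Lipschitzness applied to the slice $f(x,\cdot)$, cancellation of the linear term via $\bE_{\bu_2}[\bu_2]=0$, Jensen's inequality, and $\bE_{\bu_2}\|\bu_2\|_2^2 = d_2$. Your closing remark about keeping the linear correction inside the absolute value before taking expectations is precisely the right point of care, so nothing is missing.
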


\begin{lemma}(Lemma 3 in \cite{nesterov2017random})\label{lemma:diff-f-fmu}
Under Assumption \ref{assumption:A1-A2-A4}, it holds that
\[\|\nabla_x f_{\mu_1}(x,y) - \nabla_x f(x,y)\|_2^2 \leq \frac{\mu_1^2}{4}\ell^2(d_1+3)^3, \ \|\nabla_y f_{\mu_2}(x,y) - \nabla_y f(x,y)\|_2^2 \leq \frac{\mu_2^2}{4}\ell^2(d_2+3)^3.
\]
\end{lemma}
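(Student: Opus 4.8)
The two inequalities are symmetric, so the plan is to establish the $x$-bound in full detail and to note that the $y$-bound follows verbatim upon replacing $(\mu_1,\bu_1,d_1,\nabla_x)$ by $(\mu_2,\bu_2,d_2,\nabla_y)$ while holding the complementary variable fixed. The key tool is the Stein-identity representation already recorded in the excerpt, namely $\bE_{\bu_1} G_{\mu_1}(x,y,\bu_1) = \nabla_x f_{\mu_1}(x,y)$, which lets me express the smoothed gradient as a finite-difference expectation rather than differentiating $f$ directly. This representation is what ultimately produces the extra power of the dimension in the bound, as opposed to the cruder linear-in-$d$ bound one gets by writing $\nabla_x f_{\mu_1}(x,y)=\bE_{\bu_1}[\nabla_x f(x+\mu_1\bu_1,y)]$.

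The heart of the argument is to rewrite the bias as a single expectation and to exploit $\bE_{\bu_1}[\bu_1\bu_1^\top] = \b1_{d_1}$, which gives $\bE_{\bu_1}[\langle \nabla_x f(x,y), \bu_1\rangle \bu_1] = \nabla_x f(x,y)$. Subtracting, I would obtain
\be
\nabla_x f_{\mu_1}(x,y) - \nabla_x f(x,y) = \bE_{\bu_1}\left[\left(\tfrac{f(x+\mu_1\bu_1,y)-f(x,y)}{\mu_1} - \langle \nabla_x f(x,y),\bu_1\rangle\right)\bu_1\right].
\ee
Taking norms and pushing them inside the expectation via Jensen's inequality reduces the task to controlling the scalar finite-difference residual. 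Since $f(\cdot, y)$ is $\ell$-gradient Lipschitz for fixed $y$ (a restriction of the joint $\ell$-smoothness in Assumption~\ref{assumption:A1-A2-A4} to the $x$-coordinates), the descent lemma yields $|f(x+\mu_1\bu_1,y)-f(x,y)-\mu_1\langle\nabla_x f(x,y),\bu_1\rangle| \leq \tfrac{\ell}{2}\mu_1^2\|\bu_1\|_2^2$, so the residual multiplied by $\|\bu_1\|_2$ is at most $\tfrac{\ell}{2}\mu_1\|\bu_1\|_2^3$.

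Combining these steps leaves $\|\nabla_x f_{\mu_1}(x,y) - \nabla_x f(x,y)\|_2 \leq \tfrac{\ell\mu_1}{2}\,\bE_{\bu_1}[\|\bu_1\|_2^3]$, and squaring this will produce the claimed $\tfrac{\mu_1^2}{4}\ell^2$ prefactor. The remaining, and really the only delicate, step is the Gaussian moment bound $\bE_{\bu_1}[\|\bu_1\|_2^3] \leq (d_1+3)^{3/2}$, which is precisely where the factor $(d_1+3)^3$ in the statement originates. I would obtain this from the general estimate $\bE_{\bu}[\|\bu\|_2^p] \leq (d+p)^{p/2}$ valid for $p \geq 2$ and $\bu \sim N(0,\b1_d)$, specialized to $p=3$. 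This is the main obstacle in the sense that it is the one non-elementary ingredient and must be handled with care, e.g. by integrating against the $\chi_d$ distribution or by bounding a suitable moment-generating function of $\|\bu\|_2^2$; everything preceding it is essentially bookkeeping built from the descent lemma and the second-moment identity of the standard Gaussian.
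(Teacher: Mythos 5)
Your proposal is correct, and it is essentially a reconstruction of the argument the paper defers to: the paper omits the proof and cites \cite[Lemma~3]{nesterov2017random}, whose proof proceeds exactly as you describe --- writing the bias as $\bE_{\bu}\bigl[\mu^{-1}\bigl(f(x+\mu\bu,y)-f(x,y)-\mu\langle\nabla_x f(x,y),\bu\rangle\bigr)\bu\bigr]$ via the second-moment identity, bounding the residual by the descent lemma, and invoking the Gaussian moment bound $\bE\|\bu\|_2^p\le(d+p)^{p/2}$ for $p\ge 2$ with $p=3$. No gaps; the only (shared) fine print is that the $y$-case implicitly uses $\ell$-smoothness of $f(x,\cdot)$ on all of $\mathbb{R}^{d_2}$, since $y+\mu_2\bu_2$ can leave $\mathcal{Y}$, exactly as in the paper's own use of the smoothed function.
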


\begin{lemma}(Lemma 4 in \cite{nesterov2017random})\label{lemma:diff-f-fmu-next}
Under Assumption \ref{assumption:A1-A2-A4}, it holds that
\[\|\nabla_x f(x,y)\|_2^2 \leq 2\|\nabla_x f_{\mu_1}(x,y)\|_2^2 + \ell^2\mu_1^2(d_1+3)^3/2.
\]
\end{lemma}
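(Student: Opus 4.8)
The plan is to derive this inequality directly from the approximation bound already recorded in Lemma~\ref{lemma:diff-f-fmu}, which controls the squared distance between the true partial gradient $\nabla_x f(x,y)$ and the smoothed partial gradient $\nabla_x f_{\mu_1}(x,y)$. The only additional ingredient is the elementary squared-triangle inequality $\|a\|_2^2 \leq 2\|b\|_2^2 + 2\|a-b\|_2^2$, which follows from writing $a = b + (a-b)$, applying the triangle inequality, and then using $(p+q)^2 \leq 2p^2 + 2q^2$ for nonnegative reals $p,q$.

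Concretely, I would set $a := \nabla_x f(x,y)$ and $b := \nabla_x f_{\mu_1}(x,y)$. The squared-triangle inequality then gives
\[
\|\nabla_x f(x,y)\|_2^2 \;\leq\; 2\|\nabla_x f_{\mu_1}(x,y)\|_2^2 + 2\|\nabla_x f(x,y) - \nabla_x f_{\mu_1}(x,y)\|_2^2 .
\]
For the last term, Lemma~\ref{lemma:diff-f-fmu} supplies $\|\nabla_x f(x,y) - \nabla_x f_{\mu_1}(x,y)\|_2^2 \leq \tfrac{\mu_1^2}{4}\ell^2(d_1+3)^3$, so multiplying by $2$ yields the bound $\tfrac{\mu_1^2}{2}\ell^2(d_1+3)^3 = \ell^2\mu_1^2(d_1+3)^3/2$. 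Substituting this in produces exactly the claimed inequality, and since both bounds hold for all $x\in\br^{d_1}$, $y\in\cY$ under Assumption~\ref{assumption:A1-A2-A4}, so does the conclusion.

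There is essentially no obstacle once Lemma~\ref{lemma:diff-f-fmu} is granted: the statement is a two-line consequence of that lemma combined with a universal algebraic inequality, with the factor of $2$ in front of $\|\nabla_x f_{\mu_1}\|_2^2$ being precisely what the squared-triangle inequality forces. If one instead wished to prove the statement from scratch without invoking Lemma~\ref{lemma:diff-f-fmu}, the genuine work would shift entirely to re-establishing that smoothing bound, i.e. controlling $\|\nabla_x f_{\mu_1} - \nabla_x f\|_2^2$ via the Gaussian-smoothing analysis of~\cite{nesterov2017random} under the $\ell$-gradient-Lipschitz assumption~\eqref{assumption-Lips}; that is where all the real content lies, and it is exactly what the cited lemma already handles.
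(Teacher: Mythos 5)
Your proof is correct: setting $a=\nabla_x f(x,y)$, $b=\nabla_x f_{\mu_1}(x,y)$ in $\|a\|_2^2\leq 2\|b\|_2^2+2\|a-b\|_2^2$ and invoking Lemma~\ref{lemma:diff-f-fmu} gives precisely the stated bound, constants included. The paper omits the proof and cites \cite[Lemma 4]{nesterov2017random}, whose own derivation is exactly this two-line argument (their Lemma~3 playing the role of Lemma~\ref{lemma:diff-f-fmu}), so your route coincides with the standard one.
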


\begin{lemma}(Theorem 4 in \cite{nesterov2017random})\label{lemma:determine g bound}
Under Assumptions \ref{assumption:A1-A2-A4} and \ref{assumption:stoch}, we have
\[
\bE_{\bu_1}\|G_{\mu_1}(x,y,\bu_1)\|_2^2 \leq 2(d_1+4)\|\nabla_x f(x,y)\|_2^2 + \mu_1^2\ell^2(d_1+6)^3/2,
\]
\[
\bE_{\bu_2}\|H_{\mu_2}(x,y,\bu_2)\|_2^2 \leq 2(d_2+4)\|\nabla_y f(x,y)\|_2^2 + \mu_2^2\ell^2(d_2+6)^3/2.
\]
\end{lemma}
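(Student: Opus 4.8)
The plan is to establish the bound for $G_{\mu_1}$ in the $x$-direction; the bound for $H_{\mu_2}$ then follows verbatim by applying the same argument to $f(x,\cdot)$ with $\mu_2,\bu_2,\nabla_y f$ and dimension $d_2$. The starting point is the definition $G_{\mu_1}(x,y,\bu_1) = \frac{f(x+\mu_1\bu_1,y)-f(x,y)}{\mu_1}\bu_1$, so that
\[
\|G_{\mu_1}(x,y,\bu_1)\|_2^2 = \Bigl(\tfrac{f(x+\mu_1\bu_1,y)-f(x,y)}{\mu_1}\Bigr)^2\|\bu_1\|_2^2 .
\]
The idea is to split the difference quotient into its linear (gradient) part and a quadratic remainder, control the remainder using the gradient-Lipschitz assumption, and then take the expectation over $\bu_1$ via standard Gaussian moment identities.

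First I would invoke part~3 of Assumption~\ref{assumption:A1-A2-A4}: since $f(\cdot,y)$ is $\ell$-gradient Lipschitz for every fixed $y$, the quadratic upper bound gives
\[
\bigl|f(x+\mu_1\bu_1,y)-f(x,y)-\mu_1\langle\nabla_x f(x,y),\bu_1\rangle\bigr| \leq \tfrac{\ell}{2}\mu_1^2\|\bu_1\|_2^2 .
\]
Writing the numerator as $\mu_1\langle\nabla_x f(x,y),\bu_1\rangle + r$ with $|r|\le \tfrac{\ell}{2}\mu_1^2\|\bu_1\|_2^2$ and applying $(a+b)^2\le 2a^2+2b^2$, I obtain the pointwise estimate
\[
\|G_{\mu_1}(x,y,\bu_1)\|_2^2 \leq 2\langle\nabla_x f(x,y),\bu_1\rangle^2\,\|\bu_1\|_2^2 + \tfrac{\ell^2}{2}\mu_1^2\|\bu_1\|_2^6 .
\]

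The next step is to take the expectation over $\bu_1\sim N(0,\b1_{d_1})$ term by term, which reduces the proof to two Gaussian moment computations. By rotational invariance of the standard Gaussian (reducing to the case $\nabla_x f(x,y)=\|\nabla_x f(x,y)\|_2\, e_1$) together with the identities $\bE[u_1^4]=3$ and $\bE[u_1^2 u_j^2]=1$ for $j\neq 1$, I get $\bE_{\bu_1}[\langle\nabla_x f(x,y),\bu_1\rangle^2\|\bu_1\|_2^2]=(d_1+2)\|\nabla_x f(x,y)\|_2^2$. Since $\|\bu_1\|_2^2$ is chi-squared with $d_1$ degrees of freedom, its third moment is $\bE_{\bu_1}[\|\bu_1\|_2^6]=d_1(d_1+2)(d_1+4)$. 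Substituting both yields
\[
\bE_{\bu_1}\|G_{\mu_1}(x,y,\bu_1)\|_2^2 \leq 2(d_1+2)\|\nabla_x f(x,y)\|_2^2 + \tfrac{\ell^2}{2}\mu_1^2\, d_1(d_1+2)(d_1+4).
\]
Finally I would coarsen the constants using $d_1+2\le d_1+4$ and $d_1(d_1+2)(d_1+4)\le (d_1+6)^3$ to recover exactly the stated bound.

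The routine-but-delicate part is the Gaussian moment bookkeeping — specifically, evaluating the mixed moment $\bE_{\bu_1}[\langle\nabla_x f,\bu_1\rangle^2\|\bu_1\|_2^2]$ and the sixth moment of the norm correctly, and then verifying that the constant coarsening ($d+2\le d+4$ and $d(d+2)(d+4)\le (d+6)^3$) is precisely what produces the published $(d+4)$ and $(d+6)^3$ factors. There is no genuine obstacle here: the only property of $f$ used is $\ell$-gradient-Lipschitzness, while strong concavity and boundedness of $\cY$ play no role in this particular estimate.
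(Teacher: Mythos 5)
Your proof is correct, and it is essentially the argument behind the cited result: the paper itself omits the proof, deferring to Theorem~4 of \cite{nesterov2017random}, whose proof uses exactly your decomposition (linear term plus quadratic remainder controlled by $\ell$-gradient-Lipschitzness, then Gaussian moment bounds). The only cosmetic difference is that you compute the moments exactly ($\bE[\langle g,\bu_1\rangle^2\|\bu_1\|_2^2]=(d_1+2)\|g\|_2^2$ and $\bE\|\bu_1\|_2^6=d_1(d_1+2)(d_1+4)$) and then coarsen, which yields a slightly sharper intermediate bound than Nesterov--Spokoiny's direct use of their moment estimates $\bE\|\bu\|_2^6\le(d_1+6)^3$; both routes give the stated inequality.
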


\begin{lemma}\cite{balasubramanian2018zeroth}\label{lemma:stochastic g bound}
Under Assumptions \ref{assumption:A1-A2-A4} and \ref{assumption:stoch}, we have
\[
\bE_{\bu_1,\xi}\|G_{\mu_1}(x,y,\bu_1,\xi)\|_2^2 \leq \frac{\mu_1^2\ell^2}{2}(d_1+6)^3 + 2\Bigl[\|\nabla_x f(x,y)\|_2^2 + \sigma_1^2\Bigr](d_1+4),
\]
\[
\bE_{\bu_2,\xi}\|H_{\mu_2}(x,y,\bu_2,\xi)\|_2^2 \leq \frac{\mu_2^2\ell^2}{2}(d_2+6)^3 + 2\Bigl[\|\nabla_y f(x,y)\|_2^2 + \sigma_2^2\Bigr](d_2+4).
\]
\end{lemma}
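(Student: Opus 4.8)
The plan is to reduce the stochastic second-moment bound to its deterministic counterpart (Lemma~\ref{lemma:determine g bound}) by conditioning on the stochastic sample $\xi$, and then to control the resulting gradient second moment via the bias--variance identity supplied by Lemma~\ref{lemma:random-var}. I focus on the bound for $G_{\mu_1}$; the one for $H_{\mu_2}$ is entirely symmetric, replacing $(d_1,\mu_1,\sigma_1,x)$ by $(d_2,\mu_2,\sigma_2,y)$.

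First I would fix $\xi$ and view $F(\cdot,y,\xi)$ as a deterministic function of $x$. Since $G_{\mu_1}(x,y,\bu_1,\xi)$ from~\eqref{def-G-mu-H-mu-sto} is exactly the Gaussian-smoothing gradient estimator of the form~\eqref{def-G-mu-H-mu} applied to this fixed-$\xi$ function, the deterministic estimate of Lemma~\ref{lemma:determine g bound} applies verbatim, yielding for each fixed $\xi$
\[
\bE_{\bu_1}\|G_{\mu_1}(x,y,\bu_1,\xi)\|_2^2 \leq 2(d_1+4)\|\nabla_x F(x,y,\xi)\|_2^2 + \frac{\mu_1^2\ell^2}{2}(d_1+6)^3.
\]
This step implicitly uses that each realization $F(\cdot,y,\xi)$ inherits the $\ell$-gradient-Lipschitz property, which is the standard reading of the smoothness hypothesis in the stochastic zeroth-order setting (consistent with the emphasis placed after the Contributions list); this is the one point to be flagged rather than derived, and it is the only genuinely delicate ingredient.

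Next I would take expectation over $\xi$. The additive smoothing-error term $\tfrac{\mu_1^2\ell^2}{2}(d_1+6)^3$ is deterministic and carries through unchanged, so it remains to bound $\bE_\xi\|\nabla_x F(x,y,\xi)\|_2^2$. Applying the bias--variance identity in the second bullet of Lemma~\ref{lemma:random-var} to the random vector $\bX=\nabla_x F(x,y,\xi)$ gives $\bE_\xi\|\nabla_x F\|_2^2 = \|\bE_\xi\nabla_x F\|_2^2 + \bE_\xi\|\nabla_x F - \bE_\xi\nabla_x F\|_2^2$. By Assumption~\ref{assumption:stoch}, unbiasedness makes the first term equal to $\|\nabla_x f(x,y)\|_2^2$ and the bounded-variance condition makes the second at most $\sigma_1^2$, so $\bE_\xi\|\nabla_x F(x,y,\xi)\|_2^2 \leq \|\nabla_x f(x,y)\|_2^2 + \sigma_1^2$.

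Combining the two displays through the tower property $\bE_{\bu_1,\xi}=\bE_\xi\bE_{\bu_1}$ yields
\[
\bE_{\bu_1,\xi}\|G_{\mu_1}(x,y,\bu_1,\xi)\|_2^2 \leq 2(d_1+4)\bigl[\|\nabla_x f(x,y)\|_2^2 + \sigma_1^2\bigr] + \frac{\mu_1^2\ell^2}{2}(d_1+6)^3,
\]
which is precisely the claimed inequality, and the $H_{\mu_2}$ bound follows by the identical argument. I do not expect any hard computation: once $F(\cdot,y,\xi)$ is treated as a smooth deterministic function the reduction to Lemma~\ref{lemma:determine g bound} is immediate, and the remaining work is the elementary variance decomposition of Lemma~\ref{lemma:random-var}.
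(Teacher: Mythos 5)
Your argument is correct and is essentially the proof of this lemma in the cited source \cite{balasubramanian2018zeroth} (the paper itself omits the proof): condition on $\xi$, apply the deterministic bound of Lemma~\ref{lemma:determine g bound} to the fixed-$\xi$ function $F(\cdot,y,\xi)$, then integrate out $\xi$ using the unbiasedness and variance conditions of Assumption~\ref{assumption:stoch} through the decomposition $\bE_\xi\|\nabla_x F\|_2^2=\|\nabla_x f\|_2^2+\bE_\xi\|\nabla_x F-\nabla_x f\|_2^2\leq\|\nabla_x f\|_2^2+\sigma_1^2$ from Lemma~\ref{lemma:random-var}. You were also right to flag the per-realization $\ell$-smoothness of $F(\cdot,y,\xi)$ as the delicate ingredient: this is exactly what the cited source assumes, and it is genuinely needed for the stated constants, since under the paper's literal Assumption~\ref{assumption:A1-A2-A4} (Lipschitz gradient of $f$ only) the noise contribution would instead be controlled via Cauchy--Schwarz against $\bE\|\bu_1\|_2^4$ and the $\sigma_1^2$ term would degrade from order $(d_1+4)\sigma_1^2$ to order $d_1^2\sigma_1^2$.
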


We now bound the size of the mini-batch zeroth-order gradient estimator \eqref{def-G-mu-H-mu-q}.
\begin{lemma}\label{lemma:Averaged Upper bound}
Under Assumption \ref{assumption:A1-A2-A4} and choosing $q_1 = 2(d_1+6)$, $q_2 = 2(d_2+6)$. For any $x\in\br^{d_1}, y\in\cY$, we have
\be\label{lemma:Averaged Upper bound-inequality}
\ba{ll}
\bE_{\bu_{1,[q_1]}}\| G_{\mu_1}(x,y,\bu_{1,[q_1]})\|_2^2 & \leq 3\|\nabla_x f(x,y)\|_2^2 + \mu_1^2\ell^2 (d_1+6)^3,\\
\bE_{\bu_{2,[q_2]}}\| H_{\mu_2}(x,y,\bu_{2,[q_2]})\|_2^2 & \leq 3\|\nabla_y f(x,y)\|_2^2 + \mu_2^2\ell^2 (d_2+6)^3.
\ea
\ee
\end{lemma}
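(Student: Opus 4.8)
The plan is to bound the second moment of the mini-batch estimator by combining the single-sample bound from Lemma~\ref{lemma:determine g bound} with the variance-reduction identity from Lemma~\ref{lemma:random-var}. Since the two inequalities in \eqref{lemma:Averaged Upper bound-inequality} are symmetric, I would prove only the bound for $G_{\mu_1}$; the bound for $H_{\mu_2}$ follows by the identical argument with subscripts $1 \to 2$ and $d_1 \to d_2$.

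First I would decompose the mini-batch estimator into its mean plus a zero-mean fluctuation. Write $\bar G := \bE_{\bu_1} G_{\mu_1}(x,y,\bu_1) = \nabla_x f_{\mu_1}(x,y)$ for the common mean of the i.i.d.\ samples $G_{\mu_1}(x,y,\bu_{1,i})$. Then
\be
\bE_{\bu_{1,[q_1]}}\|G_{\mu_1}(x,y,\bu_{1,[q_1]})\|_2^2 = \Bigl\| \bar G \Bigr\|_2^2 + \bE_{\bu_{1,[q_1]}}\Bigl\| \tfrac{1}{q_1}\sum_{i=1}^{q_1}\bigl(G_{\mu_1}(x,y,\bu_{1,i}) - \bar G\bigr)\Bigr\|_2^2,
\ee
using the bias--variance split from the second item of Lemma~\ref{lemma:random-var}. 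Applying the first item of Lemma~\ref{lemma:random-var} to the i.i.d.\ zero-mean summands, the second term equals $\tfrac{1}{q_1}\,\bE_{\bu_1}\|G_{\mu_1}(x,y,\bu_1) - \bar G\|_2^2 \le \tfrac{1}{q_1}\,\bE_{\bu_1}\|G_{\mu_1}(x,y,\bu_1)\|_2^2$, again by the second item of Lemma~\ref{lemma:random-var}. Meanwhile $\|\bar G\|_2^2 = \|\nabla_x f_{\mu_1}(x,y)\|_2^2 \le \bE_{\bu_1}\|G_{\mu_1}(x,y,\bu_1)\|_2^2$ by the same lemma, so altogether the left-hand side is at most $\bigl(1 + \tfrac{1}{q_1}\bigr)\bE_{\bu_1}\|G_{\mu_1}(x,y,\bu_1)\|_2^2$.

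Next I would substitute the single-sample bound from Lemma~\ref{lemma:determine g bound}, namely $\bE_{\bu_1}\|G_{\mu_1}(x,y,\bu_1)\|_2^2 \le 2(d_1+4)\|\nabla_x f(x,y)\|_2^2 + \mu_1^2\ell^2(d_1+6)^3/2$. With $q_1 = 2(d_1+6)$ we have $1 + 1/q_1 \le 3/2$ and the leading coefficient becomes $\bigl(1+\tfrac{1}{q_1}\bigr)\cdot 2(d_1+4)/q_1 = \bigl(1+\tfrac{1}{q_1}\bigr)(d_1+4)/(d_1+6)$, which is at most $3$; similarly the second term's coefficient $\bigl(1+\tfrac{1}{q_1}\bigr)\ell^2(d_1+6)^3/(2q_1)$ is at most $\ell^2(d_1+6)^2 \le \ell^2(d_1+6)^3$, giving exactly the claimed constants $3$ and $\mu_1^2\ell^2(d_1+6)^3$.

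The argument is essentially bookkeeping once the right decomposition is in place, so there is no deep obstacle. The one point requiring care is the constant-chasing at the end: one must verify that the choice $q_1 = 2(d_1+6)$ simultaneously makes both coefficients fall below the target values $3$ and $\mu_1^2\ell^2(d_1+6)^3$. The mild slack (for instance bounding $(d_1+4)/(d_1+6) \le 1$ and $1+1/q_1 \le 3/2$) is exactly what the batch size $q_1 = 2(d_1+6)$ is engineered to provide, so the clean constants emerge without further tuning.
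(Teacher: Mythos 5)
Your decomposition into mean plus zero-mean fluctuation, and the variance-reduction step giving $\tfrac{1}{q_1}\bE_{\bu_1}\|G_{\mu_1}(x,y,\bu_1)-\bar G\|_2^2$, match the paper's proof exactly; the gap is in how you handle the mean term. You bound $\|\bar G\|_2^2=\|\nabla_x f_{\mu_1}(x,y)\|_2^2\leq \bE_{\bu_1}\|G_{\mu_1}(x,y,\bu_1)\|_2^2$ by Jensen, arriving at the intermediate bound $\bigl(1+\tfrac{1}{q_1}\bigr)\bE_{\bu_1}\|G_{\mu_1}(x,y,\bu_1)\|_2^2$. This is too lossy: by Lemma~\ref{lemma:determine g bound} the single-sample second moment carries the dimension factor $2(d_1+4)$ on $\|\nabla_x f(x,y)\|_2^2$, and that factor is intrinsic to the Gaussian smoothing estimator, not an artifact of the bound (for linear $f$ with gradient $a$ one computes $\bE_{\bu}\|(a^\top \bu)\bu\|_2^2=(d_1+2)\|a\|_2^2$). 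Hence $\bigl(1+\tfrac{1}{q_1}\bigr)\bE_{\bu_1}\|G_{\mu_1}\|_2^2$ can be of order $d_1\|\nabla_x f(x,y)\|_2^2$ and cannot be dominated by $3\|\nabla_x f(x,y)\|_2^2$ for large $d_1$. Your constant-chasing conceals this through an algebra slip: you write the leading coefficient as $\bigl(1+\tfrac{1}{q_1}\bigr)\cdot 2(d_1+4)/q_1$, but the division by $q_1$ attaches only to the fluctuation term; the mean contribution enters with the full, undivided coefficient $\bigl(1+\tfrac{1}{q_1}\bigr)\cdot 2(d_1+4)$, which is not at most $3$. (Incidentally, only the gradient term fails on your route: the smoothing term would survive, since $\bigl(1+\tfrac{1}{q_1}\bigr)\mu_1^2\ell^2(d_1+6)^3/2\leq \mu_1^2\ell^2(d_1+6)^3$.)

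The repair is exactly the paper's treatment of the mean term: do not bound $\|\nabla_x f_{\mu_1}(x,y)\|_2^2$ back up to the single-sample second moment, but instead use Young's inequality together with the smoothing-bias bound of Lemma~\ref{lemma:diff-f-fmu},
\be
\|\nabla_x f_{\mu_1}(x,y)\|_2^2 \leq 2\|\nabla_x f(x,y)\|_2^2 + 2\|\nabla_x f_{\mu_1}(x,y)-\nabla_x f(x,y)\|_2^2 \leq 2\|\nabla_x f(x,y)\|_2^2 + \frac{\mu_1^2\ell^2(d_1+3)^3}{2},
\ee
which has a dimension-free coefficient on the gradient. Then the factor $2(d_1+4)$ appears only inside the fluctuation term, where it is genuinely divided by $q_1=2(d_1+6)$, giving $\tfrac{2(d_1+4)}{q_1}=\tfrac{d_1+4}{d_1+6}\leq 1$ and total gradient coefficient $2+1=3$, while the two smoothing remainders $\tfrac{\mu_1^2\ell^2(d_1+3)^3}{2}+\tfrac{\mu_1^2\ell^2(d_1+6)^3}{2q_1}$ sum to at most $\mu_1^2\ell^2(d_1+6)^3$. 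The conceptual point your proposal misses is that batch averaging can only shrink the variance, never the mean; the mean must be controlled by the smoothing bias, which is why Lemma~\ref{lemma:diff-f-fmu} is indispensable here.
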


\begin{proof}
Since $\bE_{\bu_{1,[q_1]}} G_{\mu_1}(x,y,\bu_{1,[q_1]})= \nabla_x f_{\mu_1}(x,y)$, we have
\[\ba{ll}
     & \bE_{\bu_{1,[q_1]}}\|G_{\mu_1}(x,y,\bu_{1,[q_1]})\|_2^2 \\
=    & \bE_{\bu_{1,[q_1]}}\|G_{\mu_1}(x,y,\bu_{1,[q_1]}) - \nabla_x f_{\mu_1}(x,y)\|_2^2 + \|\nabla_x f_{\mu_1}(x,y)\|_2^2 \\
=    & \frac{1}{q_1}\bE_{\bu_1}\|G_{\mu_1}(x,y,\bu_1) - \nabla_x f_{\mu_1}(x,y)\|_2^2 + \|\nabla_x f_{\mu_1}(x,y)\|_2^2 \\
\leq & \frac{1}{q_1}\bE_{\bu_1}\|G_{\mu_1}(x,y,\bu_1)\|_2^2 + 2\|\nabla_x f(x,y)\|_2^2 + 2\|\nabla_x f_{\mu_1}(x,y) - \nabla_x f(x,y)\|_2^2\\
\leq & \frac{2(d_1+4)}{q_1}\|\nabla_x f(x,y)\|_2^2 + 2\|\nabla_x f(x,y)\|_2^2 + \frac{\ell^2\mu_1^2(d_1+3)^3}{2} + \frac{\mu_1^2\ell^2 (d_1+6)^3}{2q_1},
\ea\]
where the second equality is due to Lemma \ref{lemma:random-var}, and the last inequality is due to Lemma \ref{lemma:determine g bound}. Thus, the first inequality in \eqref{lemma:Averaged Upper bound-inequality} is obtained by noting $q_1 = 2(d_1+6)$. The other inequality can be proved similarly and we omit the details for succinctness.
\end{proof}

A similar result can be obtained for the stochastic zeroth-order gradient estimator \eqref{def-G-mu-H-mu-sto-M}.
\begin{lemma}\label{lemma:mini batch upper bound}
Under Assumptions \ref{assumption:A1-A2-A4} and \ref{assumption:stoch}, for given tolerance $\epsilon\in(0,1)$, by choosing $|\cM_1| = 4(d_1+6)(\sigma_1^2+1)\epsilon^{-2}$, $|\cM_2| = 4(d_2+6)(\sigma_2^2+1)\epsilon^{-2}$, for any $x\in\br^{d_1}$, $y\in\cY$, we have:
\be\label{lemma:mini batch upper bound-inequality}
\ba{ll}
\bE_{\bu_{\cM_1},\xi_{\cM_1}}\|G_{\mu_1}(x,y,\bu_{\cM_1},\xi_{\cM_1})\|_2^2 &\leq 3\|\nabla_x f(x,y)\|_2^2 +\varrho_1(\epsilon,\mu_1),\\
\bE_{\bu_{\cM_2},\xi_{\cM_2}}\|H_{\mu_2}(x,y,\bu_{\cM_2},\xi_{\cM_2})\|_2^2 &\leq 3\|\nabla_y f(x,y)\|_2^2 +\varrho_2(\epsilon,\mu_2).
\ea\ee
where $\varrho_1(\epsilon,\mu_1):=\epsilon^2/2 + \mu_1^2 \ell^2 (d_1+3)^3/2 + \mu_1^2\ell^2(d_1+6)^2\epsilon^2/8$, and $\varrho_2(\epsilon,\mu_2):=\epsilon^2/2 + \mu_2^2 \ell^2 (d_2+3)^3/2 + \mu_2^2\ell^2(d_2+6)^2\epsilon^2/8$.
\end{lemma}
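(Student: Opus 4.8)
The plan is to mirror the structure of the proof of Lemma~\ref{lemma:Averaged Upper bound}, replacing the deterministic zeroth-order bound (Lemma~\ref{lemma:determine g bound}) with its stochastic counterpart (Lemma~\ref{lemma:stochastic g bound}) and tracking how the $\sigma_i^2$ terms propagate through the mini-batch averaging. I will prove only the first inequality in~\eqref{lemma:mini batch upper bound-inequality}; the second follows by the identical argument with the roles of the $x$ and $y$ variables interchanged.

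First I would use the bias-variance decomposition. Since $\bE_{\bu_{\cM_1},\xi_{\cM_1}} G_{\mu_1}(x,y,\bu_{\cM_1},\xi_{\cM_1}) = \nabla_x f_{\mu_1}(x,y)$ by the unbiasedness property established after~\eqref{def-G-mu-H-mu-sto-M}, I can write
\[
\bE\|G_{\mu_1}(x,y,\bu_{\cM_1},\xi_{\cM_1})\|_2^2 = \bE\|G_{\mu_1}(x,y,\bu_{\cM_1},\xi_{\cM_1}) - \nabla_x f_{\mu_1}(x,y)\|_2^2 + \|\nabla_x f_{\mu_1}(x,y)\|_2^2.
\]
Because the mini-batch estimator is an average of $|\cM_1|$ i.i.d.\ summands (each indexed by an independent pair $(\bu_{1,i},\xi_i)$), the first part of Lemma~\ref{lemma:random-var} reduces the variance term to $\tfrac{1}{|\cM_1|}\bE_{\bu_1,\xi}\|G_{\mu_1}(x,y,\bu_1,\xi) - \nabla_x f_{\mu_1}(x,y)\|_2^2$, which by the second part of Lemma~\ref{lemma:random-var} is at most $\tfrac{1}{|\cM_1|}\bE_{\bu_1,\xi}\|G_{\mu_1}(x,y,\bu_1,\xi)\|_2^2$. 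Now I invoke Lemma~\ref{lemma:stochastic g bound} to bound this single-sample second moment by $\tfrac{\mu_1^2\ell^2}{2}(d_1+6)^3 + 2[\|\nabla_x f(x,y)\|_2^2 + \sigma_1^2](d_1+4)$.

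Next I would handle the $\|\nabla_x f_{\mu_1}(x,y)\|_2^2$ term exactly as in Lemma~\ref{lemma:Averaged Upper bound}: apply $\|a\|^2 \le 2\|a-b\|^2 + 2\|b\|^2$ with $b = \nabla_x f(x,y)$ and then Lemma~\ref{lemma:diff-f-fmu} to get $\|\nabla_x f_{\mu_1}\|_2^2 \le 2\|\nabla_x f\|_2^2 + \tfrac{\mu_1^2\ell^2(d_1+3)^3}{2}$. Collecting everything, the coefficient of $\|\nabla_x f(x,y)\|_2^2$ becomes $\tfrac{2(d_1+4)}{|\cM_1|} + 2$, and the choice $|\cM_1| = 4(d_1+6)(\sigma_1^2+1)\epsilon^{-2}$ makes the first term at most $1$ (since $d_1+4 \le d_1+6$ and $\sigma_1^2+1\ge 1$, $\epsilon<1$), yielding the claimed factor $3$. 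The residual terms not multiplying $\|\nabla_x f\|_2^2$ are $\tfrac{2(d_1+4)\sigma_1^2}{|\cM_1|}$, $\tfrac{\mu_1^2\ell^2(d_1+6)^3}{2|\cM_1|}$, and $\tfrac{\mu_1^2\ell^2(d_1+3)^3}{2}$; substituting the batch size and simplifying (using $\sigma_1^2/(\sigma_1^2+1)\le 1$ and $(d_1+6)^3/[(d_1+6)(\sigma_1^2+1)] \le (d_1+6)^2$) should reproduce exactly $\varrho_1(\epsilon,\mu_1) = \epsilon^2/2 + \mu_1^2\ell^2(d_1+3)^3/2 + \mu_1^2\ell^2(d_1+6)^2\epsilon^2/8$.

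The argument is essentially bookkeeping, so the main obstacle is not any conceptual difficulty but rather verifying that the specific batch size $|\cM_1|$ is chosen so that each stray constant lands precisely in the stated $\varrho_1$; in particular one must check that the variance-induced $\sigma_1^2$ contribution collapses to the $\epsilon^2/2$ term and that the $\mu_1^2$-factors recombine with the correct dimension powers. I would carry out this matching of constants carefully, as the appearance of both $(d_1+3)^3$ and $(d_1+6)^2$ (rather than matching exponents) signals that the two $\mu_1^2$ sources are kept separate by design.
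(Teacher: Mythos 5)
Your proposal is correct and follows essentially the same route as the paper's own proof: the bias--variance decomposition via unbiasedness, reduction of the variance term by the i.i.d.\ averaging property of Lemma~\ref{lemma:random-var}, the single-sample second-moment bound of Lemma~\ref{lemma:stochastic g bound}, and the smoothing bias bound of Lemma~\ref{lemma:diff-f-fmu}, followed by substituting $|\cM_1| = 4(d_1+6)(\sigma_1^2+1)\epsilon^{-2}$. Your explicit constant-matching (the coefficient $\tfrac{2(d_1+4)}{|\cM_1|}+2\leq 3$, the $\sigma_1^2$ term collapsing to $\epsilon^2/2$, and $(d_1+6)^3/|\cM_1|$ yielding the $(d_1+6)^2\epsilon^2/8$ factor) is accurate and in fact more detailed than the paper, which leaves the final substitution implicit.
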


\begin{proof}
Since $\bE_{\xi_{\cM_1},\bu_{\cM_1}}G_{\mu_1}(x,y,\bu_{\cM_1},\xi_{\cM_1}) = \nabla_x f_{\mu_1}(x,y)$, we have
\[\ba{ll}
     & \bE_{\xi_{\cM_1},\bu_{\cM_1}}\|G_{\mu_1}(x,y,\bu_{\cM_1},\xi_{\cM_1})\|_2^2\\
= &~\bE_{\xi_{\cM_1},\bu_{\cM_1}}\|G_{\mu_1}(x,y,\bu_{\cM_1},\xi_{\cM_1}) - \nabla_x f_{\mu_1}(x,y)\|_2^2 + \|\nabla_x f_{\mu_1}(x,y)\|_2^2\\
= &~\frac{1}{|\cM_1|}\bE_{\xi_1,\bu_1}\|G_{\mu_1}(x,y,\bu_1,\xi)\|_2^2  +\|\nabla_x f_{\mu_1}(x,y)\|_2^2\\
\leq &~\frac{1}{|\cM_1|}\Bigl[\frac{\mu_1^2L_1^2}{2}(d_1+6)^3 + 2\Bigl[\|\nabla_x f(x,y)\|_2^2 + \sigma_1^2\Bigr](d_1+4)\Bigr] + 2\|\nabla_x f(x,y)\|_2^2 + \mu_1^2\ell^2(d_1+3)^3/2\\
\leq &~\frac{2(d_1+4)}{|\cM_1|}\|\nabla_x f(x,y)\|_2^2 + 2\|\nabla_x f(x,y)\|_2^2 + \frac{2(d_1+4)\sigma_1^2}{|\cM_1|} + \mu_1^2\ell^2(d_1+3)^3/2 + \frac{\mu_1^2\ell^2}{2|\cM_1|}(d_1+6)^3,
\ea\]
where the second equality is due to Lemma \ref{lemma:random-var}, the first inequality is due to Lemma \ref{lemma:stochastic g bound} and \ref{lemma:diff-f-fmu}. Substituting $|\cM_1| = 4(d_1+6)(\sigma_1^2+1)\epsilon^{-2}$
proves the first inequality in \eqref{lemma:mini batch upper bound-inequality}. The other inequality can be proved similarly and we omit the details for succinctness.
\end{proof}


The following result shows that $\nabla_x f_{\mu_1}(x,y)$ is Lipschitz continuous with respect to $y$.
\begin{lemma}\label{lemma:smooth of smoothed function}
Under Assumption \ref{assumption:A1-A2-A4}, for any $x\in\br^{d_1}$, $y_1,y_2\in\cY$, it holds that
\[
\|\nabla_x f_{\mu_1}(x,y_1) - \nabla_x f_{\mu_1}(x,y_2)\|_2 \leq \ell\|y_1-y_2\|_2.
\]
\end{lemma}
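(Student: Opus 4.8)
The plan is to start from the integral (expectation) representation of the Gaussian-smoothed gradient and push the whole argument down to the pointwise gradient-Lipschitz property of $f$ supplied by the third part of Assumption~\ref{assumption:A1-A2-A4}. Since $f_{\mu_1}(x,y)=\bE_{\bu_1}f(x+\mu_1\bu_1,y)$, the first step is to differentiate under the expectation in the $x$-variable, which is justified by the smoothness of $f$ together with the Gaussian density (standard dominated-convergence argument, as already used implicitly in~\cite{nesterov2017random}), yielding $\nabla_x f_{\mu_1}(x,y)=\bE_{\bu_1}\bigl[\nabla_x f(x+\mu_1\bu_1,y)\bigr]$.

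Next I would write the difference of interest as a single expectation,
\[
\nabla_x f_{\mu_1}(x,y_1)-\nabla_x f_{\mu_1}(x,y_2)
=\bE_{\bu_1}\bigl[\nabla_x f(x+\mu_1\bu_1,y_1)-\nabla_x f(x+\mu_1\bu_1,y_2)\bigr],
\]
and then apply Jensen's inequality (convexity of $\|\cdot\|_2$) to bring the norm inside the expectation:
\[
\|\nabla_x f_{\mu_1}(x,y_1)-\nabla_x f_{\mu_1}(x,y_2)\|_2
\leq \bE_{\bu_1}\bigl\|\nabla_x f(x+\mu_1\bu_1,y_1)-\nabla_x f(x+\mu_1\bu_1,y_2)\bigr\|_2.
\]

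The key pointwise bound is then obtained by observing that $\nabla_x f$ is a sub-block of the full gradient $\nabla f$ viewed as a map on $\br^{d_1+d_2}$, so $\|\nabla_x f(x',y_1)-\nabla_x f(x',y_2)\|_2\le \|\nabla f(x',y_1)-\nabla f(x',y_2)\|_2$ for $x'=x+\mu_1\bu_1$. Applying the $\ell$-gradient-Lipschitz inequality~\eqref{assumption-Lips} to the two points $(x',y_1)$ and $(x',y_2)$, whose difference is $(0,\,y_1-y_2)$, gives the inner bound $\ell\|y_1-y_2\|_2$, uniformly in $\bu_1$. Substituting this into the expectation and noting it is independent of $\bu_1$ yields the claimed inequality $\|\nabla_x f_{\mu_1}(x,y_1)-\nabla_x f_{\mu_1}(x,y_2)\|_2\le \ell\|y_1-y_2\|_2$.

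There is no serious obstacle here; the argument is essentially routine once differentiation and expectation are interchanged. The only point that needs a word of care is the passage from the \emph{joint} gradient-Lipschitz constant of $f$ on $\br^{d_1+d_2}$ to a Lipschitz estimate for the \emph{partial} gradient $\nabla_x f$ in the $y$-argument alone; this is handled simply by the block-norm domination $\|\nabla_x f\|_2\le\|\nabla f\|_2$ together with the fact that the perturbation lives only in the $y$-coordinate, so no extra constant is incurred. Rigor for the differentiation-under-the-expectation step can be imported verbatim from~\cite{nesterov2017random}.
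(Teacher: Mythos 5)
Your proposal is correct and follows essentially the same route as the paper's proof: differentiate under the Gaussian expectation to write $\nabla_x f_{\mu_1}(x,y)=\bE_{\bu_1}[\nabla_x f(x+\mu_1\bu_1,y)]$, apply Jensen's inequality, and invoke the $\ell$-gradient-Lipschitz bound from Assumption~\ref{assumption:A1-A2-A4}. You merely spell out two points the paper leaves implicit --- the block-norm domination $\|\nabla_x f\|_2\le\|\nabla f\|_2$ with the perturbation confined to the $y$-coordinate, and the justification for exchanging differentiation and expectation --- which is a welcome bit of added rigor, not a different argument.
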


\begin{proof}
Following the definition of $f_{\mu_1}$, Assumption \ref{assumption:A1-A2-A4}, and Jensen's inequality, it holds that
\[\ba{ll}
   &~\|\nabla_x f_{\mu_1}(\bx,y_1) - \nabla_x f_{\mu_1}(\bx,y_2)\|_2 \\
=  &~\|\bE_{\bu_1}\nabla_x f(\bx+\mu_1 \bu_1 ,y_1) - \bE_{\bu_1} \nabla_x f(\bx+\mu_1 \bu,y_2)\|_2 \\
\leq &~\bE_{\bu_1}\|\nabla_x f(\bx+\mu_1 \bu_1 ,y_1) -  \nabla_x f(\bx+\mu_1 \bu_1,y_2)\|_2 \\
{\leq} & \ell\|y_1-y_2\|_2,
\ea\]
which proves the desired result.
\end{proof}

We now present the corresponding results with the Strong Growth Condition in Assumption~\ref{SGC-assumption}.
\begin{lemma}
The variance of the mini-batch stochastic gradient under strong growth condition is bounded by
\be\label{mini-batch-sgc}
\ba{l}
	\bE \Bigl\|\frac{1}{\mid\cM_1\mid}\sum_{i=1}^{|\cM_1|}\nabla_x F(x,y,\xi_i)\Bigr\|_2^2 \leq \frac{\rho_1}{\mid\cM_1\mid}\|\nabla_x f(x,y)\|_2^2 \leq \rho_1 \|\nabla_x f(x,y)\|_2^2, \\
	\bE \Bigl\|\frac{1}{\mid\cM_2\mid}\sum_{i=1}^{\mid\cM_2\mid}\nabla_y F(x,y,\xi_i)\Bigr\|_2^2 \leq \frac{\rho_2}{\mid\cM_2\mid}\|\nabla_y f(x,y)\|_2^2 \leq \rho_2 \|\nabla_y f(x,y)\|_2^2.
\ea\ee
\end{lemma}

\begin{proof}
Using Cauchy Schwartz inequality we have:
\[
\ba{lll}
\bE \Bigl\|\frac{1}{\mid\cM_1\mid}\sum_{i=1}^{|\cM_1|}\nabla_x f(x,y,\xi_i)\Bigr\|_2^2 &= \frac{1}{\mid \cM_1\mid^2} |\cM_1|\bE \|\nabla_x f(x,y,\xi)\|_2^2 \\
&= \frac{1}{|\cM_1|} \bE \|\nabla_x f(x,y,\xi)\|_2^2\\
&\leq  \frac{\rho_1}{|\cM_1|} \|\nabla_x f(x,y)\|_2^2 \leq \rho_1\|\nabla_x f(x,y)\|_2^2,
\ea
\]
where the last inequality is based on $|\cM_1| \geq 1$. The other inequality can be proved similarly. 
\end{proof}

\begin{lemma}[ZO Gradient Under Strong Growth Condition]\label{lemma:SGC-zo-gradient-bound}
	Under SGC, by choosing $|\cM_1| = \rho_1(d_1+6)$ and $|\cM_2| = \rho_2(d_2+6)$, we have
	\be \label{ZO_GD_SGC_bound}
	\ba{l}
	\bE_{\xi_{\cM_1},\bu_{\cM_1}}\|G_{\mu_1}(x,y,\bu_{\cM_1},\xi_{\cM_1})\|_2^2 
\leq 3\|\nabla_x f(x,y)\|_2^2 + \varrho_1(\mu_1,\rho_1) \\
\bE_{\xi_{\cM_2},\bu_{\cM_2}}\|H_{\mu_1}(x,y,\bu_{\cM_2},\xi_{\cM_2})\|_2^2 
\leq 3\|\nabla_y f(x,y)\|_2^2 + \varrho_2(\mu_2,\rho_2),
\ea\ee
with $\varrho_1(\mu_1,\rho_1) = \frac{\mu_1^2 }{\rho_1}\ell^2 (d_1+6)+ \mu_1^2 \ell^2 (d_1+3)^3/2
$ and $\varrho_2(\mu_2,\rho_2) = \frac{\mu_2^2 }{\rho_2}\ell^2 (d_2+6)+ \mu_2^2 \ell^2 (d_2+3)^3/2$.
\end{lemma}

\begin{proof}
Since $\bE_{\xi_{\cM_1},\bu_{\cM_1}}G_{\mu_1}(x,y,\bu_{\cM_1},\xi_{\cM_1}) = \nabla_x f_{\mu_1}(x,y)$, we have
	\[\ba{ll}
     & \bE_{\xi_{\cM_1},\bu_{\cM_1}}\|G_{\mu_1}(x,y,\bu_{\cM_1},\xi_{\cM_1})\|_2^2\\
= &~\bE_{\xi_{\cM_1},\bu_{\cM_1}}\|G_{\mu_1}(x,y,\bu_{\cM_1},\xi_{\cM_1}) - \nabla_x f_{\mu_1}(x,y)\|_2^2 + \|\nabla_x f_{\mu_1}(x,y)\|_2^2\\
\leq &~\frac{1}{|\cM_1|}\bE_{\xi_1,\bu_1}\|G_{\mu_1}(x,y,\bu_1,\xi)\|_2^2  +\|\nabla_x f_{\mu_1}(x,y)\|_2^2\\
\leq &~\frac{d_1+4}{|\cM_1|}\bE_{\xi}\|\nabla_x f(x,y,\xi)\|_2^2 + \frac{\mu_1^2 L^2 (d_1+6)^2}{2|\cM_1|} + 2\|\nabla_x f(x,y)\|_2^2 + \mu_1^2\ell^2(d_1+3)^3/2\\
\leq &~\frac{\rho_1(d_1+4)}{|\cM_1|} \|\nabla_x f(x,y)\|_2^2 + 2\|\nabla_x f(x,y)\|_2^2 + \frac{\mu_1^2 \ell^2 (d_1+6)^2}{2|\cM_1|} + \mu_1^2\ell^2(d_1+3)^3/2,
\ea\]
where the last inequality follows from Assumption \ref{SGC-assumption}. By using $|\cM_1| = \rho_1(d_1+6)$, we proved the first inequality of \eqref{ZO_GD_SGC_bound}. The other inequality can be proved similarly. 
\end{proof}

%
\section{Convergence analysis of \texttt{ZO-GDA} (Algorithm \ref{ZO-GDA})}

We first show the following lemma.

\begin{lemma}\label{lemma:ZO-GDA-useful-inequality}
Assume $\{(x_s,y_s)\}$ is the sequence generated by Algorithm \ref{ZO-GDA}. By setting $\eta_2 = 1/(6\ell)$, the following inequality holds:
\be\label{lemma:ZO-GDA-useful-inequality-inequality}
\bE\|y^*(x_{s-1}) - y_{s}\|_2^2 \leq \Bigl(1 - 1/(12\kappa)\Bigr)\bE \|y^*(x_{s-1})-y_{s-1}\|_2^2 +\varrho(\mu_2),
\ee
where $\varrho(\mu_2)=\mu_2^2 d_2/6 + \mu_2^2(d_2+6)^3/36$.
\end{lemma}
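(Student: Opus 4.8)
The plan is to read $y_s$ as the output of a single projected smoothed-ascent step from $y_{s-1}$ at the frozen point $x_{s-1}$, driven by the mini-batch estimator $H_{\mu_2}(x_{s-1},y_{s-1},\bu_{2,[q_2]})$ whose conditional mean is $\nabla_y f_{\mu_2}(x_{s-1},y_{s-1})$, and to measure progress toward the true maximizer $y^*:=y^*(x_{s-1})$ of $f(x_{s-1},\cdot)$. Since $y^*=\Proj_{\cY}[y^*]$ and the Euclidean projection is non-expansive, I would first bound
\[
\|y_s-y^*\|_2^2 \le \|y_{s-1}-y^*+\eta_2 H_{\mu_2}(x_{s-1},y_{s-1},\bu_{2,[q_2]})\|_2^2,
\]
expand the square, and take $\bE_{\bu_{2,[q_2]}}$ conditioned on the history. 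By linearity and unbiasedness of $H_{\mu_2}$ the cross term collapses to $2\eta_2\langle y_{s-1}-y^*,\nabla_y f_{\mu_2}(x_{s-1},y_{s-1})\rangle$, while the quadratic term becomes $\eta_2^2\bE_{\bu_{2,[q_2]}}\|H_{\mu_2}(x_{s-1},y_{s-1},\bu_{2,[q_2]})\|_2^2$.

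For the cross term I would use that Gaussian smoothing preserves strong concavity, so $f_{\mu_2}(x_{s-1},\cdot)$ is $\tau$-strongly concave (same averaging argument as Lemma \ref{lemma:f-mu-convex}). Strong concavity (Lemma \ref{lemma:smooth-stronglyconvex}) then gives
\[
\langle y_{s-1}-y^*,\nabla_y f_{\mu_2}(x_{s-1},y_{s-1})\rangle \le f_{\mu_2}(x_{s-1},y_{s-1})-f_{\mu_2}(x_{s-1},y^*)-\tfrac{\tau}{2}\|y_{s-1}-y^*\|_2^2,
\]
which supplies the contraction $-\eta_2\tau\|y_{s-1}-y^*\|_2^2$, i.e. a factor $1-\Theta(1/\kappa)$ after substituting $\eta_2=1/(6\ell)$ and $\tau=\ell/\kappa$; the weakened constant $1-1/(12\kappa)$ reflects the room I must reserve to absorb the gradient-norm part of the estimator's second moment below. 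The residual function-value gap $f_{\mu_2}(x_{s-1},y_{s-1})-f_{\mu_2}(x_{s-1},y^*)$ is the crucial quantity: sandwiching $f_{\mu_2}$ between $f\pm\tfrac{\mu_2^2}{2}\ell d_2$ via Lemma \ref{lemma:NS17-Eq19} and using $f(x_{s-1},y_{s-1})\le f(x_{s-1},y^*)$ bounds it by a quantity of order $\mu_2^2\ell d_2$. Because $2\eta_2=1/(3\ell)$ cancels the $\ell$, this contributes the $\kappa$-free bias term (combined, in the next step, with a negative gap it yields the stated $\mu_2^2 d_2/6$). Routing the smoothing error through this function-value gap, rather than through the gradient bias $\nabla_y f_{\mu_2}-\nabla_y f$, is precisely what keeps a $1/\tau$, hence a $\kappa$, out of $\varrho(\mu_2)$.

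For the quadratic term I would invoke Lemma \ref{lemma:Averaged Upper bound} (already tuned to $q_2=2(d_2+6)$), namely $\bE\|H_{\mu_2}(x_{s-1},y_{s-1},\bu_{2,[q_2]})\|_2^2\le 3\|\nabla_y f(x_{s-1},y_{s-1})\|_2^2+\mu_2^2\ell^2(d_2+6)^3$. The dimension-dependent piece, multiplied by $\eta_2^2=1/(36\ell^2)$, gives exactly $\mu_2^2(d_2+6)^3/36$. The remaining $3\eta_2^2\|\nabla_y f(x_{s-1},y_{s-1})\|_2^2$ must be reabsorbed: using smoothness (Lemma \ref{lemma:smooth-stronglyconvex}) to bound $\|\nabla_y f(x_{s-1},y_{s-1})\|_2^2\le 2\ell\,[f(x_{s-1},y^*)-f(x_{s-1},y_{s-1})]$ turns it into another function-value gap that merges with the one from the cross term, and this pairing is where the contraction constant is pushed from $1/(6\kappa)$ down to $1/(12\kappa)$. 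Collecting the contraction factor, the $\mu_2^2 d_2/6$ bias term, and the $\mu_2^2(d_2+6)^3/36$ variance term then yields the claim.

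I expect the main obstacle to be the mismatch between the object we ascend on and the object we compare to: the step follows $\nabla_y f_{\mu_2}$, whose natural fixed point is the maximizer of $f_{\mu_2}(x_{s-1},\cdot)$, whereas the lemma measures distance to $y^*$, the maximizer of $f(x_{s-1},\cdot)$. The naive remedy — passing through $y^*_{\mu_2}:=\argmax_y f_{\mu_2}(x_{s-1},y)$ and bounding $\|y^*_{\mu_2}-y^*\|_2^2$ — injects a factor $1/\tau=\kappa/\ell$ and would force $\varrho$ to scale with $\kappa$, contradicting the stated $\varrho(\mu_2)$. Channeling every smoothing error through the function-value gap controlled by Lemma \ref{lemma:NS17-Eq19}, and carefully pairing the estimator's gradient-norm term with the strong-concavity gap so that it is absorbed rather than bounded crudely, is the delicate bookkeeping that makes the constants come out as stated.
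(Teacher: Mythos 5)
Your proposal is correct and follows essentially the same route as the paper's proof: non-expansiveness of the projection, unbiasedness of the mini-batch estimator, Lemma \ref{lemma:Averaged Upper bound} for its second moment, the sandwich of Lemma \ref{lemma:NS17-Eq19} to convert smoothed function-value gaps into gaps of $f$, absorption of the $3\eta_2^2\|\nabla_y f\|_2^2$ term through \eqref{Lip-gradient}, and strong concavity to produce the contraction --- your only (cosmetic) deviation is invoking $\tau$-strong concavity of $f_{\mu_2}(x_{s-1},\cdot)$ directly at the cross term, extracting the factor $1-\eta_2\tau = 1-1/(6\kappa)$ and then relaxing it, whereas the paper uses plain concavity of $f_{\mu_2}$ (Lemma \ref{lemma:f-mu-convex}) there and applies strong concavity of $f(x_{s-1},\cdot)$ via \eqref{strongly-convex} only at the end, to the merged gap with net coefficient $-1/(6\ell)$. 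One caveat on constants: your pairing honestly yields the bias $2\eta_2\mu_2^2\ell d_2 = \mu_2^2 d_2/3$ rather than the stated $\mu_2^2 d_2/6$, but the paper's own derivation carries the same factor-of-two slip (its \texttt{ZO-GDMSA} analogue, Lemma \ref{GDmax_linear_convergence}, correctly records $2\mu_2^2 d_2\eta_2\ell$ and ends with $\mu_2^2 d_2/3$), and the discrepancy is immaterial to Theorem \ref{ZO_GDA_thm} since $\mu_2 = \cO(\epsilon d_2^{-3/2}\kappa^{-2})$.
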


\begin{proof}
According to the updates in Algorithm \ref{ZO-GDA}, we have
\[\ba{lll}
\|y^*(x_{s-1})-y_s\|^2 & = & \|\Proj_{\mathcal{Y}}(y_{s-1} + \eta_2 H_{\mu_2}(x_{s-1},y_{s-1},\bu_{2,[q_2]}) - y^*(x_{s-1}))\|_2^2\\
                       & \leq & \|y^*(x_{s-1})-y_{s-1}\|^2 +2\eta_2\langle H_{\mu_2}(x_{s-1},y_{s-1},\bu_{2,[q_2]}), y_{s-1} - y^*({x_{s-1}})\rangle \\ && + \eta_2^2\|H_{\mu_2}(x_{s-1},y_{s-1},\bu_{2,[q_2]})\|_2^2.
\ea\]
For a given $s$, we use $\bE$ to denote the expectation with respect to random samples $\bu_{2,[q_2]}$ conditioned on all previous iterations. By taking expectation to both sides of the above inequality, we obtain
\[\ba{ll}
   & \bE \|y^*(x_{s-1})-y_s\|^2 \\
\leq & \bE \|y^*(x_{s-1})-y_{s-1}\|^2 - 2\eta_2 \langle -\nabla_y f_{\mu_2}(x_{s-1}, y_{s-1}), y_{s-1} - y^*({x_{s-1}})\rangle + \eta_2^2 \bE\|H_{\mu_2}(x_{s-1},y_{s-1},\bu_{2,[q_2]})\|_2^2\\
\leq & \bE \|y^*(x_{s-1})-y_{s-1}\|^2 - 2\eta_2[f_{\mu_2}(x_{s-1},y^*({x_{s-1}})) - f_{\mu_2}(x_{s-1},y_{s-1})] \\
     & + \eta_2^2 \Bigl(3\|\nabla_y f(x_{s-1},y_{s-1})\|_2^2 + \mu_2^2\ell^2 (d_2+6)^3\Bigr) \\
\leq &\bE \|y^*(x_{s-1})-y_{s-1}\|^2 - 2\eta_2(f(x_{s-1},y^*({x_{s-1}})) - f(x_{s-1},y_{s-1})) + \mu_2^2 d_2 \eta_2 \ell\\
     & + \eta_2^2(6\ell(f(x_{s-1},y^*({x_{s-1}}))-f(x_{s-1},y_{s-1}))+\eta_2^2\mu_2^2\ell^2 (d_2+6)^3 \\
= & \bE \|y^*(x_{s-1})-y_{s-1}\|^2 - (f(x_{s-1},y^*({x_{s-1}}))-f(x_{s-1},y_{s-1}))/(6\ell) + \varrho(\mu_2)\\
\leq & \bE \|y^*(x_{s-1})-y_{s-1}\|^2\Bigl(1-\frac{\tau}{12\ell}\Bigr) + \varrho(\mu_2),
\ea\]
where the second inequality is due to the concavity of $f_{\mu_2}(x_{s-1},\cdot)$ (see Lemma \ref{lemma:f-mu-convex}) and Lemma \ref{lemma:Averaged Upper bound}, the third inequality is due to Lemma \ref{lemma:smooth-stronglyconvex} and Lemma \ref{lemma:NS17-Eq19}, the equality is due to $\eta_2 = 1/(6\ell)$, and the last inequality is due to Lemma \ref{lemma:smooth-stronglyconvex}. This completes the proof.
\end{proof}

We now prove the following upper bound of $\bE \|y_s - y^*(x_s)\|_2^2$.

\begin{lemma}\label{lemma:upper bound ZO GDA}
Consider \texttt{ZO-GDA} (Algorithm \ref{ZO-GDA}). Use the same notation and the same assumptions as in Lemma \ref{lemma:ZO-GDA-useful-inequality}. Denote $\delta_s = \|y_s - y^*(x_s)\|_2^2$ and set $\eta_1$ as in \eqref{eta1}, and
\be\label{gamma}
\gamma := 1- \frac{1}{24\kappa} + 144\ell^2\kappa^3\eta_1^2\leq 1-\frac{5}{144\kappa}<1.
\ee
It holds that
\be\label{lemma:upper bound ZO GDA-inequality}
\bE \delta_s \leq \gamma^s \bE \delta_0 + \alpha_1 \sum_{i=0}^{s-1} \gamma^{s-1-i}\bE\|\nabla g(x_{i-1})\|_2^2 + \theta_0\sum_{i=0}^{s-1}\gamma^{s-1-i},
\ee
where
\be\label{alpha1}\alpha_1 = \frac{9}{12^8\kappa(\kappa+1)^4(\ell+1)^2}, \ \theta_0 = \alpha_2 \mu_1^2(d_1+6)^3 + 2\varrho(\mu_2), \ \alpha_2 = \frac{1}{8\times 12^7\kappa(\kappa+1)^4}.
\ee
\end{lemma}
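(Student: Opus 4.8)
\textbf{Proof plan for Lemma~\ref{lemma:upper bound ZO GDA}.}
The plan is to set up a one-step recursion for $\bE\delta_s = \bE\|y_s - y^*(x_s)\|_2^2$ and then unroll it into the closed-form geometric sum~\eqref{lemma:upper bound ZO GDA-inequality}. The key difficulty is that Lemma~\ref{lemma:ZO-GDA-useful-inequality} controls $\bE\|y^*(x_{s-1}) - y_s\|_2^2$, i.e., the distance of the new $y$-iterate to the optimizer at the \emph{old} point $x_{s-1}$, whereas $\delta_s$ involves $y^*(x_s)$ at the \emph{new} point. First I would bridge this gap by inserting $y^*(x_{s-1})$ and using the elementary inequality $\|a+b\|_2^2 \le (1+c)\|a\|_2^2 + (1+c^{-1})\|b\|_2^2$ to write
\be
\bE\delta_s \le (1+c)\,\bE\|y_s - y^*(x_{s-1})\|_2^2 + (1+c^{-1})\,\bE\|y^*(x_{s-1}) - y^*(x_s)\|_2^2.
\ee
The second term is governed by the $\kappa$-Lipschitz continuity of $y^*(\cdot)$ from Lemma~\ref{lemma:smoothed convexity}, giving $\|y^*(x_{s-1}) - y^*(x_s)\|_2^2 \le \kappa^2\|x_s - x_{s-1}\|_2^2$, and the $x$-update makes $\|x_s - x_{s-1}\|_2^2 = \eta_1^2\|G_{\mu_1}(x_{s-1},y_{s-1},\bu_{1,[q_1]})\|_2^2$.

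Next I would take expectations and bound $\bE\|G_{\mu_1}\|_2^2$ via Lemma~\ref{lemma:Averaged Upper bound}, yielding a term $3\|\nabla_x f(x_{s-1},y_{s-1})\|_2^2 + \mu_1^2\ell^2(d_1+6)^3$. The gradient $\nabla_x f(x_{s-1},y_{s-1})$ must then be related to the quantities that appear in~\eqref{lemma:upper bound ZO GDA-inequality}, namely $\|\nabla g(x_{i-1})\|_2^2$ and $\delta_{s-1}$. Since $\nabla g(x_{s-1}) = \nabla_x f(x_{s-1}, y^*(x_{s-1}))$ (Lemma~\ref{lemma:smoothed convexity}), I would split $\|\nabla_x f(x_{s-1},y_{s-1})\|_2^2 \le 2\|\nabla g(x_{s-1})\|_2^2 + 2\ell^2\|y_{s-1} - y^*(x_{s-1})\|_2^2 = 2\|\nabla g(x_{s-1})\|_2^2 + 2\ell^2\delta_{s-1}$ using the $\ell$-gradient-Lipschitz property~\eqref{assumption-Lips}. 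Combining this with the contraction factor $(1-1/(12\kappa))$ from Lemma~\ref{lemma:ZO-GDA-useful-inequality} applied to the first term, and choosing $c = 1/(24\kappa)$ so that $(1+c)(1-1/(12\kappa)) \le 1 - 1/(24\kappa)$, the $\ell^2\delta_{s-1}$ contribution (scaled by $\eta_1^2$ and the $\kappa^2$ Lipschitz factor) produces exactly the $+144\ell^2\kappa^3\eta_1^2$ term, giving the one-step bound
\be
\bE\delta_s \le \gamma\,\bE\delta_{s-1} + \alpha_1\,\bE\|\nabla g(x_{s-2})\|_2^2 + \theta_0,
\ee
with $\gamma$, $\alpha_1$, $\theta_0$ as defined in~\eqref{gamma} and~\eqref{alpha1}; here the smoothing-error terms $\mu_1^2(d_1+6)^3$ and the residual $\varrho(\mu_2)$ from Lemma~\ref{lemma:ZO-GDA-useful-inequality} collect into $\theta_0$.

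The main obstacle I anticipate is the bookkeeping needed to verify that the specific constants in~\eqref{eta1}, \eqref{gamma}, and~\eqref{alpha1} are consistent: one must check that with $\eta_1 = 1/(4\cdot 12^4\kappa^2(\kappa+1)^2(\ell+1))$ the perturbation $144\ell^2\kappa^3\eta_1^2$ is small enough that $\gamma \le 1 - 5/(144\kappa) < 1$, and that the coefficient of $\bE\|\nabla g\|_2^2$ collapses to the stated $\alpha_1 = 9/(12^8\kappa(\kappa+1)^4(\ell+1)^2)$ after absorbing the factors $(1+c^{-1})\kappa^2\eta_1^2\cdot 3\cdot 2$. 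Once the one-step recursion is established with $\gamma<1$, the final step is a routine induction on $s$: unrolling $\bE\delta_s \le \gamma\,\bE\delta_{s-1} + \alpha_1\bE\|\nabla g(x_{s-2})\|_2^2 + \theta_0$ gives the geometric accumulation $\gamma^s\bE\delta_0 + \alpha_1\sum_{i=0}^{s-1}\gamma^{s-1-i}\bE\|\nabla g(x_{i-1})\|_2^2 + \theta_0\sum_{i=0}^{s-1}\gamma^{s-1-i}$, which is precisely~\eqref{lemma:upper bound ZO GDA-inequality}.
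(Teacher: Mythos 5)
Your proposal is correct and follows essentially the same route as the paper's proof: insert $y^*(x_{s-1})$ via Young's inequality, contract the first term with Lemma~\ref{lemma:ZO-GDA-useful-inequality}, handle the second via the $\kappa$-Lipschitzness of $y^*(\cdot)$ (Lemma~\ref{lemma:smoothed convexity}) and the $x$-update, bound $\bE\|G_{\mu_1}\|_2^2$ by Lemma~\ref{lemma:Averaged Upper bound} with the split $\|\nabla_x f(x_{s-1},y_{s-1})\|_2^2 \le 2\|\nabla g(x_{s-1})\|_2^2 + 2\ell^2\delta_{s-1}$, and unroll the resulting geometric recursion. The only cosmetic difference is the Young's constant: the paper takes $c = \tfrac{1}{2(12\kappa-1)}$, so that $1+c^{-1} = 24\kappa-1 \le 24\kappa$ and the coefficient comes out as exactly $144\ell^2\kappa^3\eta_1^2$, whereas your choice $c = \tfrac{1}{24\kappa}$ gives $1+c^{-1} = 1+24\kappa$ and hence a marginally larger constant (about $150\ell^2\kappa^3\eta_1^2$ for $\kappa \ge 1$), which does not exactly reproduce \eqref{gamma} and \eqref{alpha1} but is comfortably absorbed by the slack $\gamma \le 1 - \tfrac{5}{144\kappa}$ given $\eta_1$ in \eqref{eta1}.
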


\begin{proof}
Define the filtration $\mathcal{F}_s = \Bigl\{x_s, y_s,x_{s-1},y_{s-1},...,x_{1},y_{1}\Bigr\}$. Let $\zeta_s = (\bu_{1,i\in [q_1]},\bu_{2,i\in[q_2]}),\zeta_{[s]} = (\zeta_1,\zeta_2,...,\zeta_s)$. Denote by $\bE$ taking expectation w.r.t $\zeta_{[s]}$ conditioned on $\mathcal{F}_{s}$ and then taking expectation over $\mathcal{F}_{s}$. Since $\kappa > 1$, using the Young's inequality, we have
\be\label{lemma:upper bound ZO GDA-proof-1}\ba{ll}
\bE \delta_s & = \bE\|y^*(x_s) - y_s\|_2^2 \\
             & \leq \Bigl(1 + \frac{1}{2(12\kappa-1)}\Bigr)\bE\|y^*(x_{s-1}) - y_{s}\|_2^2 + \Bigl(1 + 2(12\kappa-1)\Bigr)\bE\|y^*(x_s) - y^*(x_{s-1})\|_2^2\\
             & \leq (1-\frac{1}{24\kappa-1})(1-\frac{1}{12\kappa})\bE\|y^*(x_{s-1}) - y_s\|_2^2 + 24\kappa \bE\|y^*(x_s) - y^*(x_{s-1})\|_2^2+2\varrho(\mu_2)\\
             & \leq (1 - \frac{1}{24\kappa})\bE \|y^*(x_{s-1}) - y_{s-1}\|_2^2 + 24\kappa^3\bE\|x_s - x_{s-1}\|_2^2 + 2\varrho(\mu_2)\\
             & = (1-\frac{1}{24\kappa})\bE\delta_{s-1}+24\kappa^3\eta_1^2\bE\|G_{\mu_1}(x_{s-1},y_{s-1},\bu_{1,[q_1]})\|_2^2+ 2\varrho(\mu_2) \\
             & = (1-\frac{1}{24\kappa})\bE\delta_{s-1}+\frac{\alpha_1}{6}\bE\|G_{\mu_1}(x_{s-1},y_{s-1},\bu_{1,[q_1]})\|_2^2+ 2\varrho(\mu_2),
\ea\ee
where
the second inequality is due to \eqref{lemma:ZO-GDA-useful-inequality-inequality}, the third inequality is due to Lemma \ref{lemma:smoothed convexity}. From Lemma \ref{lemma:Averaged Upper bound}, we have
\be\label{g upper bound}\ba{ll}
& \bE_{\bu_{1,[q_1]}} \|G_{\mu_1}(x_{s-1},y_{s-1},\bu_{1,[q_1]})\|_2^2 \\
\leq & 3\bE\|\nabla_x f(x_{s-1},y_{s-1})\|_2^2 + \mu_1^2\ell^2 (d_1+6)^3\\
\leq & 6\bE\|\nabla g(x_{s-1})\|_2^2 + 6\ell^2\bE\|y^*(x_{s-1}) - y_{s-1}\|_2^2 +\mu_1^2\ell^2 (d_1+6)^3,
\ea\ee
where the second inequality is due to Assumption \ref{assumption:A1-A2-A4}. Combining \eqref{lemma:upper bound ZO GDA-proof-1} and \eqref{g upper bound} yields \eqref{lemma:upper bound ZO GDA-inequality} by noting \eqref{gamma}.
\end{proof}

Now we are ready to prove Theorem \ref{ZO_GDA_thm}.

\begin{proof}{({\bf Proof of Theorem \ref{ZO_GDA_thm}.})}
First, the following inequalities hold:
\[\ba{ll}
     & g(x_{s+1}) \\
\leq & g(x_s) - \eta_1 \langle \nabla g(x_{s}),G_{\mu_1}(x_s,y_{s},\bu_{1,[q_1]})\rangle + \frac{1}{2}L_g \eta_1^2 \|G_{\mu_1}(x_s,y_{s},\bu_{1,[q_1]})\|_2^2\\
=    & g(x_s)-\eta_1\Bigl\langle \nabla_x f(x_s,y^*(x_s)) - \nabla_x f_{\mu_1}(x_s,y^*(x_s))+ \nabla_x f_{\mu_1}(x_s,y^*(x_s)) - \nabla_x f_{\mu_1}(x_s,y_{s})\\
& +\nabla_x f_{\mu_1}(x_s,y_{s}), G_{\mu_1}(x_s,y_{s},\bu_{1,[q_1]})\Bigr\rangle + \frac{1}{2}L_g \eta_1^2 \|G_{\mu_1}(x_s,y_{s},\bu_{1,[q_1]})\|_2^2\\
\leq & g(x_s) + \|\nabla_x f(x_s,y^*(x_s)) - \nabla_x f_{\mu_1}(x_s,y^*(x_s))\|^2/L_g + \frac{L_g\eta_1^2}{4}\|G_{\mu_1}(x_s,y_{s},\bu_{1,[q_1]})\|^2 \\
     & + \|\nabla_x f_{\mu_1}(x_s,y^*(x_s)) - \nabla_x f_{\mu_1}(x_s,y_{s})\|^2/L_g + \frac{L_g\eta_1^2}{4}\|G_{\mu_1}(x_s,y_{s},\bu_{1,[q_1]})\|^2 \\
     & -\eta_1\langle\nabla_x f_{\mu_1}(x_s,y_{s}), G_{\mu_1}(x_s,y_{s},\bu_{1,[q_1]})\rangle + \frac{1}{2}L_g \eta_1^2 \|G_{\mu_1}(x_s,y_{s},\bu_{1,[q_1]})\|_2^2 \\
\leq & g(x_s) + \frac{\ell^2}{L_g}\|y^*(x_s) - y_{s}\|_2^2  - \eta_1 \langle \nabla_x f_{\mu_1}(x_s,y_{s}), G_{\mu_1}(x_s,y_{s},\bu_{1,[q_1]})\rangle \\
& + \eta_1^2L_g \|G_{\mu_1}(x_s,y_{s},\bu_{1,[q_1]})\|_2^2 + \frac{\mu_1^2}{4L_g}\ell^2(d_1 +3)^{3},
\ea\]
where the first inequality is due to Lemma \ref{lemma:smoothed convexity} and the Descent lemma, the second inequality is due to Young's inequality, and the last inequality is due to Lemmas \ref{lemma:diff-f-fmu} and \ref{lemma:smooth of smoothed function}. Now take expectation with respect to $\bu_{1,[q_1]}$ to the above inequality, we get:
\be\label{proof-thm-ZO-GDA-inequality-1}
\ba{lll}
    \eta_1 \bE \|\nabla_x f_{\mu_1}(x_s,y_{s})\|_2^2 &\leq &\bE g(x_s) - \bE g(x_{s+1}) + \frac{\ell^2}{L_g}\bE\|y^*(x_s)-y_{s}\|_2^2 \\
    &&+ \eta_1^2 L_g \bE\|G_{\mu_1}(x_s,y_{s},\bu_{1,[q_1]})\|_2^2 + \frac{\mu_1^2}{4L_g}\ell^2(d_1 +3)^3.
\ea\ee
From Lemma \ref{lemma:smooth of smoothed function}, we have
\be\label{proof-thm-ZO-GDA-inequality-2}
    \eta_1 \bE\|\nabla_x f_{\mu_1}(x_s,y^*(x_s))\|_2^2 \leq 2\eta_1\bE\|\nabla_x f_{\mu_1}(x_s,y_s)\|_2^2 + 2\eta_1\ell^2 \|y_s - y^*(x_s)\|_2^2.
\ee
From Lemma \ref{lemma:diff-f-fmu}, we have
\be\label{proof-thm-ZO-GDA-inequality-3}
    \eta_1\|\nabla g(x_s)\|_2^2 \leq 2\eta_1\|\nabla_x f_{\mu_1}(x_s,y^*(x_s))\|_2^2 + \frac{\eta_1 \mu_1^2}{2}\ell^2 (d_1+3)^3.
\ee
Combining \eqref{g upper bound}, \eqref{proof-thm-ZO-GDA-inequality-1}, \eqref{proof-thm-ZO-GDA-inequality-2}, \eqref{proof-thm-ZO-GDA-inequality-3} yields,
\be\label{proof-thm-ZO-GDA-inequality-4}\ba{ll}
     &\eta_1 \bE\|\nabla g(x_s)\|_2^2 \\
\leq &4\bE g(x_s) - 4\bE g(x_{s+1}) + \Bigl(\frac{4\ell^2}{L_g } + 4\eta_1\ell^2\Bigr)\bE\|y^*(x_s)-y_{s}\|_2^2+ \frac{\mu_1^2}{L_g}\ell^2(d_1 +3)^3 \\ & + \frac{\eta_1\mu_1^2}{2}\ell^2(d_1+3)^3 
     + 4\eta_1^2L_g\Bigl[6\bE\|\nabla g(x_{s})\|_2^2 + 6\ell^2\bE\|y^*(x_{s}) - y_{s}\|_2^2+\mu_1^2\ell^2(d_1+6)^3\Bigr]\\
=   & 4\bE g(x_s) - 4\bE g(x_{s+1}) +24\eta_1^2L_g \bE\|\nabla g(x_{s})\|_2^2+ \theta_1\bE\delta_s + \theta_2,
\ea\ee
where
\be\label{theta-1}
\theta_1=\frac{4 \ell^2}{L_g } + 4\eta_1 \ell^2+24\eta_1^2L_g\ell^2\leq 4\ell+4\eta_1 \ell^2+24\eta_1^2\ell^3(\kappa+1),
\ee
and
\begin{align}\label{theta-2}
\theta_2 =~& \frac{\mu_1^2}{L_g}\ell^2(d_1 +3)^3 + \frac{\eta_1 \mu_1^2}{2}\ell^2(d_1+3)^3 + 4\eta_1^2L_g\mu_1^2\ell^2(d_1+6)^3 \nonumber\\
\leq~&\mu_1^2\ell(d_1 +3)^3 + \frac{\eta_1 \mu_1^2}{2}\ell^2(d_1+3)^3 + 4\eta_1^2(\kappa+1)\ell^3\mu_1^2(d_1+6)^3,
\end{align}
where we have used the definition of $L_g:=\ell(\kappa+1)$. Taking sum over $s=0,\ldots,S$ to both sides of \eqref{lemma:upper bound ZO GDA-inequality}, we get
\be\label{proof-thm-ZO-GDA-inequality-5}
\sum_{s=0}^S\bE \delta_s \leq \sum_{s=0}^S\gamma^s \bE \delta_0 + \alpha_1 \sum_{s=0}^S\sum_{i=0}^{s-1} \gamma^{s-1-i}\bE\|\nabla g(x_{i-1})\|_2^2 + \theta_0\sum_{s=0}^S\sum_{i=0}^{s-1}\gamma^{s-1-i}.
\ee
Moreover, from \eqref{gamma} it is easy to obtain
\be\label{proof-thm-ZO-GDA-inequality-6}
\sum_{s=0}^S \gamma^s \leq 36\kappa, \quad \sum_{s=0}^S\sum_{i=0}^{s-1}\gamma^{s-1-i} \leq 36\kappa(S+1),
\ee
and
\be\label{proof-thm-ZO-GDA-inequality-7}
\sum_{s=0}^S\sum_{i=0}^{s-1} \gamma^{s-1-i}\bE\|\nabla g(x_{i-1})\|_2^2 \leq 36\kappa\sum_{s=0}^S\bE\|\nabla g(x_{s})\|_2^2.
\ee
Substituting \eqref{proof-thm-ZO-GDA-inequality-6} and \eqref{proof-thm-ZO-GDA-inequality-7} into \eqref{proof-thm-ZO-GDA-inequality-5}, we obtain
\be\label{proof-thm-ZO-GDA-inequality-8}
\sum_{s=0}^S\bE \delta_s \leq 36\kappa\bE \delta_0 + 36\kappa\alpha_1 \sum_{s=0}^S\bE\|\nabla g(x_{s})\|_2^2 + 36\kappa\theta_0(S+1).
\ee
Now, summing \eqref{proof-thm-ZO-GDA-inequality-4} over $s=0,\ldots,S$ yields
\be\label{proof-thm-ZO-GDA-inequality-9}\ba{ll}
     &\eta_1 \sum_{s=0}^S\bE\|\nabla g(x_s)\|_2^2 \\
=    & 4\bE g(x_0) - 4\bE g(x_{S+1}) +24\eta_1^2L_g \sum_{s=0}^S\bE\|\nabla g(x_{s})\|_2^2+ \theta_1\sum_{s=0}^S\bE\delta_s + (S+1)\theta_2 \\
\leq & 4\bE g(x_0) - 4\bE g(x_{S+1}) +24\eta_1^2L_g \sum_{s=0}^S\bE\|\nabla g(x_{s})\|_2^2 \\
     & + \theta_1 [36\kappa\bE \delta_0 + 36\kappa\alpha_1 \sum_{s=0}^S\bE\|\nabla g(x_{s})\|_2^2 + 36\kappa\theta_0(S+1)] + (S+1)\theta_2
\ea\ee
where the second inequality is from \eqref{proof-thm-ZO-GDA-inequality-8}. Using \eqref{theta-1}, \eqref{alpha1} and \eqref{eta1}, it is easy to verify that
\[36\kappa\theta_1\alpha_1 \leq \left(\frac{108}{3\times 12^3} + \frac{108}{12^7}+\frac{54}{4\times 12^{10}}\right)\eta_1 \leq 0.021\eta_1,\]
which together with $L_g:=(\kappa+1)\ell$ yields
\be\label{coefficient-1}
36\kappa\theta_1\alpha_1 + 24\eta_1^2L_g \leq 0.021\eta_1 + 0.0003\eta_1 = 0.0213\eta_1.
\ee
Combining \eqref{proof-thm-ZO-GDA-inequality-9} and \eqref{coefficient-1} yields
\be\label{proof-thm-ZO-GDA-inequality-10}\ba{ll}
     & 0.9787 \eta_1 \sum_{s=0}^S\bE\|\nabla g(x_s)\|_2^2 \\
\leq & 4\bE g(x_0) - 4\bE g(x_{S+1}) + \theta_1 [36\kappa\bE \delta_0 + 36\kappa\theta_0(S+1)] + (S+1)\theta_2.
\ea\ee
Dividing both sides of \eqref{proof-thm-ZO-GDA-inequality-10} by $0.9787 \eta_1(S+1)$ yields
\be\label{proof-thm-ZO-GDA-inequality-10}\ba{ll}
 \frac{1}{S+1} \sum_{s=0}^S\bE\|\nabla g(x_s)\|_2^2
\leq \frac{4\Delta_g}{0.9787 \eta_1(S+1)} + \frac{36\kappa\theta_1 \bE \delta_0}{0.9787 \eta_1(S+1)} + \frac{36\kappa\theta_1 \theta_0}{0.9787 \eta_1} + \frac{\theta_2}{0.9787 \eta_1},
\ea\ee
where $\Delta_g := g(x_0)-\min_{x\in\br^{d_1}} g(x)$. Now we only need to upper bound the right hand side of \eqref{proof-thm-ZO-GDA-inequality-10} by $\epsilon^2$, and this can be guaranteed by choosing the parameters as in \eqref{thm3.1-S-mu1-mu2}. This completes the proof of Theorem \ref{ZO_GDA_thm}.
\end{proof}

\begin{remark}\label{remark:cversusuc}
Note that the term $\delta_0$ appearing in~\eqref{proof-thm-ZO-GDA-inequality-10} is defined as $\delta_0:=\| y_0 - y^*(x_0)\|_2^2$. Under the assumption that the set $\mathcal{Y}$ is bounded, this term could be upper bounded by $D^2$. This is the only place in the proof where we require the constraint set $\mathcal{Y}$ to be bounded. In the unconstrained case, when $\mathcal{Y}:=\mathbb{R}^{d_2}$, having $\delta_0$ being bounded away from infinity is dependent on the initial values $(x_0,y_0)$ supplied to the algorithm. 
{In fact, by defining $h(y) := f(x_0,y)$, we know that
$y^*(x_0) = \argmax_{y\in\mathcal{Y}} h(y)$. Since $f(x,\cdot)$ is $\tau$-strongly concave for all $x\in\mathbb{R}^{d_1}$, we know that $h(y)$ is $\tau$-strongly concave. By Lemma \ref{lemma:smooth-stronglyconvex}, we have 
\[\|y_0-y^*(x_0)\|\leq\frac{1}{\tau}\|\nabla h(y_0)\| = \frac{1}{\tau}\|\nabla_y f(x_0,y_0)\|.\]
Therefore, $\delta_0$ is upper bounded by a constant depending only on $x_0$ and $y_0$.
}
Indeed this scenario is common in the complexity analysis of optimization algorithms~\cite{nesterov2018lectures}.
\end{remark}

\section{Convergence analysis of \texttt{ZO-GDMSA} (Algorithm \ref{ZO-GDMSA})}

First, we show the following iteration complexity of the inner loop for $y$ in Algorithm \ref{ZO-GDMSA}.
\begin{lemma}\label{GDmax_linear_convergence}
In Algorithm \ref{ZO-GDMSA}, setting $\eta_2=1/(6\ell)$, $\mu_2=\cO(\kappa^{-1/2}d_2^{-3/2}\epsilon)$ and $T= \mathcal{O}(\kappa  \log(\epsilon^{-1}))$. For fixed $x_s$ in the $s$-th iteration, we have $\bE\|y^*(x_s)-y_{T}(x_s)\|_2^2 \leq \epsilon^2$.
\end{lemma}

\begin{proof}
According to the updates in Algorithm \ref{ZO-GDMSA}, we have
\[\ba{ll}
  & \|y^*(x_s)-y_{t+1}(x_s)\|^2 \\
= &(\|\Proj_{\mathcal{Y}}(y_t(x_s) + \eta_2 H_{\mu_2}(x_s,y_{t}(x_s)),\bu_{2,[q_2]} - y^*(x_s))\|_2^2)\\
\leq & \|y^*(x_s)-y_t(x_s)\|^2 +2\eta_2\langle H_{\mu_2}(x_s, y_{t}(x_s),\bu_{2,[q_2]},y_t(x_s) - y^*(x_s)\rangle + \eta_2^2\|H_{\mu_2}(x_s, y_{t}(x_s),\bu_{2,[q_2]}\|_2^2.
\ea\]
For a given $s$, denote by $\bE$ taking expectation with respect to random samples $\bu_{2,[q_2]}$ conditioned on all previous iterations. By taking expectation to both sides of this inequality, we obtain
\[\ba{ll}
     & \bE \|y^*(x_s)-y_{t+1}(x_s)\|^2 \\
\leq & \bE \|y^*(x_s)-y_t(x_s)\|^2 - 2\eta_2 \langle -\nabla_y f_{\mu_2}(x_s, y_{t}(x_s)),y_t(x_s) - y^*(x_s)\rangle + \eta_2^2 \bE\|H_{\mu_2}(x_s, y_{t}(x_s),\bu_{2,[q_2]})\|_2^2\\
\leq & \bE \|y^*(x_s)-y_t(x_s)\|^2 - 2\eta_2 \langle -\nabla_y f_{\mu_2}(x_s, y_{t}(x_s)),y_t(x_s) - y^*(x_s)\rangle + \eta_2^2 \Bigl(3\|\nabla_y f(x_s,y_t(x_s))\|_2^2 + \mu_2^2\ell^2 (d_2+6)^3\Bigr)\\
\leq & \bE \|y^*(x_s)-y_t(x_s)\|^2 - 2\eta_2[f_{\mu_2}(x_s,y^*(x_s)) - f_{\mu_2}(x_s,y_t(x_s))] + \eta_2^2 \Bigl(3\|\nabla_y f(x_s,y_t(x_s))\|_2^2 + \mu_2^2\ell^2 (d_2+6)^3\Bigr) \\
\leq & \bE \|y^*(x_s)-y_t(x_s)\|^2 - 2\eta_2(f(x_s,y^*(x_s)) - f(x_s,y_{t}(x_s))) + 2\mu_2^2 d_2 \eta_2 \ell+ \eta_2^2(6L_2(f(x_s,y^*(x_s))-f(x_s,y_{t}(x_s)))\\
&+\eta_2^2\mu_2^2\ell^2(d_2+6)^3\\ 
= & \bE \|y^*(x_s)-y_t(x_s)\|^2 - (f(x_s,y^*(x_s))-f(x_s,y_{t}(x_s)))/(6\ell) + \mu_2^2 d_2/3 + \mu_2^2(d_2+6)^3/36\\
\leq & \bE \|y^*(x_s)-y_t(x_s)\|^2 \Bigl(1-\frac{\tau}{12\ell}\Bigr) + \mu_2^2 d_2/3 + \mu_2^2(d_2+6)^3/36,
\ea\]
where the second inequality is due to Lemma \ref{lemma:Averaged Upper bound}, the third inequality is due to the concavity of $f_{\mu_2}(x_s,\cdot)$ (see Lemma \ref{lemma:f-mu-convex}), the fourth inequality is due to Lemmas \ref{lemma:NS17-Eq19} and \ref{lemma:smooth-stronglyconvex}, the equality is due to $\eta_2=1/(6\ell)$, and the last inequality is due to Lemma \ref{lemma:smooth-stronglyconvex}.

Define $\delta = 12\ell(\mu_2^2 d_2/3 + \mu_2^2(d_2+6)^3/36)/\tau$. From the above inequality, we have
\[\ba{ll}
\bE \|y^*(x_s)-y_t(x_s)\|^2-\delta&\leq (\bE \|y^*(x_s)-y_{t-1}(x_s)\|^2-\delta)\Bigl(1-\frac{\tau}{12\ell}\Bigr) \\
& \leq(\bE \|y^*(x_s)-y_0(x_s)\|^2 - \delta)\Bigl(1-\frac{\tau}{12\ell}\Bigr)^{t}\\
& \leq \bE \|y^*(x_s)-y_0(x_s)\|^2\Bigl(1-\frac{\tau}{12\ell}\Bigr)^{t} \leq D^2 \Bigl(1 - \frac{\tau}{12\ell}\Bigr)^t,
\ea\]
where the last inequality is due to Assumption \ref{assumption:A1-A2-A4}. Now it is clear that in order to ensure that $\bE \|y^*(x_s)-y_T(x_s)\|^2 \leq \epsilon^2$, we need $T = \cO(\kappa  \log(\epsilon^{-1}))$ and $\mu_2=\cO(\kappa^{-1/2}d_2^{-3/2}\epsilon)$.
\end{proof}

We are now ready to prove Theorem \ref{ZO_GDmax_thm}. 

\begin{proof}{({\bf Proof of Theorem \ref{ZO_GDmax_thm}.})}
First, the following inequalities hold:
\[\ba{ll}
     & g(x_{s+1}) \\
\leq & g(x_s)-\eta_1\langle \nabla_x g(x_{s}),G_{\mu_1}(x_s,y_{s+1},\bu_{1,[q_1]})\rangle +\frac{1}{2}L_g\eta_1^2\|G_{\mu_1}(x_s,y_{s+1},\bu_{1,[q_1]})\|_2^2\\
= & g(x_s)-\eta_1\Bigl\langle \nabla_x f(x_s,y^*(x_s)) - \nabla_x f_{\mu_1}(x_s,y^*(x_s))+ \nabla_x f_{\mu_1}(x_s,y^*(x_s)) - \nabla_x f_{\mu_1}(x_s,y_{s+1})\\ &~+\nabla_x f_{\mu_1}(x_s,y_{s+1}),G_{\mu_1}(x_s,y_{s+1},\bu_{1,[q_1]})\Bigr\rangle + \frac{1}{2}L_g \eta_1^2 \|G_{\mu_1}(x_s,y_{s+1},\bu_{1,[q_1]})\|_2^2\\
\leq & g(x_s) + \|\nabla_x f(x_s,y^*(x_s)) - \nabla_x f_{\mu_1}(x_s,y^*(x_s))\|^2/L_g + \frac{L_g\eta_1^2}{4}\|G_{\mu_1}(x_s,y_{s+1},\bu_{1,[q_1]})\|^2 \\
     & + \|\nabla_x f_{\mu_1}(x_s,y^*(x_s)) - \nabla_x f_{\mu_1}(x_s,y_{s+1})\|^2/L_g + \frac{L_g\eta_1^2}{4}\|G_{\mu_1}(x_s,y_{s+1},\bu_{1,[q_1]})\|^2 \\
     & -\eta_1\langle\nabla_x f_{\mu_1}(x_s,y_{s+1}), G_{\mu_1}(x_s,y_{s+1},\bu_{1,[q_1]})\rangle + \frac{1}{2}L_g \eta_1^2 \|G_{\mu_1}(x_s,y_{s+1},\bu_{1,[q_1]})\|_2^2 \\
\leq & g(x_s) + \frac{\ell^2}{L_g}\|y^*(x_s) - y_{s+1}\|_2^2  - \eta_1 \langle \nabla_x f_{\mu_1}(x_s,y_{s+1}), G_{\mu_1}(x_s,y_{s+1},\bu_{1,[q_1]})\rangle \\
& + \eta_1^2L_g \|G_{\mu_1}(x_s,y_{s+1},\bu_{1,[q_1]})\|_2^2 + \frac{\mu_1^2}{4L_g}\ell^2(d_1 +3)^{3},
\ea\]
where the first inequality is due to Lemma \ref{lemma:smoothed convexity}, the second inequality is due to Young's inequality, and the last inequality is due to Lemmas \ref{lemma:diff-f-fmu} and \ref{lemma:smooth of smoothed function}. Now take expectation with respect to $\bu_{1,[q_1]}$ to the above inequality, we get:
\be\label{proof-thm-ZO-GDMSA-inequality-1}\ba{ll}
& \eta_1 \bE\|\nabla_x f_{\mu_1}(x_s,y_{s+1})\|_2^2 \\
\leq & \bE g(x_s) -\bE g(x_{s+1}) + \frac{\ell^2}{L_g }\bE\|y^*(x_s)-y_{s+1}\|_2^2 + \eta_1^2 L_g \bE\|G_{\mu_1}(x_s,y_{s+1},\bu_{1,[q_1]})\|_2^2 + \frac{\mu_1^2}{4L_g}\ell^2(d_1 +3)^3 \\
\leq & \bE g(x_s) -\bE g(x_{s+1}) + \frac{\ell^2}{L_g }\bE\|y^*(x_s)-y_{s+1}\|_2^2 + \eta_1^2 L_g \Bigl(3\|\nabla_x f(x_s,y_{s+1})\|_2^2 + \mu_1^2\ell^2 (d_1+6)^3\Bigr)+ \frac{\mu_1^2}{4L_g}\ell^2(d_1 +3)^3,
\ea\ee
where the second inequality is due to Lemma \ref{lemma:Averaged Upper bound}. From Lemma \ref{lemma:diff-f-fmu} we have 
\be\label{proof-thm-ZO-GDMSA-inequality-2}\ba{ll}
\bE\|\nabla_x f(x_s,y_{s+1})\|_2^2 \leq 2\bE\|\nabla_x f_{\mu_1}(x_s,y_{s+1})\|_2^2 + \mu_1^2\ell^2(d_1+3)^3/2.
\ea\ee
Combining \eqref{proof-thm-ZO-GDMSA-inequality-1} and \eqref{proof-thm-ZO-GDMSA-inequality-2}, and noting $\eta_1=1/(12L_g)$, we have
\be\label{proof-thm-ZO-GDMSA-inequality-3}\ba{ll}
\bE\|\nabla_x f(x_s,y_{s+1})\|_2^2\leq & 48L_g \Bigl[\bE g(x_s) -\bE g(x_{s+1})\Bigr] + 48 \ell^2\bE\|y^*(x_s)-y_{s+1}\|_2^2\\
&+ 13\mu_1^2 \ell^2 (d_1+3)^3 + \mu_1^2\ell^2 (d_1+6)^3/3.
\ea\ee
It then follows that 
\be\label{proof-thm-ZO-GDMSA-inequality-4}\ba{ll}
& \bE\|\nabla g(x_s)\|_2^2 \\
\leq & 2 \bE\|\nabla_x g(x_s) - \nabla_x f(x_s,y_{s+1})\|_2^2 + 2\bE\|\nabla_x f(x_s,y_{s+1})\|_2^2 \\
\leq & 2\ell^2 \bE\|y^*(x_s) - y_{s+1}\|_2^2 + 2\bE\|\nabla_x f(x_s,y_{s+1})\|_2^2 \\
\leq & 96L_g \Bigl[\bE g(x_s) -\bE g(x_{s+1})\Bigr] + 98\ell^2\bE\|y^*(x_s)-y_{s+1}\|_2^2\\
&+ 26\mu_1^2 \ell^2 (d_1+3)^3+ 2\mu_1^2\ell^2 (d_1+6)^3/3,
\ea\ee
where the second inequality is due to Assumption \ref{assumption:A1-A2-A4}, and the last inequality is due to \eqref{proof-thm-ZO-GDMSA-inequality-3}.

Take the sum over $s=0,\ldots,S$ to both sides of \eqref{proof-thm-ZO-GDMSA-inequality-4}, we get
\be\label{proof-thm-ZO-GDMSA-inequality-5}\ba{ll}
\frac{1}{S+1}\sum_{s=0}^S \bE \|\nabla g(x_s)\|_2^2 \leq & \frac{96L_g}{S+1}\bE [g(x_0) - g(x_{S+1})] + \frac{98\ell^2}{S+1}\sum_{s=0}^S\bE\|y^*(x_s)-y_{s+1}\|_2^2\\
&+ 26\mu_1^2 \ell^2(d_1+3)^3+2\mu_1^2\ell^2 (d_1+6)^3/3.
\ea\ee
Denote $\Delta_g = g(x_0) - \min_{x\in\mathbb{R}^{d_1}}(g(x))$. From Lemma \ref{GDmax_linear_convergence}, we know that when $T = \cO (\kappa \log(\epsilon^{-1}))$, we have $\bE\|y^*(x_s)-y_{s+1}\|^2\leq\epsilon^2$ (note that $y_{s+1}=y_T(x_s)$). Therefore, choosing 
parameters as in \eqref{thm3.2-S-mu1-mu2} guarantees that the right hand side of \eqref{proof-thm-ZO-GDMSA-inequality-5} is upper bounded by $\mathcal{O}(\epsilon^2)$, and thus an $\epsilon$-stationary point is found.
This completes the proof. 
\end{proof}

\section{Convergence analysis for \texttt{ZO-SGDA} (Algorithm \ref{ZO-SGDA})}
We first show the following inequality.

\begin{lemma}\label{lemma:ZO-SGDA-useful-inequality}
Assume $\{(x_s,y_s)\}$ is the sequence generated by Algorithm \ref{ZO-SGDA}. By setting $\eta_2 = 1/(6\ell)$, the following inequality holds:
\be\label{lemma:ZO-SGDA-useful-inequality-inequality}
\bE\|y^*(x_{s-1}) - y_{s}\|_2^2 \leq \Bigl(1 - 1/(12\kappa)\Bigr)\bE \|y^*(x_{s-1})-y_{s-1}\|_2^2 +\varrho(\mu_2,\epsilon),
\ee
where $\varrho(\mu_2,\epsilon)=\mu_2^2d_2/3 + \mu_2^2 (d_2+3)^2/72 + \mu_2^2 (d_2+6)^2\epsilon^2/576 + \epsilon^2/72\ell^2$.
\end{lemma}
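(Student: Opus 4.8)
The plan is to replay the argument of the deterministic Lemma~\ref{lemma:ZO-GDA-useful-inequality} almost verbatim, the single substantive change being that the second-moment control of the ascent direction comes from the \emph{stochastic} mini-batch bound (Lemma~\ref{lemma:mini batch upper bound}) rather than the deterministic one (Lemma~\ref{lemma:Averaged Upper bound}); this is exactly what injects the extra $\epsilon$-dependent pieces into the residual $\varrho(\mu_2,\epsilon)$. (Note the statement writes $\tilde\epsilon(\mu_2)$ in the displayed inequality but the intended error term is the $\varrho(\mu_2,\epsilon)$ defined just afterward, so I read it that way.) First I would expand the projected ascent step $y_s=\Proj_{\cY}\bigl[y_{s-1}+\eta_2 H_{\mu_2}(x_{s-1},y_{s-1},\bu_{\cM_2},\xi_{\cM_2})\bigr]$, using nonexpansiveness of $\Proj_{\cY}$ and the fact that $y^*(x_{s-1})\in\cY$, to get the deterministic bound
\be
\|y^*(x_{s-1})-y_s\|_2^2 \leq \|y^*(x_{s-1})-y_{s-1}\|_2^2 + 2\eta_2\langle H_{\mu_2}, y_{s-1}-y^*(x_{s-1})\rangle + \eta_2^2\|H_{\mu_2}\|_2^2.
\ee

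Next I would take the conditional expectation $\bE$ over the fresh mini-batch samples $\{\bu_{2,i},\xi_i\}_{i\in\cM_2}$ given the past (so that $x_{s-1},y_{s-1}$, and hence $y^*(x_{s-1})$, are fixed). The cross term uses the unbiasedness $\bE H_{\mu_2}(x_{s-1},y_{s-1},\bu_{\cM_2},\xi_{\cM_2})=\nabla_y f_{\mu_2}(x_{s-1},y_{s-1})$ together with the concavity of $f_{\mu_2}(x_{s-1},\cdot)$ (Lemma~\ref{lemma:f-mu-convex}) to turn $\langle\nabla_y f_{\mu_2},\,y_{s-1}-y^*\rangle$ into the negative function-value gap $-\bigl(f_{\mu_2}(x_{s-1},y^*(x_{s-1}))-f_{\mu_2}(x_{s-1},y_{s-1})\bigr)$. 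The quadratic term $\eta_2^2\bE\|H_{\mu_2}\|_2^2$ is where Lemma~\ref{lemma:mini batch upper bound} enters, bounding it by $\eta_2^2\bigl(3\|\nabla_y f(x_{s-1},y_{s-1})\|_2^2+\varrho_2(\epsilon,\mu_2)\bigr)$. I would then (i) pass from $f_{\mu_2}$ back to $f$ via Lemma~\ref{lemma:NS17-Eq19}, paying an additive $2\eta_2\mu_2^2\ell d_2$; (ii) bound $3\|\nabla_y f\|_2^2\le 6\ell\bigl(f(x_{s-1},y^*)-f(x_{s-1},y_{s-1})\bigr)$ using the gradient-Lipschitz inequality \eqref{Lip-gradient} with $L_h=\ell$; and (iii) substitute $\eta_2=1/(6\ell)$, so the two function-gap contributions combine to the net coefficient $-\tfrac{1}{6\ell}\bigl(f(x_{s-1},y^*)-f(x_{s-1},y_{s-1})\bigr)$. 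Finally the strong-concavity lower bound \eqref{strongly-convex}, $f(x_{s-1},y^*)-f(x_{s-1},y_{s-1})\ge \tfrac{\tau}{2}\|y_{s-1}-y^*(x_{s-1})\|_2^2$, yields the contraction factor $1-\tfrac{\tau}{12\ell}=1-\tfrac{1}{12\kappa}$, and collecting the leftover additive pieces gives $\varrho(\mu_2,\epsilon)$.

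The step requiring the most care is not conceptual but the constant bookkeeping: after multiplying $\varrho_2(\epsilon,\mu_2)=\epsilon^2/2+\mu_2^2\ell^2(d_2+3)^3/2+\mu_2^2\ell^2(d_2+6)^2\epsilon^2/8$ by $\eta_2^2=1/(36\ell^2)$ and adding the $f_{\mu_2}\!\to\! f$ error $2\eta_2\mu_2^2\ell d_2=\mu_2^2 d_2/3$, I must verify that the pieces assemble into the stated $\varrho(\mu_2,\epsilon)$; in particular the genuinely new term $\epsilon^2/(72\ell^2)$ is precisely the image of the $\epsilon^2/2$ sampling-variance term of $\varrho_2$ and is what distinguishes this lemma from its deterministic counterpart. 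I would also track the signs in the concavity/strong-concavity steps and confirm the measurability point that $y^*(x_{s-1})$ is deterministic under the conditional expectation, so that the unbiasedness identity applies cleanly to the cross term.
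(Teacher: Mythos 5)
Your proposal is correct and follows essentially the same route as the paper's own proof: nonexpansiveness of the projection, conditional unbiasedness plus concavity of $f_{\mu_2}$ for the cross term, Lemma~\ref{lemma:mini batch upper bound} for the second moment, Lemma~\ref{lemma:NS17-Eq19} to return from $f_{\mu_2}$ to $f$, the gradient-Lipschitz bound $3\|\nabla_y f\|_2^2\le 6\ell\,(f(x_{s-1},y^*)-f(x_{s-1},y_{s-1}))$, and strong concavity for the factor $1-\tfrac{1}{12\kappa}$. Your bookkeeping in fact yields $\varrho=\mu_2^2 d_2/3+\mu_2^2(d_2+3)^3/72+\mu_2^2(d_2+6)^2\epsilon^2/288+\epsilon^2/(72\ell^2)$, which matches the constants the paper itself uses in the analogous \texttt{ZO-SGDMSA} inner-loop lemma, so the statement's $(d_2+3)^2$ and $/576$ are typos, and your reading of $\tilde{\epsilon}(\mu_2)$ as the displayed $\varrho(\mu_2,\epsilon)$ is likewise the intended one.
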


\begin{proof}
According to the updates in Algorithm \ref{ZO-SGDA}, we have
\[\ba{lll}
\|y^*(x_{s-1})-y_s\|^2 & = & \|\Proj_{\mathcal{Y}}(y_{s-1} + \eta_2 H_{\mu_2}(x_{s-1},y_{s-1},\bu_{\cM_2},\xi_{\cM_2}) - y^*(x_{s-1}))\|_2^2\\
                       & \leq & \|y^*(x_{s-1})-y_{s-1}\|^2 +2\eta_2\langle H_{\mu_2}(x_{s-1},y_{s-1},\bu_{\cM_2},\xi_{\cM_2}), y_{s-1} - y^*({x_{s-1}})\rangle \\ && + \eta_2^2\|H_{\mu_2}(x_{s-1},y_{s-1},\bu_{\cM_2},\xi_{\cM_2})\|_2^2.
\ea\]
For a given $s$, denote by $\bE$ taking expectation with respect to random samples $\bu_{\cM_2},\xi_{\cM_2}$ conditioned on all previous iterations. By taking expectation to both sides of this inequality, we obtain
\[\ba{ll}
   & \bE \|y^*(x_{s-1})-y_s\|^2 \\
\leq & \bE \|y^*(x_{s-1})-y_{s-1}\|^2 - 2\eta_2 \langle -\nabla_y f_{\mu_2}(x_{s-1}, y_{s-1}), y_{s-1} - y^*({x_{s-1}})\rangle + \eta_2^2 \bE\|H_{\mu_2}(x_{s-1},y_{s-1},\bu_{\cM_2},\xi_{\cM_2})\|_2^2\\
\leq & \bE \|y^*(x_{s-1})-y_{s-1}\|^2 - 2\eta_2[f_{\mu_2}(x_{s-1},y^*({x_{s-1}})) - f_{\mu_2}(x_{s-1},y_{s-1})] \\
     & + \eta_2^2 \Bigl(3\|\nabla_y f(x_{s-1},y_{s-1})\|_2^2 + \epsilon(\mu_2)\Bigr) \\
\leq &\bE \|y^*(x_{s-1})-y_{s-1}\|^2 - 2\eta_2(f(x_{s-1},y^*({x_{s-1}})) - f(x_{s-1},y_{s-1})) + 2\mu_2^2 d_2 \eta_2 \ell\\
     & + \eta_2^2(6\ell(f(x_{s-1},y^*({x_{s-1}}))-f(x_{s-1},y_{s-1}))+\eta_2^2\varrho_2(\epsilon,\mu_2)\\
= & \bE \|y^*(x_{s-1})-y_{s-1}\|^2 - (f(x_{s-1},y^*({x_{s-1}}))-f(x_{s-1},y_{s-1}))/(6\ell) + \varrho(\mu_2,\epsilon)\\
\leq & \bE \|y^*(x_{s-1})-y_{s-1}\|^2\Bigl(1-\frac{\tau}{12\ell}\Bigr) + \varrho(\mu_2,\epsilon),
\ea\]
where the second inequality is due to the concavity of $f_{\mu_2}(x_{s-1},\cdot)$ (see Lemma \ref{lemma:f-mu-convex}) and Lemma \ref{lemma:mini batch upper bound}, the third inequality is due to Lemma \ref{lemma:smooth-stronglyconvex} and Lemma \ref{lemma:NS17-Eq19}, the equality is due to $\eta_2 = 1/(6\ell)$, and the last inequality is due to Lemma \ref{lemma:smooth-stronglyconvex}. This completes the proof.
\end{proof}

We now prove the following upper bound of $\bE \|y_s - y^*(x_s)\|_2^2$.

\begin{lemma}\label{lemma:upper bound ZO SGDA}
Consider \texttt{ZO-SGDA} (Algorithm \ref{ZO-SGDA}). Use the same notation and the same assumptions as in Lemma \ref{lemma:ZO-SGDA-useful-inequality}. Denote $\delta_s = \|y_s - y^*(x_s)\|_2^2$ and set $\eta_1$ as in \eqref{eta1}, and 
\be\label{gamma s}
\gamma := 1- \frac{1}{24\kappa} + 144\ell^2\kappa^3\eta_1^2\leq 1-\frac{5}{144\kappa}<1.
\ee 
It holds that
\be\label{lemma:upper bound ZO SGDA-inequality}
\bE \delta_s \leq \gamma^s \bE \delta_0 + \alpha_1 \sum_{i=0}^{s-1} \gamma^{s-1-i}\bE\|\nabla g(x_{i-1})\|_2^2 + \theta_0\sum_{i=0}^{s-1}\gamma^{s-1-i},
\ee
where 
\be\label{alpha1 s}\alpha_1 = \frac{9}{12^8\kappa(\kappa+1)^4(\ell+1)^2}, \ \theta_0 = \alpha_2 \varrho_2(\epsilon,\mu_2) + 2\varrho(\mu_2,\epsilon), \ \alpha_2 = \frac{1}{8\times 12^7\kappa(\kappa+1)^4}.
\ee
\end{lemma}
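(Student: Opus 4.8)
The plan is to mirror the proof of the deterministic counterpart, Lemma~\ref{lemma:upper bound ZO GDA}, replacing the deterministic mini-batch ingredients by their stochastic analogues. First I would set up the filtration $\mathcal{F}_s = \{x_s,y_s,\ldots,x_1,y_1\}$ and let $\bE$ denote the conditional expectation with respect to the fresh randomness used at step $s$ --- now both the Gaussian smoothing vectors $\bu_{\cM_1},\bu_{\cM_2}$ \emph{and} the i.i.d. samples $\xi_{\cM_1},\xi_{\cM_2}$ --- followed by the tower property over $\mathcal{F}_s$. The unbiasedness $\bE\, G_{\mu_1}(x,y,\bu_{\cM_1},\xi_{\cM_1}) = \nabla_x f_{\mu_1}(x,y)$ established in Section~2.2 is what allows the cross term to be handled exactly as in the deterministic case.

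The core one-step estimate is obtained by the same Young's-inequality split used in~\eqref{lemma:upper bound ZO GDA-proof-1}: writing $\bE\delta_s = \bE\|y^*(x_s)-y_s\|_2^2$, I would bound it by $(1+\tfrac{1}{2(12\kappa-1)})\bE\|y^*(x_{s-1})-y_s\|_2^2 + (1+2(12\kappa-1))\bE\|y^*(x_s)-y^*(x_{s-1})\|_2^2$. The first term is controlled by the stochastic descent inequality~\eqref{lemma:ZO-SGDA-useful-inequality-inequality} of Lemma~\ref{lemma:ZO-SGDA-useful-inequality}, which supplies the $(1-\tfrac{1}{12\kappa})$ contraction together with the additive constant $\varrho(\mu_2,\epsilon)$. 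The second term is handled through the $\kappa$-Lipschitz continuity of $y^*(\cdot)$ from Lemma~\ref{lemma:smoothed convexity}, giving $\|y^*(x_s)-y^*(x_{s-1})\|_2^2 \leq \kappa^2\|x_s-x_{s-1}\|_2^2$, after which I substitute the update $x_s-x_{s-1} = -\eta_1 G_{\mu_1}(x_{s-1},y_{s-1},\bu_{\cM_1},\xi_{\cM_1})$. With $\eta_1$ as in~\eqref{eta1} and the identification $24\kappa^3\eta_1^2 = \alpha_1/6$, this produces the recursion
\[
\bE\delta_s \leq \Bigl(1-\tfrac{1}{24\kappa}\Bigr)\bE\delta_{s-1} + \tfrac{\alpha_1}{6}\,\bE\|G_{\mu_1}(x_{s-1},y_{s-1},\bu_{\cM_1},\xi_{\cM_1})\|_2^2 + 2\varrho(\mu_2,\epsilon).
\]

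Next I would bound the mini-batch second moment of $G_{\mu_1}$. Invoking the first inequality of Lemma~\ref{lemma:mini batch upper bound} together with the decomposition $\|\nabla_x f(x_{s-1},y_{s-1})\|_2^2 \leq 2\|\nabla g(x_{s-1})\|_2^2 + 2\ell^2\|y^*(x_{s-1})-y_{s-1}\|_2^2$ (which uses $\nabla g(x)=\nabla_x f(x,y^*(x))$ and the $\ell$-gradient-Lipschitz assumption), I obtain the stochastic analogue of~\eqref{g upper bound}, namely $\bE\|G_{\mu_1}(\cdots)\|_2^2 \leq 6\bE\|\nabla g(x_{s-1})\|_2^2 + 6\ell^2\bE\delta_{s-1} + \varrho_1(\epsilon,\mu_1)$. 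Substituting this back collects the contraction factor $\gamma = 1-\tfrac{1}{24\kappa}+\alpha_1\ell^2$, which is exactly~\eqref{gamma s} and satisfies $\gamma<1$, leaves the coefficient $\alpha_1$ in front of $\bE\|\nabla g(x_{s-1})\|_2^2$, and assembles the constant additive term $\theta_0$ of~\eqref{alpha1 s} (the scaled noise contribution $\tfrac{\alpha_1}{6}\varrho_1(\epsilon,\mu_1)$ together with the $2\varrho(\mu_2,\epsilon)$ from the descent step). Unrolling the resulting linear recursion $\bE\delta_s \leq \gamma\,\bE\delta_{s-1} + \alpha_1\bE\|\nabla g(x_{s-1})\|_2^2 + \theta_0$ from $s$ down to $0$ then delivers the closed form~\eqref{lemma:upper bound ZO SGDA-inequality}.

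The main obstacle, relative to the deterministic argument, is the careful bookkeeping of the two independent sources of randomness inside each mini-batch. One must verify that the samples drawn at step $s$ are conditionally independent of $\mathcal{F}_s$, so that the unbiasedness and the variance-reduced second-moment bound of Lemma~\ref{lemma:mini batch upper bound} can be applied \emph{inside} the conditional expectation before the tower property is used; the $\epsilon$-dependent batch sizes $|\cM_1|,|\cM_2|$ are precisely what drive the noise into the controllable quantities $\varrho_1(\epsilon,\mu_1)$ and $\varrho(\mu_2,\epsilon)$. Once this is in place, the remaining work --- the Young's-inequality weights, the verification of $\gamma$, and the geometric-series unrolling --- is identical to Lemma~\ref{lemma:upper bound ZO GDA} and requires no new ideas.
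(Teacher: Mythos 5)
Your proposal is correct and follows essentially the same route as the paper's proof: the same filtration setup with conditional expectation over the fresh mini-batch randomness, the identical Young's-inequality split with weights $1+\tfrac{1}{2(12\kappa-1)}$ and $1+2(12\kappa-1)$, the contraction from Lemma~\ref{lemma:ZO-SGDA-useful-inequality}, the $\kappa$-Lipschitzness of $y^*(\cdot)$, the identification $24\kappa^3\eta_1^2=\alpha_1/6$, the second-moment bound from Lemma~\ref{lemma:mini batch upper bound} combined with $\|\nabla_x f\|_2^2\leq 2\|\nabla g\|_2^2+2\ell^2\delta$, and the geometric unrolling. In fact your bookkeeping is slightly more careful than the paper's: the noise term entering $\theta_0$ from the bound on $G_{\mu_1}$ should indeed be $\varrho_1(\epsilon,\mu_1)$ as you write (with coefficient $\alpha_1/6\leq\alpha_2$), whereas the paper's statement and its display \eqref{g upper bound s} write $\varrho_2(\epsilon,\mu_2)$, an apparent typo.
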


\begin{proof}
Define the filtration $\mathcal{F}_s = \Bigl\{x_s, y_s,x_{s-1},y_{s-1},...,x_{1},y_{1}\Bigr\}$. Let $\zeta_s = (\bu_{\cM_1},\xi_{\cM_1},\bu_{\cM_2},\xi_{\cM_2}),\zeta_{[s]} = (\zeta_1,\zeta_2,...,\zeta_s)$. Denote by $\bE$ taking expectation w.r.t $\zeta_{[s]}$ conditioned on $\mathcal{F}_{s}$ and then taking expectation over $\mathcal{F}_{s}$. Since $\kappa > 1$, using the Young's inequality, we have
\be\label{lemma:upper bound ZO SGDA-proof-1}\ba{ll}
\bE \delta_s & = \bE\|y^*(x_s) - y_s\|_2^2 \\
             & \leq \Bigl(1 + \frac{1}{2(12\kappa-1)}\Bigr)\bE\|y^*(x_{s-1}) - y_{s}\|_2^2 + \Bigl(1 + 2(12\kappa-1)\Bigr)\bE\|y^*(x_s) - y^*(x_{s-1})\|_2^2\\
             & \leq (\frac{24\kappa-1}{2(12\kappa-1)})(1-\frac{1}{12\kappa})\bE\|y^*(x_{s-1}) - y_s\|_2^2 + 24\kappa \bE\|y^*(x_s) - y^*(x_{s-1})\|_2^2+2\varrho(\mu_2,\epsilon)\\
             & \leq (1 - \frac{1}{24\kappa})\bE \|y^*(x_{s-1}) - y_{s-1}\|_2^2 + 24\kappa^3\bE\|x_s - x_{s-1}\|_2^2 + 2\varrho(\mu_2,\epsilon)\\
             & = (1-\frac{1}{24\kappa})\bE\delta_{s-1}+24\kappa^3\eta_1^2\bE\|G_{\mu_1}(x_{s-1},y_{s-1},\bu_{\cM_1},\xi_{\cM_1})\|_2^2+ 2\varrho(\mu_2,\epsilon) \\
             & = (1-\frac{1}{24\kappa})\bE\delta_{s-1}+\frac{\alpha_1}{6}\bE\|G_{\mu_1}(x_{s-1},y_{s-1},\bu_{\cM_1},\xi_{\cM_1})\|_2^2+ 2\varrho(\mu_2,\epsilon),
\ea\ee
where 
the second inequality is due to \eqref{lemma:ZO-SGDA-useful-inequality-inequality}, the third inequality is due to Lemma \ref{lemma:smoothed convexity}. From Lemma \ref{lemma:mini batch upper bound}, we have
\be\label{g upper bound s}\ba{ll}
& \bE_{\bu_{\cM_1},\xi_{\cM_1}} \|G_{\mu_1}(x_{s-1},y_{s-1},\bu_{\cM_1},\xi_{\cM_1})\|_2^2 \\
\leq & 3\bE\|\nabla_x f(x_{s-1},y_{s-1})\|_2^2 + \varrho_2(\epsilon,\mu_2)\\
\leq & 6\bE\|\nabla g(x_{s-1})\|_2^2 + 6\ell^2\bE\|y^*(x_{s-1}) - y_{s-1}\|_2^2 +\varrho_2(\epsilon,\mu_2),
\ea\ee
where the second inequality is due to Assumption \ref{assumption:A1-A2-A4}. Combining \eqref{lemma:upper bound ZO SGDA-proof-1} and \eqref{g upper bound s} yields \eqref{lemma:upper bound ZO SGDA-inequality} by noting \eqref{gamma s}.
\end{proof}

Similar to Lemma \ref{lemma:ZO-SGDA-useful-inequality}, we can prove the following result under the SGC assumption, i.e., Assumption \ref{SGC-assumption}.

\begin{lemma}(Linear convergence rate under SGC) \label{linear_rate_under_sgc} Under the SGC assumption (Assumption \ref{SGC-assumption}), we have:
\[
\bE \|y^*(x_{s-1})-y_s\|^2  \leq \bE \|y^*(x_{s-1})-y_{s-1}\|^2\Bigl(1-\frac{\tau}{12\ell}\Bigr) + \bar{\varrho}(\mu_2,\rho_2),
\]
where $\bar{\varrho}(\mu_2,\rho_2) = \mu_2^2d_2/3 +\frac{1}{36\ell^2}\Bigl(\frac{\mu_2^2 }{\rho_2}L^2 (d_2+6)+ \mu_2^2 \ell^2 (d_2+3)^3/2\Bigr)$  with $\eta_2 = \frac{1}{6\ell}$.
\end{lemma}
	
\begin{proof}
The proof is the almost identical to the proof of Lemma \ref{lemma:ZO-SGDA-useful-inequality}. The only difference is that we need to use Lemma \ref{lemma:SGC-zo-gradient-bound} instead of Lemma \ref{lemma:mini batch upper bound}. We omit the details for succinctness. 
\end{proof}

Now we are ready to prove Theorem \ref{ZO_SGDA_thm}.

\begin{proof}{({\bf Proof of Theorem \ref{ZO_SGDA_thm}.})}
We first prove part 1. First, the following inequalities hold:
\[\ba{ll}
     & g(x_{s+1}) \\
\leq & g(x_s) - \eta_1 \langle \nabla g(x_{s}),G_{\mu_1}(x_s,y_{s},\bu_{\cM_1},\xi_{\cM_1})\rangle + \frac{1}{2}L_g \eta_1^2 \|G_{\mu_1}(x_s,y_{s},\bu_{\cM_1},\xi_{\cM_1})\|_2^2\\
=    & g(x_s)-\eta_1\Bigl\langle \nabla_x f(x_s,y^*(x_s)) - \nabla_x f_{\mu_1}(x_s,y^*(x_s))+ \nabla_x f_{\mu_1}(x_s,y^*(x_s)) - \nabla_x f_{\mu_1}(x_s,y_{s})\\
& +\nabla_x f_{\mu_1}(x_s,y_{s}), G_{\mu_1}(x_s,y_{s},\bu_{\cM_1},\xi_{\cM_1})\Bigr\rangle + \frac{1}{2}L_g \eta_1^2 \|G_{\mu_1}(x_s,y_{s},\bu_{\cM_1},\xi_{\cM_1})\|_2^2\\
\leq & g(x_s) + \|\nabla_x f(x_s,y^*(x_s)) - \nabla_x f_{\mu_1}(x_s,y^*(x_s))\|^2/L_g + \frac{L_g\eta_1^2}{4}\|G_{\mu_1}(x_s,y_{s},\bu_{\cM_1},\xi_{\cM_1})\|^2 \\
     & + \|\nabla_x f_{\mu_1}(x_s,y^*(x_s)) - \nabla_x f_{\mu_1}(x_s,y_{s})\|^2/L_g + \frac{L_g\eta_1^2}{4}\|G_{\mu_1}(x_s,y_{s},\bu_{\cM_1},\xi_{\cM_1})\|^2 \\
     & -\eta_1\langle\nabla_x f_{\mu_1}(x_s,y_{s}), G_{\mu_1}(x_s,y_{s},\bu_{\cM_1},\xi_{\cM_1})\rangle + \frac{1}{2}L_g \eta_1^2 \|G_{\mu_1}(x_s,y_{s},\bu_{\cM_1},\xi_{\cM_1})\|_2^2 \\
\leq & g(x_s) + \frac{\ell^2}{L_g}\|y^*(x_s) - y_{s}\|_2^2  - \eta_1 \langle \nabla_x f_{\mu_1}(x_s,y_{s}), G_{\mu_1}(x_s,y_{s},\bu_{\cM_1},\xi_{\cM_1})\rangle \\
& + \eta_1^2L_g \|G_{\mu_1}(x_s,y_{s},\bu_{\cM_1},\xi_{\cM_1})\|_2^2 + \frac{\mu_1^2}{4L_g}\ell^2(d_1 +3)^{3},
\ea\]
where the first inequality is due to Lemma \ref{lemma:smoothed convexity}, the second inequality is due to Young's inequality, and the last inequality is due to Lemmas \ref{lemma:diff-f-fmu} and \ref{lemma:smooth of smoothed function}. Now take expectation with respect to $\bu_{\cM_1}$,$\xi_{\cM_1}$ to the above inequality, we get:
\be\label{proof-thm-ZO-SGDA-inequality-1}
\ba{ll}
    \eta_1 \bE \|\nabla_x f_{\mu_1}(x_s,y_{s})\|_2^2 \leq &\bE g(x_s) - \bE g(x_{s+1}) + \frac{\ell^2}{L_g}\bE\|y^*(x_s)-y_{s}\|_2^2 \\
    &+ \eta_1^2 L_g \bE\|G_{\mu_1}(x_s,y_{s},\bu_{\cM_1},\xi_{\cM_1})\|_2^2 + \frac{\mu_1^2}{4L_g}\ell^2(d_1 +3)^3.
\ea
\ee
From Lemma \ref{lemma:smooth of smoothed function}, we have
\be\label{proof-thm-ZO-SGDA-inequality-2}
    \eta_1 \bE\|\nabla_x f_{\mu_1}(x_s,y^*(x_s))\|_2^2 \leq 2\eta_1\bE\|\nabla_x f_{\mu_1}(x_s,y_s)\|_2^2 + 2\eta_1\ell^2 \|y_s - y^*(x_s)\|_2^2.
\ee
From Lemma \ref{lemma:diff-f-fmu}, we have
\be\label{proof-thm-ZO-SGDA-inequality-3}
    \eta_1\|\nabla g(x_s)\|_2^2 \leq 2\eta_1\|\nabla_x f_{\mu_1}(x_s,y^*(x_s))\|_2^2 + \frac{\eta_1 \mu_1^2}{2}\ell^2 (d_1+3)^3.
\ee
Combining \eqref{g upper bound s}, \eqref{proof-thm-ZO-SGDA-inequality-1}, \eqref{proof-thm-ZO-SGDA-inequality-2}, \eqref{proof-thm-ZO-SGDA-inequality-3} yields,
\be\label{proof-thm-ZO-SGDA-inequality-4}
\ba{ll}
     &\eta_1 \bE\|\nabla g(x_s)\|_2^2 \\
\leq &4\bE g(x_s) - 4\bE g(x_{s+1}) + \Bigl(\frac{4\ell^2}{L_g } + 4\eta_1\ell^2\Bigr)\bE\|y^*(x_s)-y_{s}\|_2^2+ \frac{\mu_1^2}{L_g}\ell^2(d_1 +3)^3 + \frac{\eta_1\mu_1^2}{2}\ell^2(d_1+3)^3\\
    & + 4\eta_1^2L_g\Bigl[6\bE\|\nabla g(x_{s})\|_2^2 + 6\ell^2\bE\|y^*(x_{s}) - y_{s}\|_2^2+\epsilon(\mu_2)\Bigr]\\
=   & 4\bE g(x_s) - 4\bE g(x_{s+1}) +24\eta_1^2L_g \bE\|\nabla g(x_{s})\|_2^2+ \theta_1\bE\delta_s + \theta_2,
\ea\ee
where 
\be\label{theta-1 s} 
\theta_1=\frac{4 \ell^2}{L_g } + 4\eta_1 \ell^2+24\eta_1^2L_g\ell^2\leq 4\ell+4\eta_1 \ell^2+24\eta_1^2\ell^3(\kappa+1),
\ee
and 
\be\label{theta-2 s}
\ba{ll}
&\theta_2 = \frac{\mu_1^2}{L_g}\ell^2(d_1 +3)^3 + \frac{\eta_1 \mu_1^2}{2}\ell^2(d_1+3)^3 + 4\eta_1^2L_g\epsilon(\mu_2)\\
\leq &\mu_1^2\ell(d_1 +3)^3 + \frac{\eta_1 \mu_1^2}{2}\ell^2(d_1+3)^3 + 4\eta_1^2(\kappa+1)\ell \epsilon(\mu_2)\\
\leq & \mu_1^2\ell(d_1 +3)^3 + \frac{\eta_1 \mu_1^2}{2}\ell^2(d_1+3)^3 + \eta_1^2(\kappa+1)\ell^3\Bigl(2\mu_1^2 (d_1+3)^3 + \frac{\mu_1^2 (d_1+6)^2 \epsilon^2}{2}\Bigr) \\ & + 2\eta_1^2 (\kappa+1)\ell \epsilon^2,
\ea
\ee
where we have used the definition of $L_g:=\ell(\kappa+1)$. Taking sum over $s=0,\ldots,S$ to both sides of \eqref{proof-thm-ZO-SGDA-inequality-4}, we get 
\be\label{proof-thm-ZO-SGDA-inequality-5}
\sum_{s=0}^S\bE \delta_s \leq \sum_{s=0}^S\gamma^s \bE \delta_0 + \alpha_1 \sum_{s=0}^S\sum_{i=0}^{s-1} \gamma^{s-1-i}\bE\|\nabla g(x_{i-1})\|_2^2 + \theta_0\sum_{s=0}^S\sum_{i=0}^{s-1}\gamma^{s-1-i}.
\ee
Moreover, from \eqref{gamma s} it is easy to obtain 
\be\label{proof-thm-ZO-SGDA-inequality-6}
\sum_{s=0}^S \gamma^s \leq 36\kappa, \quad \sum_{s=0}^S\sum_{i=0}^{s-1}\gamma^{s-1-i} \leq 36\kappa(S+1), 
\ee
and 
\be\label{proof-thm-ZO-SGDA-inequality-7}
\sum_{s=0}^S\sum_{i=0}^{s-1} \gamma^{s-1-i}\bE\|\nabla g(x_{i-1})\|_2^2 \leq 36\kappa\sum_{s=0}^S\bE\|\nabla g(x_{s})\|_2^2.
\ee
Substituting \eqref{proof-thm-ZO-SGDA-inequality-6} and \eqref{proof-thm-ZO-SGDA-inequality-7} into \eqref{proof-thm-ZO-SGDA-inequality-5}, we obtain
\be\label{proof-thm-ZO-SGDA-inequality-8}
\sum_{s=0}^S\bE \delta_s \leq 36\kappa\bE \delta_0 + 36\kappa\alpha_1 \sum_{s=0}^S\bE\|\nabla g(x_{s})\|_2^2 + 36\kappa\theta_0(S+1).
\ee
Now, summing \eqref{proof-thm-ZO-SGDA-inequality-4} over $s=0,\ldots,S$ yields
\be\label{proof-thm-ZO-SGDA-inequality-9}\ba{ll}
     &\eta_1 \sum_{s=0}^S\bE\|\nabla g(x_s)\|_2^2 \\
=    & 4\bE g(x_0) - 4\bE g(x_{S+1}) +24\eta_1^2L_g \sum_{s=0}^S\bE\|\nabla g(x_{s})\|_2^2+ \theta_1\sum_{s=0}^S\bE\delta_s + (S+1)\theta_2 \\
\leq & 4\bE g(x_0) - 4\bE g(x_{S+1}) +24\eta_1^2L_g \sum_{s=0}^S\bE\|\nabla g(x_{s})\|_2^2 \\ 
     & + \theta_1 [36\kappa\bE \delta_0 + 36\kappa\alpha_1 \sum_{s=0}^S\bE\|\nabla g(x_{s})\|_2^2 + 36\kappa\theta_0(S+1)] + (S+1)\theta_2,
\ea\ee
where the second inequality is from \eqref{proof-thm-ZO-SGDA-inequality-8}. Using \eqref{theta-1 s}, \eqref{alpha1 s} and \eqref{eta1}, it is easy to verify that 
\[36\kappa\theta_1\alpha_1 \leq \left(\frac{108}{3\times 12^3} + \frac{108}{12^7}+\frac{54}{4\times 12^{10}}\right)\eta_1 \leq 0.021\eta_1,\]
which together with $L_g:=(\kappa+1)\ell$ yields 
\be\label{coefficient-1 s}
36\kappa\theta_1\alpha_1 + 24\eta_1^2L_g \leq 0.021\eta_1 + 0.0003\eta_1 = 0.0213\eta_1.
\ee
Combining \eqref{proof-thm-ZO-SGDA-inequality-9} and \eqref{coefficient-1 s} yields 
\be\label{proof-thm-ZO-SGDA-inequality-10}\ba{ll}
     & 0.9787 \eta_1 \sum_{s=0}^S\bE\|\nabla g(x_s)\|_2^2 \\
\leq & 4\bE g(x_0) - 4\bE g(x_{S+1}) + \theta_1 [36\kappa\bE \delta_0 + 36\kappa\theta_0(S+1)] + (S+1)\theta_2.
\ea\ee
Dividing both sides of \eqref{proof-thm-ZO-SGDA-inequality-10} by $0.9787 \eta_1(S+1)$ yields 
\be\label{proof-thm-ZO-SGDA-inequality-11}\ba{ll}
 \frac{1}{S+1} \sum_{s=0}^S\bE\|\nabla g(x_s)\|_2^2 \leq  \frac{4\Delta_g}{0.9787 \eta_1(S+1)} + \frac{36\kappa\theta_1 \bE \delta_0}{0.9787 \eta_1(S+1)} + \frac{36\kappa\theta_1 \theta_0}{0.9787 \eta_1} + \frac{\theta_2}{0.9787 \eta_1},
\ea\ee
where $\Delta_g := g(x_0)-\min_{x\in\br^{d_1}} g(x)$. Now we only need to upper bound the right hand side of \eqref{proof-thm-ZO-SGDA-inequality-11} by $O(\epsilon^2)$. Note that by the choice of parameters in \eqref{thm3.1-S-mu1-mu2}, the right hand side of \eqref{proof-thm-ZO-SGDA-inequality-11} is $O(\epsilon^2) + O(\epsilon^4)$. Hence, with $\epsilon \in (0,1)$, we get the required result. This completes the proof of the part 1 of Theorem \ref{ZO_SGDA_thm}.

We now prove part 2. Denote $\delta_s = \|y_s - y^*(x_s)\|_2^2$ and set $\eta_1$ as in \eqref{eta1}, and $\gamma$ is defined as in \eqref{gamma s}.

From Lemma \ref{linear_rate_under_sgc} we have:
\[
\bE \|y^*(x_{s-1})-y_s\|^2  \leq \bE \|y^*(x_{s-1})-y_{s-1}\|^2\Bigl(1-\frac{\tau}{12\ell}\Bigr) + \bar{\varrho}(\mu_2,\rho_2).
\]
Using Young's inequality on $\delta_s$, we have:
\[
\bE \delta_s \leq \Bigl(1-\frac{1}{24\kappa}\Bigr)\bE\delta_{s-1}+\frac{\alpha_1}{6}\bE\|G_{\mu_1}(x_{s-1},y_{s-1},\bu_{\cM_1},\xi_{\cM_1})\|_2^2+ 2\bar{\varrho}(\mu_2,\rho_2).
\]
Following the same way for proving \eqref{proof-thm-ZO-SGDA-inequality-8}, it is easy to show that 
\[
\bE \delta_s \leq \gamma^s \bE \delta_0 + \alpha_1 \sum_{i=0}^{s-1} \gamma^{s-1-i}\bE\|\nabla g(x_{i-1})\|_2^2 + \theta_0\sum_{i=0}^{s-1}\gamma^{s-1-i},
\]
in which
\be\label{alpha1 s}\alpha_1 = \frac{9}{12^8\kappa(\kappa+1)^4(\ell+1)^2}, \ \bar{\theta}_0 = \alpha_2 \bar{\varrho}_2(\mu_2,\rho_2) + 2\bar{\varrho}(\mu_2,\rho_2), \ \alpha_2 = \frac{1}{8\times 12^7\kappa(\kappa+1)^4}.
\ee
Using the above expressions and following the result of \eqref{proof-thm-ZO-SGDA-inequality-4}, we have:
	\be\label{SGC-temp-1}
     0.9787 \eta_1 \sum_{s=0}^S\bE\|\nabla g(x_s)\|_2^2 \leq 4\bE g(x_0) - 4\bE g(x_{S+1}) + \bar{\theta}_1 [36\kappa\bE \delta_0 + 36\kappa\bar{\theta}_0(S+1)] + (S+1)\bar{\theta}_2,
	\ee
	with 
\[
\ba{ll}
		\bar{\theta}_1 &= \Bigl(4\ell^2/L_g + 4\eta_1 \ell^2 + 24\eta_1^2 L_g \ell^2 \Bigr)\\
		\bar{\theta}_2 &= \frac{\mu_1^2}{L_g}\ell^2(d_1 +3)^3 + \frac{\eta_1\mu_1^2}{2}\ell^2(d_1+3)^3 + 4\eta_1L_g \bar{\varrho}_1(\mu_1,\rho_1).
		\ea
\]
	Divide both sides of \eqref{SGC-temp-1} by $0.9787\eta_1(S+1)$, we get
		\be\label{proof_SGDA_final_bound-0}
 \frac{1}{S+1} \sum_{s=0}^S\bE\|\nabla g(x_s)\|_2^2 \leq  \frac{4\Delta_g}{0.9787 \eta_1(S+1)} + \frac{36\kappa\bar{\theta}_1 \bE \delta_0}{0.9787 \eta_1(S+1)} + \frac{36\kappa\bar{\theta}_1 \bar{\theta}_0}{0.9787 \eta_1} + \frac{\bar{\theta}_2}{0.9787 \eta_1}.
\ee
{According to Remark \ref{remark:cversusuc}, we know that $\bE \delta_0$ is upper bounded by a constant.}
Choosing $\mu_1 = \cO (\min(1,\rho_1)\ell (d_1+3)^{3/2} )$, $\mu_2 = \cO (\min(1,\rho_2)\ell (d_2+3)^{3/2} )$ , we guarantee that the right hand side of \eqref{proof_SGDA_final_bound-0} is upper bounded by $O(\epsilon^2) + O(\epsilon^4)$. Under Assumption \ref{SGC-assumption}, since we choose $|\cM_1| =\cO(\rho_1 d_1), |\cM_2| = \cO(\rho_1 d_2) $ the total number of calls to stochastic zeroth-order oracle is $\cO\Bigl(\kappa^5 (d_1\rho_1+d_2\rho_2)\epsilon^{-2}\Bigr)$. This completes the proof of part 2.

\end{proof}

\section{Convergence analysis of \texttt{ZO-SGDMSA} (Algorithm \ref{alg:ZO-SGDMSA})}

First, we show the following iteration complexity of the inner loop for $y$ in Algorithm \ref{alg:ZO-SGDMSA}.
\begin{lemma}\label{GDmax_linear_convergence}
In Algorithm \ref{alg:ZO-SGDMSA}, setting $\eta_2=1/(6\ell)$, $\mu_2=\cO(\kappa^{-1/2}d_2^{-3/2}\epsilon)$ and $T= \mathcal{O}(\kappa  \log(\epsilon^{-1}))$. For fixed $x_s$ in the $s$-th iteration, we have $\bE\|y^*(x_s)-y_{T}(x_s)\|_2^2 \leq \epsilon^2$.
\end{lemma}

\begin{proof}
According to the updates in Algorithm \ref{alg:ZO-SGDMSA}, we have
\[\ba{ll}
  & \|y^*(x_s)-y_{t+1}(x_s)\|^2 \\
= &(\|\Proj_{\mathcal{Y}}(y_t(x_s) + \eta_2 H_{\mu_2}(x_s,y_{t}(x_s),\bu_{\cM_2},\xi_{\cM_2}) - y^*(x_s))\|_2^2)\\
\leq & \|y^*(x_s)-y_t(x_s)\|^2 +2\eta_2\langle H_{\mu_2}(x_s, y_{t}(x_s),\bu_{\cM_2},\xi_{\cM_2}),y_t(x_s) - y^*(x_s)\rangle + \eta_2^2\|H_{\mu_2}(x_s, y_{t}(x_s),\bu_{\cM_2},\xi_{\cM_2}\|_2^2.
\ea\]
For a given $s$, denote by $\bE$ taking expectation with respect to random samples $\bu_{\cM_2},\xi_{\cM_2}$ conditioned on all previous iterations. By taking expectation to both sides of this inequality, we obtain
\[\ba{ll}
     & \bE \|y^*(x_s)-y_{t+1}(x_s)\|^2 \\
\leq & \bE \|y^*(x_s)-y_t(x_s)\|^2 - 2\eta_2 \langle -\nabla_y f_{\mu_2}(x_s, y_{t}(x_s)),y_t(x_s) - y^*(x_s)\rangle + \eta_2^2 \bE\|H_{\mu_2}(x_s, y_{t}(x_s),\bu_{\cM_1},\xi_{\cM_1})\|_2^2\\
\leq & \bE \|y^*(x_s)-y_t(x_s)\|^2 - 2\eta_2 \langle -\nabla_y f_{\mu_2}(x_s, y_{t}(x_s)),y_t(x_s) - y^*(x_s)\rangle + \eta_2^2 (3\|\nabla_y f(x_s,y_t(x_s))\|_2^2 + \varrho_2(\epsilon,\mu_2)\\
\leq & \bE \|y^*(x_s)-y_t(x_s)\|^2 - 2\eta_2[f_{\mu_2}(x_s,y^*(x_s)) - f_{\mu_2}(x_s,y_t(x_s))] + \eta_2^2 (3\|\nabla_y f(x_s,y_t(x_s))\|_2^2 + \varrho_2(\epsilon,\mu_2) ) \\
\leq & \bE \|y^*(x_s)-y_t(x_s)\|^2 - 2\eta_2(f(x_s,y^*(x_s)) - f(x_s,y_{t}(x_s))) + 2\mu_2^2 d_2 \eta_2 \ell+ \eta_2^2(6L_2(f(x_s,y^*(x_s))-f(x_s,y_{t}(x_s)))\\
&+\eta_2^2\varrho_2(\epsilon,\mu_2)\\
= & \bE \|y^*(x_s)-y_t(x_s)\|^2 - (f(x_s,y^*(x_s))-f(x_s,y_{t}(x_s)))/(6\ell) \\
& + \epsilon^2/(72\ell^2) + \mu_2^2(d_2+3)^3/72 + \mu_2^2 (d_2+6)^2 \epsilon^2/288\\
\leq & \bE \|y^*(x_s)-y_t(x_s)\|^2 \Bigl(1-\frac{\tau}{12\ell}\Bigr) + \epsilon^2/(72\ell^2) + \mu_2^2(d_2+3)^3/72 + \mu_2^2 (d_2+6)^2 \epsilon^2/288,
\ea\]
where the second inequality is due to Lemma \ref{lemma:mini batch upper bound}, the third inequality is due to the concavity of $f_{\mu_2}(x_s,\cdot)$ (see Lemma \ref{lemma:f-mu-convex}), the fourth inequality is due to Lemmas \ref{lemma:NS17-Eq19} and \ref{lemma:smooth-stronglyconvex}, the equality is due to $\eta_2=1/(6\ell)$, and the last inequality is due to Lemma \ref{lemma:smooth-stronglyconvex}.

Define $\delta = 12\ell( \epsilon^2/(72\ell^2) + \mu_2^2(d_2+3)^3/72 + \mu_2^2 (d_2+6)^2 \epsilon^2/288)/\tau$. From the above inequality, we have
\[\ba{ll}
\bE \|y^*(x_s)-y_t(x_s)\|^2-\delta&\leq (\bE \|y^*(x_s)-y_{t-1}(x_s)\|^2-\delta)\Bigl(1-\frac{\tau}{12\ell}\Bigr) \\
& \leq(\bE \|y^*(x_s)-y_0(x_s)\|^2 - \delta)\Bigl(1-\frac{\tau}{12\ell}\Bigr)^{t}\\
& \leq \bE \|y^*(x_s)-y_0(x_s)\|^2\Bigl(1-\frac{\tau}{12\ell}\Bigr)^{t} \leq D^2 \Bigl(1 - \frac{\tau}{12\ell}\Bigr)^t,
\ea\]
where the last inequality is due to Assumption \ref{assumption:A1-A2-A4}. Now it is clear that in order to ensure that $\bE \|y^*(x_s)-y_T(x_s)\|^2 \leq \epsilon^2$, we need $T = \cO(\kappa  \log(\epsilon^{-1}))$ and $\mu_2=\cO(\kappa^{-1/2}d_2^{-3/2}\epsilon)$.
\end{proof}

We are now ready to prove Theorem \ref{ZO_SGDmax_thm}.

\begin{proof}{({\bf Proof of Theorem \ref{ZO_SGDmax_thm}.})}
We first prove Part 1. First, the following inequalities hold:
\[\ba{ll}
     & g(x_{s+1}) \\
\leq & g(x_s)-\eta_1\langle \nabla_x g(x_{s}),G_{\mu_1}(x_s,y_{s+1},\bu_{\cM_1},\xi_{\cM_1})\rangle +\frac{1}{2}L_g\eta_1^2\|G_{\mu_1}(x_s,y_{s+1},\bu_{\cM_1},\xi_{\cM_1})\|_2^2\\
= & g(x_s)-\eta_1\Bigl\langle \nabla_x f(x_s,y^*(x_s)) - \nabla_x f_{\mu_1}(x_s,y^*(x_s))+ \nabla_x f_{\mu_1}(x_s,y^*(x_s)) - \nabla_x f_{\mu_1}(x_s,y_{s+1})\\ &~+\nabla_x f_{\mu_1}(x_s,y_{s+1}),G_{\mu_1}(x_s,y_{s+1},\bu_{\cM_1},\xi_{\cM_1})\Bigr\rangle + \frac{1}{2}L_g \eta_1^2 \|G_{\mu_1}(x_s,y_{s+1},\bu_{\cM_1},\xi_{\cM_1})\|_2^2\\
\leq & g(x_s) + \|\nabla_x f(x_s,y^*(x_s)) - \nabla_x f_{\mu_1}(x_s,y^*(x_s))\|^2/L_g + \frac{L_g\eta_1^2}{4}\|G_{\mu_1}(x_s,y_{s+1},\bu_{\cM_1},\xi_{\cM_1})\|^2 \\
     & + \|\nabla_x f_{\mu_1}(x_s,y^*(x_s)) - \nabla_x f_{\mu_1}(x_s,y_{s+1})\|^2/L_g + \frac{L_g\eta_1^2}{4}\|G_{\mu_1}(x_s,y_{s+1},\bu_{\cM_1},\xi_{\cM_1})\|^2 \\
     & -\eta_1\langle\nabla_x f_{\mu_1}(x_s,y_{s+1}), G_{\mu_1}(x_s,y_{s+1},\bu_{\cM_1},\xi_{\cM_1})\rangle + \frac{1}{2}L_g \eta_1^2 \|G_{\mu_1}(x_s,y_{s+1},\bu_{\cM_1},\xi_{\cM_1})\|_2^2 \\
\leq & g(x_s) + \frac{\ell^2}{L_g}\|y^*(x_s) - y_{s+1}\|_2^2  - \eta_1 \langle \nabla_x f_{\mu_1}(x_s,y_{s+1}), G_{\mu_1}(x_s,y_{s+1},\bu_{\cM_1},\xi_{\cM_1})\rangle \\
& + \eta_1^2L_g \|G_{\mu_1}(x_s,y_{s+1},\bu_{\cM_1},\xi_{\cM_1})\|_2^2 + \frac{\mu_1^2}{4L_g}\ell^2(d_1 +3)^{3},
\ea\]
where the first inequality is due to Lemma \ref{lemma:smoothed convexity}, the second inequality is due to Young's inequality, and the last inequality is due to Lemmas \ref{lemma:diff-f-fmu} and \ref{lemma:smooth of smoothed function}. Now take expectation with respect to $\bu_{\cM_1},\xi_{\cM_1}$ to the above inequality, we get:
\be\label{proof-thm-ZO-SGDMSA-inequality-1}\ba{ll}
& \eta_1 \bE\|\nabla_x f_{\mu_1}(x_s,y_{s+1})\|_2^2 \\
\leq & \bE g(x_s) -\bE g(x_{s+1}) + \frac{\ell^2}{L_g }\bE\|y^*(x_s)-y_{s+1}\|_2^2 + \eta_1^2 L_g \bE\|G_{\mu_1}(x_s,y_{s+1},\bu_{\cM_1},\xi_{\cM_1})\|_2^2 + \frac{\mu_1^2}{4L_g}\ell^2(d_1 +3)^3 \\
\leq & \bE g(x_s) -\bE g(x_{s+1}) + \frac{\ell^2}{L_g }\bE\|y^*(x_s)-y_{s+1}\|_2^2 + \eta_1^2 L_g \Bigl(3\|\nabla_x f(x_s,y_{s+1})\|_2^2 + \varrho_1(\epsilon,\mu_1)\Bigr)+ \frac{\mu_1^2}{4L_g}\ell^2(d_1 +3)^3,
\ea\ee
where the second inequality is due to Lemma \ref{lemma:mini batch upper bound}. From Lemma \ref{lemma:diff-f-fmu} we have
\be\label{proof-thm-ZO-SGDMSA-inequality-2}\ba{ll}
\bE\|\nabla_x f(x_s,y_{s+1})\|_2^2 \leq 2\bE\|\nabla_x f_{\mu_1}(x_s,y_{s+1})\|_2^2 + \mu_1^2\ell^2(d_1+3)^3/2.
\ea\ee
Combining \eqref{proof-thm-ZO-SGDMSA-inequality-1} and \eqref{proof-thm-ZO-SGDMSA-inequality-2}, and noting $\eta_1=1/(12L_g)$, we have
\be\label{proof-thm-ZO-SGDMSA-inequality-3}\ba{ll}
\bE\|\nabla_x f(x_s,y_{s+1})\|_2^2\leq & 48L_g \Bigl[\bE g(x_s) -\bE g(x_{s+1})\Bigr] + 48 \ell^2\bE\|y^*(x_s)-y_{s+1}\|_2^2\\
&+ 13\mu_1^2 \ell^2 (d_1+3)^3 + \varrho_1(\epsilon,\mu_1)/12.
\ea\ee
It then follows that
\be\label{proof-thm-ZO-SGDMSA-inequality-4}\ba{ll}
& \bE\|\nabla g(x_s)\|_2^2 \\
\leq & 2 \bE\|\nabla_x g(x_s) - \nabla_x f(x_s,y_{s+1})\|_2^2 + 2\bE\|\nabla_x f(x_s,y_{s+1})\|_2^2 \\
\leq & 2\ell^2 \bE\|y^*(x_s) - y_{s+1}\|_2^2 + 2\bE\|\nabla_x f(x_s,y_{s+1})\|_2^2 \\
\leq & 96L_g \Bigl[\bE g(x_s) -\bE g(x_{s+1})\Bigr] + 98\ell^2\bE\|y^*(x_s)-y_{s+1}\|_2^2\\
&+ 26\mu_1^2 \ell^2 (d_1+3)^3+\varrho_1(\epsilon,\mu_1)/6,
\ea\ee
where the second inequality is due to Assumption \ref{assumption:A1-A2-A4}, and the last inequality is due to \eqref{proof-thm-ZO-SGDMSA-inequality-3}.

Take the sum over $s=0,\ldots,S$ to both sides of \eqref{proof-thm-ZO-SGDMSA-inequality-4}, we get
\be\label{proof-thm-ZO-SGDMSA-inequality-5}\ba{ll}
\frac{1}{S+1}\sum_{s=0}^S \bE \|\nabla g(x_s)\|_2^2 \leq & \frac{96L_g}{S+1}\bE [g(x_0) - g(x_{S+1})] + \frac{98\ell^2}{S+1}\sum_{s=0}^S\bE\|y^*(x_s)-y_{s+1}\|_2^2\\
&+ 26\mu_1^2 \ell^2(d_1+3)^3+\varrho_1(\epsilon,\mu_1)/6.
\ea\ee
Denote $\Delta_g = g(x_0) - \min_{x\in\mathbb{R}^{d_1}}(g(x))$. From Lemma \ref{GDmax_linear_convergence}, we know that when $T = \cO (\kappa \log(\epsilon^{-1}))$, we have $\bE\|y^*(x_s)-y_{s+1}\|^2\leq\epsilon^2$ (note that $y_{s+1}=y_T(x_s)$). Therefore, choosing
parameters as in \eqref{thm3.2-S-mu1-mu2} guarantees that the right hand side of \eqref{proof-thm-ZO-SGDMSA-inequality-5} is upper bounded by $O(\epsilon^2) + O(\epsilon^4)$. Hence, with $\epsilon \in (0,1)$, we get the required result and thus an $\epsilon$-stationary point is found.
This completes the proof of Part 1.

We next prove Part 2. From Lemma \ref{linear_rate_under_sgc} we have 
\[
\bE \|y^*(x_{s-1})-y_s\|^2  \leq \bE \|y^*(x_{s-1})-y_{s-1}\|^2\Bigl(1-\frac{\tau}{12\ell}\Bigr) + \bar{\varrho}(\mu_2,\rho_2).
\]
Choosing $\delta = \frac{12\ell}{\tau}\bar{\varrho}(\mu_2,\rho_2)$, we have:
\[
\ba{ll}
\bE \|y^*(x_s)-y_t(x_s)\|^2-\delta&\leq (\bE \|y^*(x_s)-y_{t-1}(x_s)\|^2-\delta)\Bigl(1-\frac{\tau}{12\ell}\Bigr) \\
& \leq(\bE \|y^*(x_s)-y_0(x_s)\|^2 - \delta)\Bigl(1-\frac{\tau}{12\ell}\Bigr)^{t}\\
& \leq \bE \|y^*(x_s)-y_0(x_s)\|^2\Bigl(1-\frac{\tau}{12\ell}\Bigr)^{t} \leq D^2 \Bigl(1 - \frac{\tau}{12\ell}\Bigr)^t.
\ea
\]
In order to ensure that 
$\bE \|y^*(x_s)-y_T(x_s)\|^2 \leq \epsilon^2$, we need $T = \cO(\kappa  \log(\epsilon^{-1}))$ and $\mu_2=\cO(\min(1,\rho_2)\kappa^{-1/2}d_2^{-3/2}\epsilon)$. From \eqref{proof-thm-ZO-SGDMSA-inequality-4} and \eqref{ZO_GD_SGC_bound} we have
\be\label{SGC-temp-2}
\ba{ll}
\bE\|\nabla g(x_s)\|_2^2 &\leq 96L_g \Bigl[\bE g(x_s) -\bE g(x_{s+1})\Bigr] + 98\ell^2\bE\|y^*(x_s)-y_{s+1}\|_2^2\\
&+ 26\mu_1^2 \ell^2 (d_1+3)^3+\varrho_1(\mu_1,\rho_1)/6.
\ea
\ee
Taking the sum over $s = 0,...,S$ to both sides of \eqref{SGC-temp-2}, we get:
\be\ba{ll}\label{proof_ZO_SGDMSA_upper}
\frac{1}{S+1}\sum_{s=0}^S \bE \|\nabla g(x_s)\|_2^2 \leq & \frac{96L_g}{S+1}\bE [g(x_0) - g(x_{S+1})] + \frac{98\ell^2}{S+1}\sum_{s=0}^S\bE\|y^*(x_s)-y_{s+1}\|_2^2\\
&+ 26\mu_1^2 \ell^2(d_1+3)^3+\varrho_1(\mu_1,\rho_1)/6.
\ea\ee
Recall that $\varrho_1(\mu_1,\rho_1) = \frac{\mu_1^2 }{\rho_1}\ell^2 (d_1+6)+ \mu_1^2 \ell^2 (d_1+3)^3/2$, choosing $\mu_1 = \cO\Bigl(\min(1,\rho_1)\ell (d_1)^{-3/2}\Bigr)
$, we guarantee that the right hand side of \eqref{proof_ZO_SGDMSA_upper} is upper bounded by $O(\epsilon^2) + O(\epsilon^4)$. Hence, with $\epsilon \in (0,1)$, we get the required result and thus an $\epsilon$-stationary point is found. This completes the proof of Part 2.

\end{proof}

\end{document}